\newcommand*\widefbox[1]{\fbox{\hspace{2em}#1\hspace{2em}}}
\newcommand{\indep}{\mathrel{\text{\scalebox{1.07}{$\perp\mkern-10mu\perp$}}}}
\newcommand\rsetminus{\mathbin{\mathpalette\rsetminusaux\relax}}
\newcommand\rsetminusaux[2]{\mspace{-4mu}
  \raisebox{\rsmraise{#1}\depth}{\rotatebox[origin=c]{-20}{$#1\smallsetminus$}}
 \mspace{-4mu}
}
\newcommand\rsmraise[1]{%
  \ifx#1\displaystyle .8\else
    \ifx#1\textstyle .8\else
      \ifx#1\scriptstyle .6\else
        .45%
      \fi
    \fi
  \fi}
\def\RR{{\mathbb R}}    
\def\PP{{\mathbb P}}     
\def\EE{{\mathbb E}}    
\def\11{{\mathbf 1}}    
  \def\cG{{\mathcal G}}     \def\cH{{\mathcal H}} \def\cN{{\mathcal N}}     \def\cO{{\mathcal O}}  \def\cD{{\mathcal D}}    \def\cV{{\mathcal V}}      \def\cF{{\mathcal F}}    \def\cX{{\mathcal X}} \def\cY{{\mathcal Y}}  \def\cZ{{\mathcal Z}}
                   \def\mfF{{\mathfrak F}}
         \def\bk{{\mathbf k}}             \def\bx{{\mathbf x}} \def\by{{\mathbf y}} \def\bz{{\mathbf z}} 
\def\bA{{\mathbf A}} \def\bB{{\mathbf B}}       \def\bI{{\mathbf I}} \def\bJ{{\mathbf J}} \def\bK{{\mathbf K}} \def\bL{{\mathbf L}}      \def\bR{{\mathbf R}}
\def\Tr{\operatorname{Tr}}
\def\Range{\operatorname{Range}}
\def\Ker{\operatorname{Ker}}
\def\Id{\operatorname{Id}}
\def\bk{\boldsymbol{k}}
\def\bell{\boldsymbol{\ell}}
\def\GP{\operatorname{GP}}
\DeclareSymbolFont{boldoperators}{OT1}{cmr}{bx}{n}
\newcommand\smallmath[2]{#1{\raisebox{\dimexpr \fontdimen 22 \textfont 2
      - \fontdimen 22 \scriptscriptfont 2 \relax}{\scalebox{.8}{$\scriptscriptstyle #2$}}}}
\newcommand\smallplus{\smallmath\mathbin +}
\def\brplus{\boldsymbol{r^{\smallplus}}}
\def\brplusx1{\brplus_{\!\!\!\!\bx_1}}
\def\bRplus{\bR^{\boldsymbol{\smallplus}}}
\theoremstyle{plain}
\newtheorem{theorem}{Theorem}[section]
\newtheorem{proposition}[theorem]{Proposition}
\newtheorem{lemma}[theorem]{Lemma}
\theoremstyle{definition}
\theoremstyle{remark}
\theoremstyle{plain}
    \providecommand{\customgenericname}{}
    \newcommand{\newcustomtheorem}[2]{%
      \newenvironment{#1}[1]
      {%
       \renewcommand\customgenericname{#2}%
       \renewcommand\theinnercustomgeneric{##1}%
       \innercustomgeneric
      }
      {\endinnercustomgeneric}
    }
\icmltitlerunning{Returning The Favour: When Regression Benefits From Probabilistic Causal Knowledge}
\begin{document}

\twocolumn[
\icmltitle{Returning The Favour: \\When Regression Benefits From Probabilistic Causal Knowledge}



\icmlsetsymbol{equal}{*}

\begin{icmlauthorlist}
\icmlauthor{Shahine Bouabid}{equal,stats}
\icmlauthor{Jake Fawkes}{equal,stats}
\icmlauthor{Dino Sejdinovic}{adelaide}
\end{icmlauthorlist}

\icmlaffiliation{stats}{Department of Statistics, University of Oxford, UK}
\icmlaffiliation{adelaide}{School of CMS \& AIML, University of Adelaide, Australia}

\icmlcorrespondingauthor{Shahine Bouabid}{shahine.bouabid@stats.ox.ac.uk}
\icmlcorrespondingauthor{Jake Fawkes}{jake.fawkes@stats.ox.ac.uk}

\icmlkeywords{Causality, Collider, Regression, Kernel Methods}

\vskip 0.3in
]



\printAffiliationsAndNotice{\icmlEqualContribution} 

\begin{abstract}
A directed acyclic graph (DAG) provides valuable prior knowledge that is often discarded in regression tasks in machine learning. We show that the independences arising from the presence of collider structures in DAGs provide meaningful inductive biases, which constrain the regression hypothesis space and improve predictive performance. We introduce \emph{collider regression}, a framework to incorporate probabilistic causal knowledge from a collider in a regression problem. When the hypothesis space is a reproducing kernel Hilbert space, we prove a strictly positive generalisation benefit under mild assumptions and provide closed-form estimators of the empirical risk minimiser. Experiments on synthetic and climate model data demonstrate performance gains of the proposed methodology.
\end{abstract}

\vspace*{-2em}
\section{Introduction}\label{section:intro}
\vspace*{-0.2em}

Causality has recently become a main pillar of research in the machine learning community. Historically, machine learning has been used to help solve problems in the field of causal inference~\citep{shalit2017estimating,zhang2012kernel}. But recently a different focus has emerged, asking what causality can do to return the favour to machine learning~\citep{scholkopf2021toward}. In this work we continue in this vein, and aim to answer whether the knowledge of a causal directed acyclic graph (DAG) underpinning the data generating process can assist and improve performance in regression tasks.

When a causal DAG is available, it constitutes a source of prior knowledge that is typically discarded when addressing a regression problem. It can however guide the setup of the regression problem. Classically, the structure of a DAG informs on which predictors should be selected to regress a given response variable $Y$. This process, known as feature selection, is solved by selecting the predictors that are either adjacent to $Y$, or that influence children of $Y$. The resulting set of predictors is called the Markov boundary of $Y$~\cite{pearl1987evidential}.

\begin{figure}[t!]
    \centering
    \includegraphics[width=0.65\linewidth]{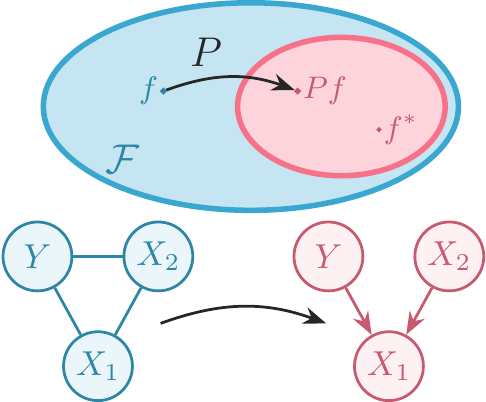}
    \vspace*{-1em}
    \caption{When performing regression in a hypothesis space $\cF$ (blue), we implicitly assume that the data generating process could follow any DAG structure. The optimal regressor $f^*$ lies in the subspace of function that satisfy the independence structure arising from the collider (pink), onto which the projection $P$ maps.}
    \label{fig:cover-figure}
    \vspace*{-1.4em}
\end{figure}

As we will see, the presence of a particular structure in a Markov boundary is typically overlooked in regression problems: colliders of the form $Y \rightarrow X_1 \leftarrow X_2$. 
In this work, we investigate how the conditional independence constraints arising due to colliders in the Markov boundary can be used to construct useful inductive biases in a regression problem and to guide the choice of the hypothesis space. We will see that the colliders are also unique in that regard: beyond colliders, the Markov boundary cannot contain any graphical structure implying a conditional independence with $Y$.

To understand the intuition behind colliders, consider this classic example: imagine we have a randomly timed sprinkler ($X_2$) and we want to infer whether it has rained ($Y$), having observed whether the sidewalk is wet ($X_1$). Although the sprinkler and the rain are marginally independent, knowing whether the sprinkler has been active is important for determining whether it has rained. Colliders arise naturally in many application domains. For example, in climate science, the objective may be to regress an environmental driver $Y$ that, independently from human activity $X_2$, influences observed global temperatures $X_1$. 


As illustrated in Figure~\ref{fig:cover-figure}, when performing least-square regression over a hypothesis space $\cF$, only a subset of $\cF$ will comply with the independences arising from the collider. By considering the projection operator $P$ that maps onto this subspace, we propose a framework called \emph{collider regression} to incorporate inductive biases arising from colliders into any regressor. We show that when the data generating process follows a collider, projecting any given regressor onto this subspace provides a positive generalisation benefit. 

We then consider the specific case where the hypothesis space is a reproducing kernel Hilbert space (RKHS). Because RKHSs are rich functional spaces that also enjoy closed analytical solutions to the least-squares regression problem, they allow us to build intuition for the general case. We prove a strictly  positive generalisation benefit from projecting the least-squares empirical risk minimiser in a RKHS, where the size of the generalisation gap increases with the complexity of the problem. We also show that for a RKHS, it is possible to solve the least-squares regression problem directly inside the projected hypothesis subspace and provide closed-form estimators. 

We experimentally validate the effectiveness of our methodology on a synthetic dataset and on a real world climate science dataset. Results demonstrate that collider regression consistently provides an improvement in generalisation at test time in comparison with standard least-squares regressors. Results also suggest that collider regression is particularly beneficial when few training samples are available, but samples from the covariates can easily be obtained, i.e. in a semi-supervised learning setting.

\vspace*{-0.8em}

\section{Background}\label{section:background}
\vspace*{-0.3em}
\paragraph{Regression notation}
Let $Y$ be our target variable over $\cY\subseteq\mathbb R$ and $X$ be our covariates over $\cX$. Our goal is a standard regression task where we have access to a dataset $\cD = \{\bx,  \by\}\in (\cX\times\cY)^n$ of $n$ samples $(x^{(i)}, y^{(i)})$ from $(X, Y)$. We aim to minimise the regularised empirical risk
\vspace*{-.1em}
\begin{equation}\label{eq:ERM}
    \hat f  = \underset{f\in\cF}{\arg\min}\, \frac{1}{n}\sum_{i=1}^n \left(y^{(i)} - f(x^{(i)})\right)^2 + \lambda \Omega(f)
\end{equation}
\vspace*{-.1em}
where $\cF$ is a specified hypothesis space of functions $f\!:~\!\!\cX\!\!\to~\!\!\!\cY$, $\lambda > 0$ and $\Omega(f) > 0$ is a regularisation term. This corresponds to finding a function $\hat f$ that best estimates the optimal regression function for the squared loss:
\begin{equation}
    f^*(x) = \EE[Y|X=x].
\end{equation}
For any two functions $h, h' \in \cF$, the squared-error generalisation gap between $h$ and $h'$ is defined as the difference in their true risk: 
\vspace*{-.1em}
\begin{equation}\label{eq:Gen gap}
    \Delta(h,h') = \EE[ \left( Y-h(X)\right)^2] - \EE[ \left( Y-h'(X)\right)^2].
\end{equation}
Therefore if $\Delta(h, h') \geq 0$, it means that $h'$ generalises better from the training data than $h$.
\vspace*{-1em}
\paragraph{Reproducing kernel Hilbert spaces}
Let $\cX$ be some non-empty space. A real-valued RKHS $(\cH, \langle \cdot, \cdot\rangle_\cH)$ is a complete inner product space of functions $f : \cX\to\RR$ that admits a bounded evaluation functional. For $x\in\cX$, the Riesz representer of the evaluation functional is denoted $k_x\in\cH$ and satisfies the \emph{reproducing property} $f(x) = \langle f, k_x\rangle_\cH$, $\forall f\in\cH$. The bivariate symmetric positive definite function defined by $k(x,x') = \langle k_x, k_{x'}\rangle_\cH$ is referred to as the \emph{reproducing kernel} of $\cH$. Conversely, the Moore-Aronszajn theorem~\cite{aronszajn1950theory} shows that any symmetric positive definite function $k$ is the unique reproducing kernel of an RKHS. For more details on RKHS theory, we refer the reader to \citet{berlinet2011reproducing}.
\vspace*{-1em}
\paragraph{Conditional Mean Embeddings}
Conditional mean embeddings (CMEs) provide a powerful framework to represent conditional distributions in a RKHS~\cite{fukumizu2004dimensionality, song2013kernel, muandet2016kernel}. Given random variables $X, Z$ on $\cX, \cZ$ and an RKHS $\cH\subseteq\RR^\cX$ with reproducing kernel $k : \cX\times\cX\to \RR$, the CME of $\PP(X|Z=z)$ is defined as 
\begin{equation}\label{eq:cme-definiton}
    \mu_{X|Z=z} = \EE[k_X|Z=z] \in \cH.
\end{equation}
It corresponds to the Riesz representer of the conditional expectation functional $f\mapsto \EE[f(X)|Z=z]$ and can thus be used to evaluate conditional expectations by taking an inner product $\EE[f(X)|Z=z] = \langle f, \mu_{X|Z=z}\rangle_\cH$.

Introducing a second RKHS $\cG\subseteq\RR^\cZ$ with reproducing kernel $\ell :\cZ\times\cZ\to\RR$, \citet{grunewalder2012conditional} propose an alternative view of CMEs as the solution to the least-square regression of canonical feature maps $\ell_Z$ onto $k_X$
\begin{equation}
    \left\{
    \begin{aligned}
        \begin{split}
            & E^* = \underset{C\in \mathsf{B}_2(\cG, \cH)}{\arg\min}\,\EE[\|k_X - C\ell_Z\|^2_\cH] \\
            & \mu_{X|Z=z} = E^*\ell_z
        \end{split}
    \end{aligned}
    \right.
\end{equation}
where $\mathsf{B}_2(\cG, \cH)$ denotes the space of Hilbert-Schmidt operators\footnote{i.e.\ bounded operators $A : \cG\to\cH$ such that $\Tr(A^*A) < \infty$. $\mathsf{B}_2(\cG, \cH)$ has a Hilbert space structure for the inner product $\langle A, B\rangle_{\mathsf{B}_2} = \Tr(A^*B)$.} from $\cG$ to $\cH$. Given a dataset $\cD = \{\bx, \bz\}$, this perspective allows to compute an estimate of the associated operator $E^*:\cG\to\cH$ as the solution to the regularised empirical least-squares problem as
\begin{equation}
    \!\!\left\{
    \begin{aligned}
        \begin{split}\label{eq:cmo-estimate}  
            & \hat E^* \!\! = \!\! \underset{C\in \mathsf{B}_2(\cG, \cH)}{\arg\min}\,\frac{1}{n}\sum_{i=1}^n\|k_{x^{(i)}}\! -\! C\ell_{z^{(i)}}\|_\cH^2 + \gamma \|C\|^2_{\mathsf{B}_2}\\
            & \quad = \bk_\bx^\top (\bL + \gamma\bI_n)^{-1}\bell_\bz \\
            & \hat \mu_{X|Z=z}  = \hat E^*\ell_z = \bk_\bx^\top (\bL + \gamma\bI_n)^{-1}\bell_\bz(z)
        \end{split}
    \end{aligned}
    \right.
\end{equation}
where $\gamma > 0$, $\bL = \ell(\bz, \bz)$, $\bk_\bx = k(\bx, \cdot)$ and $\bell_\bz = \ell(\bz, \cdot)$. We refer the reader to \cite{muandet2017kernel} for a comprehensive review of CMEs.

\section{DAG inductive biases for regression}\label{section:dag-inductive-regression}

In this section, we aim to answer how knowledge of the causal graph of the underlying data generating process can help to perform regression. We start by reviewing the concept of Markov boundaries and how it is used for feature selection. We then show that even after feature selection has been performed, there is still residual information from colliders that is relevant for a regression problem.

\subsection{Markov boundary for feature selection}

Since we are focusing on regression, we are interested in how the DAG can inform us about $\PP(Y|X)$. Suppose that for some vertex $X_i$, the DAG informs us that $Y\indep~X_i\mid~X\rsetminus X_i$. Stated in terms of mutual information we have that\footnote{This follows from $I(Y; \!X) = I (Y ;\! X \rsetminus X_i) + I (Y ;\! X_i |  X \rsetminus X_i)$ and the conditional independence gives $I (Y ; X_i |  X \rsetminus X_i) = 0$.}  $I(Y;X) = I (Y ; X \rsetminus X_i)$, therefore we can discard $X_i$ from our set of covariates without any loss of probabilistic information for $\PP(Y|X)$. 

From a functional perspective, we can interpret this as incorporating the inductive bias that the regressor need only depend on $X \rsetminus X_i$, allowing us to learn simpler functions which should generalise better from the training set.

By repeating the process of removing features, we can iteratively construct a minimal set of necessary covariates that still retain all the probabilistic information about $\PP(Y|X)$. This is known as feature selection~\cite{dash1997feature}.

Such a set, $S$, should satisfy $Y \indep X \rsetminus S | S$ and we should not be able to remove a vertex from $S$ without losing information about $\PP(Y|X)$. A set of this form is known as the Markov boundary of $Y$~\cite{statnikov2013algorithms}, denoted by $\operatorname{Mb}(Y)$. If the only independences in the distribution are those implied by the DAG structure\footnote{An assumption known as faithfulness~\cite{meek1995strong} which we take throughout.} then the Markov boundary is uniquely given by
\begin{align}\label{eq:Markov blanket}
 \operatorname{Mb}(Y) = \operatorname{Pa}(Y) \cup \operatorname{Ch}(Y) \cup \operatorname{Sp}(Y),   
\end{align}
where $\operatorname{Pa}(Y)$ are the parents of $Y$, $\operatorname{Ch}(Y)$ are the children of $Y$ and $\operatorname{Sp}(Y)$ are the spouses of $Y$, i.e.\ the children's other parents. In Figure~\ref{fig:Markov_boundary} the Markov boundary of $Y$ is highlighted in blue. 
\begin{figure}[t]
    \centering
\begin{tikzpicture}[>=stealth', shorten >=1pt, auto,
    node distance=1.5cm, scale=1.2, 
    transform shape, align=center, 
    state/.style={circle, draw, minimum size=7mm, inner sep=0.5mm}]
\node[state] (v0) at (0,0) {$Y$};
\node[state, above left of=v0,fill=blue!20] (v1) {$X_1$};
\node[state, above right of=v0,fill=blue!20] (v2) {$X_3$};
\node[state, below of=v0,fill=blue!20] (v3) {$X_6$};
\node[state, below right of=v0,fill=blue!20] (v4) {$X_5$};
\node[state, above of=v0,fill=red!20] (v5) {$X_2$};
\node[state, below left of=v0,fill=blue!20,yshift=0.6cm] (v6) {$X_4$};
\node[state, below of=v6,,fill=red!20,yshift=0.4cm] (v7) {$X_7$};
\draw [->, thick] (v2) edge (v0);
\draw [->, thick] (v1) edge (v0);
\draw [->, thick] (v5) edge (v1);
\draw [->, thick] (v5) edge (v2);
\draw [->, thick] (v0) edge (v3);
\draw [->, thick] (v6) edge (v3);
\draw [->, thick] (v4) edge (v3);
\draw [->, thick] (v2) edge (v4);
\draw [->, thick] (v3) edge (v7);
\end{tikzpicture} 
\caption{A causal graph with the Markov boundary of $Y$ highlighted in blue and vertices outside the Markov boundary highlighted in red. Whilst $Y$ and $X_4$ are marginally independent, the presence of the collider $X_6$ opens the path between $Y$ and $X_4$.}
\label{fig:Markov_boundary}
\end{figure}
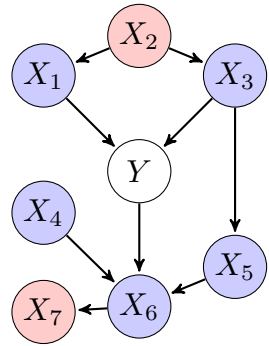

\subsection{Extracting inductive bias for regression}

By construction the Markov boundary of $Y$ cannot contain independence relationships of the form $Y\indep~X_i|X\rsetminus X_i$. However, it can still contain unused independence statements that involve $Y$, and therefore provides useful information about the conditional distribution $\PP(Y|X)$.


For example, the graphical structure in Figure~\ref{fig:Markov_boundary} gives that $Y \indep X_4$ and $Y \indep X_5 \mid X_3$. This implies that $\PP(Y |X_4) = \PP(Y)$ and $\PP(Y |X_3, X_5) = \PP(Y |X_3)$ which by marginalisation constrains $\PP(Y |X)$ and so gives us extra information about it. The presence of these independence relationships inside $\operatorname{Mb}(Y)$ is only possible because a collider, $X_6$, has allowed for the spouses $X_4$ and $X_5$ to be within the Markov boundary without being adjacent to $Y$.

Hence, the presence of collider structures within the Markov boundary of $Y$ provides additional independence relationships involving $Y$. The following proposition shows that the presence of a collider is not only a sufficient condition, but also necessary.

\begin{proposition}\label{prop:only_colliders}
The Markov boundary of $Y$ contains a collider if and only if there exists $Z \in\operatorname{Mb}(Y)$ and $S_Z\subset~\operatorname{Mb}(Y)$ such that  $Y \indep Z \mid S_Z$.
\end{proposition}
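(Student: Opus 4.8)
I would prove the two implications of this equivalence separately, and in each direction move freely between the probabilistic statement $Y\indep Z\mid S_Z$ and the graphical statement that $S_Z$ d-separates $Y$ from $Z$: soundness of d-separation (the global Markov property) supplies one translation unconditionally, and the faithfulness assumption taken throughout the paper supplies the other. After that, everything is graph theory.

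\emph{Conditional independence $\Rightarrow$ collider.} Suppose $Y\indep Z\mid S_Z$ with $Z\in\operatorname{Mb}(Y)$ and $S_Z\subseteq\operatorname{Mb}(Y)$. By faithfulness, $S_Z$ d-separates $Y$ and $Z$; since an edge can never be blocked, $Y$ and $Z$ must be non-adjacent, so $Z\notin\operatorname{Pa}(Y)\cup\operatorname{Ch}(Y)$. With the decomposition $\operatorname{Mb}(Y)=\operatorname{Pa}(Y)\cup\operatorname{Ch}(Y)\cup\operatorname{Sp}(Y)$ this forces $Z\in\operatorname{Sp}(Y)$, i.e.\ $Y$ and $Z$ share a common child $C\in\operatorname{Ch}(Y)$, and $Y\to C\leftarrow Z$ is a collider with all three vertices in $\{Y\}\cup\operatorname{Mb}(Y)$. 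This direction is essentially immediate.

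\emph{Collider $\Rightarrow$ conditional independence.} Suppose $\operatorname{Mb}(Y)$ contains a collider $Y\to X_1\leftarrow X_2$ with $X_1\in\operatorname{Ch}(Y)$ and $X_2\in\operatorname{Sp}(Y)$; here $X_2$ is non-adjacent to $Y$, since a collider with adjacent parents is shielded and encodes no independence, and (cf.\ the discussion around Figure~\ref{fig:Markov_boundary}) it is precisely the unshielded case that matters. It is enough to produce $S_{X_2}\subseteq\operatorname{Mb}(Y)$ that d-separates $Y$ from $X_2$, since soundness then gives $Y\indep X_2\mid S_{X_2}$. I would split on whether $X_2\in\operatorname{De}(Y)$. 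If not, the local Markov property already gives that $\operatorname{Pa}(Y)$ d-separates $Y$ from $X_2$, and $\operatorname{Pa}(Y)\subseteq\operatorname{Mb}(Y)$, so $S_{X_2}=\operatorname{Pa}(Y)$ works. If $X_2\in\operatorname{De}(Y)$, I would take $\operatorname{Pa}(Y)$ together with the children of $Y$ lying on a directed path to $X_2$ (all in $\operatorname{Ch}(Y)\subseteq\operatorname{Mb}(Y)$), and then iteratively adjoin any vertex needed to block a path that conditioning has opened; each such vertex is a parent of a child of $Y$, hence a spouse, hence again in $\operatorname{Mb}(Y)$. One must verify that the process never conditions on a common child of $Y$ and $X_2$ (which would re-open $X_1$, or another collider): this is where acyclicity enters, because a common child of $Y$ and $X_2$ cannot be an ancestor of $X_2$ and so is never forced into $S_{X_2}$. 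Finiteness of the DAG gives termination.

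\emph{Where the difficulty lies.} The first implication is routine. The work is in the $X_2\in\operatorname{De}(Y)$ sub-case of the second: exhibiting $S_{X_2}$ inside $\operatorname{Mb}(Y)$ and checking d-separation when directed $Y\rightsquigarrow X_2$ paths must be blocked without ever re-opening a collider through a common child of $Y$ and $X_2$ --- a ``whack-a-mole'' between blocking chains and not activating colliders. Making that argument precise, or instead invoking a known description of d-separating sets contained in a Markov blanket, and confirming that every vertex the argument forces in is a parent, child, or spouse of $Y$, is the crux; the degenerate overlaps among $\operatorname{Pa}(Y)$, $\operatorname{Ch}(Y)$ and $\operatorname{Sp}(Y)$ are best dispatched first, since the claim is really about unshielded colliders.
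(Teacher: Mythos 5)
Your first implication and the $Z \notin \operatorname{De}(Y)$ sub-case of the second are correct, and you have put your finger on exactly the right difficulty: the proposition demands $S_Z \subseteq \operatorname{Mb}(Y)$, and the classical witness for non-adjacency (one of $\operatorname{Pa}(Y)$ or $\operatorname{Pa}(Z)$, depending on which vertex is a non-descendant of the other) lands outside $\operatorname{Mb}(Y)$ precisely when $Z \in \operatorname{De}(Y)$. But your treatment of that sub-case is a sketch, not a proof. The iterative ``adjoin a blocker'' argument justifies membership in $\operatorname{Mb}(Y)$ only for the first generation of blockers: when you condition on a child $D$ of $Y$ to cut a directed path $Y \to D \to \cdots \to Z$, the newly opened paths enter $D$ through a parent of $D$, which is indeed a spouse of $Y$; but once you condition on that spouse $F$, the paths opened at the collider $F$ are entered through parents of $F$, and a parent of a spouse is not in general a parent, child, or spouse of $Y$. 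So the invariant ``every forced vertex lies in $\operatorname{Mb}(Y)$'' is asserted rather than established, and termination in a genuine separator (rather than an endless whack-a-mole) is not shown. You acknowledge this, but it is the whole content of the hard direction.

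The gap can be closed with an explicit separator: take $S_Z = \operatorname{Mb}(Y) \cap \operatorname{An}(\{Y,Z\}) \setminus \{Y,Z\}$ and verify d-separation via the moral graph of the induced subgraph on $\operatorname{An}(\{Y,Z\} \cup S_Z) = \operatorname{An}(\{Y,Z\})$. Every neighbour of $Y$ in that moral graph is a parent of $Y$, a child of $Y$ that is an ancestor of $Z$, or a spouse of $Y$ whose shared child lies in $\operatorname{An}(\{Y,Z\})$; all of these lie in $S_Z$, and none can equal $Z$, because a common child of $Y$ and $Z$ (or any descendant thereof) cannot be an ancestor of $Y$ or of $Z$ without creating a cycle --- which is the precise form of your ``acyclicity prevents re-opening the collider'' remark. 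Hence every moral-graph path from $Y$ to $Z$ is cut at its first step. For comparison, the paper does none of this: it disposes of the proposition in one line by citing Lemmas 3.1--3.2 of Koller and Friedman (adjacency iff no separating set, plus non-adjacency within $\operatorname{Mb}(Y)$ iff an unshielded collider), thereby quietly skipping the very containment $S_Z \subseteq \operatorname{Mb}(Y)$ that you correctly isolate as the crux. Your diagnosis is sharper than the paper's proof, and your remark that the claim is really about unshielded colliders is a genuine (if minor) correction to the statement; but your argument for the decisive sub-case is not yet a proof.
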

\begin{proof}
We have a conditional independence between two variables if and only if they are not adjacent~(Lemma 3.1, 3.2 \citet{koller2009probabilistic}) and $\operatorname{Mb}(Y)$ contains a variable not adjacent to $Y$ if and only if it contains a collider.
\end{proof}


The collider structures are thus the only graphical structures that provide conditional independence statement relevant to $\PP(Y|X)$ within the Markov boundary. To the best of our knowledge, this information is currently left unused when addressing a regression problem.

However, unlike for the feature selection process, we cannot simply use these independence statements to discard covariates and reduce the set of features. This is because while the spouses of $Y$ are uninformative on their own, they become informative in the presence of other covariates. Namely in Figure~\ref{fig:Markov_boundary}, while $Y \indep X_4$ we have $Y \not \indep X_4 | X_6$ because $X_6$ is a collider. Therefore, we have that $I(Y;X) > I (Y ; X \rsetminus X_4)$ and discarding $X_4$ would constitute a loss of information.


\section{Collider Regression}\label{section:collider-regression}

In this section, we present a method for incorporating probabilistic inductive bias from a collider structure into a regression problem, and provide guarantees of improved generalisation error. For the sake of clarity, our exposition focuses on the simple collider structure depicted in Figure~\ref{fig:simple-collider}. We however emphasise this simplification does not harm the generality of our contribution and Section~\ref{section:more-general-dag} shows how collider regression can be extended to more general DAGs.

\begin{figure}[H]
     \centering
         \begin{tikzpicture}[>=stealth', shorten >=1pt,
                             node distance=1.5cm, scale=1.05, 
                             transform shape, align=center, 
                             state/.style={circle, draw, minimum size=7mm, inner sep=0.5mm}]
            \node[state] (v2) at (0,0) {$X_1$};
            \node[state, above right = -0.5 and 0.8 of v2] (v0) {$X_2$};
            \node[state, above left = -0.5 and 0.8 of v2] (v1) {$Y$};
            \draw [->, thick] (v0) edge (v2);
            \draw [->, thick] (v1) edge (v2);
        \end{tikzpicture}
    \caption{Simple collider structure}
    \label{fig:simple-collider}
\end{figure}
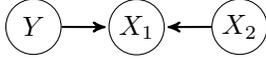

\subsection{Simple collider regression setup}\label{subsection:setup}
Let $X_1, X_2, Y$ be random variables following the DAG structure in Figure \ref{fig:simple-collider} and taking values in $\cX_1\subseteq\RR^{d_1}$, $\cX_2\subseteq\RR^{d_2}$ and $\cY\subseteq\RR$ respectively. Without loss of generality, we assume that $\EE[Y] = 0$.

Under the squared loss, the optimal regressor is given by
\begin{equation}
    f^*(x_1, x_2) = \EE[Y|X_1 =x_1, X_2=x_2].
\end{equation}
Since the collider gives the independence relationship $Y~\indep~X_2$, we have that
\begin{align}
\begin{split}
    \EE[f^*(X_1, X_2)|X_2] & = \EE\big[\EE[Y|X_1, X_2]\mid X_2\big] \\
                            & = \EE[Y | X_2 ] \\
                            & = \EE[Y] \\
                            & = 0,
\end{split}
\end{align}
where the second line comes from the tower property of the conditional expectation.

Hence, the optimal regressor $f^*$ lies in the subspace of functions that have zero $X_2$-conditional expectation. To incorporate the knowledge from the DAG into our regression procedure, we should therefore ensure that our estimate $\hat f$ lies within the same subspace of functions, i.e.\ we want to satisfy the zero conditional expectation constraint
\begin{equation}\label{eq:constraint-on-f}
\tag{ZCE}
    \hat f \in\big\{f\in\cF\mid \EE[f(X_1, X_2)|X_2] = 0\big\}.
\end{equation} 
We propose to investigate how such a constraint can be enforced onto our hypothesis and how it benefits generalisation, starting by the general case of square-integrable functions. In what follows, we will use shorthand concatenated notations $X = (X_1, X_2)$, $\cX = \cX_1\times\cX_2$, $x = (x_1, x_2)\in\cX$ and $\bx = (\bx_1, \bx_2)\in\cX^n$.

\subsection{Respecting the collider structure in the hypothesis}\label{subsection:L2-case}
Let $L^2(X)$ denote the space of square-integrable functions with respect to the probability measure induced by $X$ and suppose $\cF = L^2(X)$. Let $E : L^2(X) \to L^2(X)$ denote the conditional expectation operator defined by
\begin{equation}
    Ef(x_1, x_2) = \EE[f(X_1, X_2)|X_2 = \pi_2(x_1, x_2)],
\end{equation}
where $\pi_2(x_1, x_2) = x_2$ is simply the mapping that discards the first component\footnotemark.

The operator $E$ classically defines an orthogonal projection over the subspace of $X_2$-measurable functions. $L^2(X)$ thus orthogonally decomposes into its image, denoted $\Range(E)$, and its null-space, denoted $\Ker(E)$, as
\begin{equation}\label{eq:L2-decomposition}
    L^2(X) = \Ker(E) \oplus \Range(E).
\end{equation}

\footnotetext{This notation emphasises that $Ef$ is formally a function of $(x_1, x_2)$ and belongs in $L^2(X)$}

Using this notation, satisfying condition (\ref{eq:constraint-on-f}) corresponds to having $\hat f\in\Ker(E)$. Alternatively, if we denote
\begin{equation}\label{eq:l2-p-expression}
    P = \Id - E,
\end{equation}
the orthogonal projection onto $\Ker(E)$, then we want to take $\cF = \Range(P)$ as our hypothesis space.

In general, it may be hard to constrain the hypothesis space directly to be $\Range(P)$. However, the solution to the empirical risk minimisation problem (\ref{eq:ERM}) will always orthogonally decompose within $L^2(X)$ as
\begin{equation}
    \hat f = P\hat f + E\hat f,
\end{equation}
where only $P\hat f\in\Range(P)$ satisfies (\ref{eq:constraint-on-f}). It turns out that discarding $E \hat f$ --- the part that does not satisfy the constraint --- will always yield generalisation benefits. 

\begin{proposition}\label{proposition:L2-generalisation-benefit}
    Let $h \in L^2(X)$ be any regressor from our hypothesis space. We have
    \begin{equation}
        \Delta(h, Ph)  = \|Eh\|_{L^2(X)}^2.
    \end{equation}
\end{proposition}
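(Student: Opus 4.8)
The plan is to reduce the generalisation gap to a geometric identity in $L^2(X)$ by applying the Pythagorean theorem twice. First I would invoke the standard bias--variance decomposition of the squared risk: for any $g\in L^2(X)$, the cross term $\EE[(Y-f^*(X))(f^*(X)-g(X))]$ vanishes because $\EE[Y-f^*(X)\mid X]=0$ by definition of $f^*$, so $\EE[(Y-g(X))^2]=\EE[(Y-f^*(X))^2]+\|f^*-g\|_{L^2(X)}^2$. Applying this with $g=h$ and with $g=Ph$ and subtracting cancels the irreducible error $\EE[(Y-f^*(X))^2]$, leaving
\[
\Delta(h,Ph)=\|f^*-h\|_{L^2(X)}^2-\|f^*-Ph\|_{L^2(X)}^2.
\]

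Next I would exploit the orthogonal decomposition $L^2(X)=\Ker(E)\oplus\Range(E)$ from \eqref{eq:L2-decomposition} together with the key fact established in Section~\ref{subsection:setup}: since the collider yields $Y\indep X_2$ and $\EE[Y]=0$, the tower property gives $Ef^*(x_1,x_2)=\EE[Y\mid X_2=x_2]=\EE[Y]=0$, i.e.\ $f^*\in\Ker(E)$. Writing $h=Ph+Eh$ with $Ph\in\Ker(E)$ and $Eh\in\Range(E)$, we have $f^*-h=(f^*-Ph)-Eh$ where $f^*-Ph\in\Ker(E)$ is orthogonal to $Eh\in\Range(E)$. The Pythagorean theorem then gives $\|f^*-h\|_{L^2(X)}^2=\|f^*-Ph\|_{L^2(X)}^2+\|Eh\|_{L^2(X)}^2$, and substituting into the display above yields $\Delta(h,Ph)=\|Eh\|_{L^2(X)}^2$.

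I do not expect a serious obstacle: the proof is essentially two orthogonality arguments. The one point that deserves care is the claim $f^*\in\Ker(E)$ — this is precisely where the causal assumption enters the proof, and it presupposes that $f^*$ is a genuine element of $L^2(X)$ (so $Y$ is taken square-integrable) so that the tower-property computation is valid. One should also recall, from the paragraph preceding \eqref{eq:L2-decomposition}, that $P=\Id-E$ is the orthogonal projection onto $\Ker(E)$, which is exactly what guarantees both that $Ph\in\Ker(E)$ and that $Ph\perp Eh$; and the assumption $h\in L^2(X)$ ensures every norm appearing above is finite.
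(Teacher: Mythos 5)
Your proof is correct, and it reaches the identity by a slightly different route than the paper. You first pass through the optimal regressor $f^*$ via the bias--variance decomposition (so the irreducible error cancels), and then apply the Pythagorean theorem in $L^2(X)$ using the decomposition $h = Ph + Eh$ together with the key fact $f^*\in\Ker(E)$. The paper instead works entirely in $L^2(\Omega)$ with the random variables themselves: it observes that the conditional expectation $\Pi = \EE[\cdot\mid X_2]$ is an orthogonal projection, that $Y$ and $Ph(X)$ both lie in $\Ker(\Pi)$ (the former because $Y\indep X_2$ and $\EE[Y]=0$) while $Eh(X)\in\Range(\Pi)$, and then applies Pythagoras once to $\|Y - Ph(X) - Eh(X)\|^2$. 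The two arguments encode the same causal input --- in your version as $Ef^* = \EE[Y\mid X_2] = 0$, in the paper's as $Y\in\Ker(\Pi)$, which are the same statement via the tower property --- and both require $Y$ square-integrable for the risks to be finite. The paper's version is marginally more economical (one orthogonality step instead of two and no need to introduce $f^*$), while yours makes explicit the geometric picture from Figure~\ref{fig:cover-figure} that $f^*$ itself lies in the constrained subspace; your caveats about $f^*\in L^2(X)$ and about $P$ being the orthogonal projection onto $\Ker(E)$ are the right ones.
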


The generalisation gap is always greater than zero. Hence, for any given regressor $\hat f$, we can always improve its test performance by projecting it onto $\Range (P)$. 

In practice, a simple estimator of $P\hat f$ can be obtained by subtracting an estimate of $\EE[\hat f(X_1, X_2)|X_2]$ as
\begin{equation}
    \hat P \hat f(x_1, x_2) = \hat f(x_1, x_2) - \hat\EE[\hat f(X_1, X_2)|X_2=x_2]
\end{equation}
by following the procedure outlined in Algorithm~\ref{alg:L2-collider-regression}.

\begin{algorithm}[h]
    \caption{General procedure to estimate $P \hat f$}
    \label{alg:L2-collider-regression}
\begin{algorithmic}[1]
    \STATE Regress $(X_1, X_2) \rightarrow Y$ \\ to get  $(x_1, x_2)\mapsto \hat f(x_1, x_2)$
    \STATE Regress $X_2 \rightarrow \hat f(X_1, X_2)$  \\ to get $x_2\mapsto \hat\EE[\hat f(X_1, X_2)|X_2=x_2]$
    \STATE Take $\hat P \hat f(x_1, x_2) \!=\! \hat f(x_1, x_2) - \hat\EE[\hat f(X_1, X_2)|X_2\!=\!x_2]$
\end{algorithmic}
\end{algorithm}

It is worth noting that the second step of Algorithm~\ref{alg:L2-collider-regression} does not require observations from $Y$. As such, it naturally fits a semi-supervised setup where additional observations $\cD' = \{\bx_1', \bx_2'\}$ are available, and can be used to produce a better estimate of the conditional expectation $\EE[\hat f(X_1, X_2)|X_2]$.


\subsection{Theoretical guarantees in a RKHS}\label{subsection:rkhs-case}

RKHSs are mathematically convenient functional spaces and under mild assumptions on the reproducing kernel, they can be proven to be dense in $L^2(X)$~\cite{sriperumbudur2011universality}. This makes them a powerful tool for theoretical analysis and building intuition which can be expected to carry over to more general function spaces. For this reason, in this section we study the case where the hypothesis space is a RKHS $\cF = \cH$. We denote its inner product by $\langle\cdot, \cdot\rangle_\cH$ and its reproducing kernel $k : \cX\times\cX\to\RR$.


When solving the least-square regression problem in a RKHS, it is known that for Tikhonov regularisation $\Omega(f) = \| f\|_\cH^2$, the solution to the empirical risk minimisation problem (\ref{eq:ERM}) in $\cH$ enjoys a closed-form expression given by
\begin{equation}\label{eq:krr}
    \hat f = \by^\top \left(\bK + \lambda \bI_n\right)^{-1}\bk_{\bx},
\end{equation}
where $\bK = k(\bx, \bx)$ and $\bk_\bx = k(\bx, \cdot)$.

Therefore, if we now project $\hat f$ onto $\Range(P)$ as previously, the projected empirical risk minimiser writes
\begin{equation}\label{eq:projected-erm}
    P \hat f = \by^\top \left(\bK + \lambda \bI_n\right)^{-1}P\bk_{\bx}
\end{equation}
with notation abuse $P\bk_{\bx} = [Pk_{x^{(1)}} \hdots Pk_{x^{(n)}}]^\top$.

Leveraging these analytical expressions, the following result establishes a strictly non-zero generalisation benefit from projecting $\hat f$. The proof techniques follows that of \citet{elesedy2021provably}, but is adapted to our particular setup with relaxing assumptions about the projection orthogonality\footnote{$P$ is not necessarily orthogonal anymore as a projection of $\cH$} and the form of the data generating process.
\begin{theorem}\label{thm:main}
    Suppose $M = \sup_{x\in\cX} k(x, x) < \infty$ and $\operatorname{Var}(Y|X) \geq \eta > 0$. Then, the generalisation gap between $\hat f$ and $P\hat f$ satisfies
    \begin{equation}
         \EE[\Delta(\hat f, P\hat f)] \geq \frac{\eta \EE\big[\|\mu_{X|X_2}(X)\|_{L^2(X)}^2\big]}{\left(\sqrt{n}M + \lambda / \sqrt{n}\right)^2}
    \end{equation}
    where $\mu_{X|X_2} = \EE[k_X|X_2]$ is the CME of $\PP(X|X_2)$.
\end{theorem}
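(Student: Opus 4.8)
The plan is to first reduce the expected generalisation gap to $\EE_\cD\big[\|E\hat f\|^2_{L^2(X)}\big]$ and then lower-bound that quantity. Since $M=\sup_x k(x,x)<\infty$ we have $\cH\hookrightarrow L^2(X)$, because $f(X)^2=\langle f,k_X\rangle_\cH^2\le k(X,X)\|f\|_\cH^2\le M\|f\|_\cH^2$; hence $\hat f\in L^2(X)$. Moreover, by linearity of $E$ one has $\by^\top(\bK+\lambda\bI_n)^{-1}E\bk_\bx=E\hat f$, so the projected estimator in \eqref{eq:projected-erm} is exactly $P\hat f=\hat f-E\hat f$, and Proposition~\ref{proposition:L2-generalisation-benefit} applies verbatim to give $\Delta(\hat f,P\hat f)=\|E\hat f\|^2_{L^2(X)}$. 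The only input from causal knowledge is $Ef^*=0$, i.e.\ $Y\indep X_2$, which that proposition already uses; note all the underlying orthogonality lives in $L^2(X)$, which is precisely what lets us dispense with orthogonality of $P$ in $\cH$. Taking expectations over $\cD$, it then remains to bound $\EE_\cD[\|E\hat f\|^2_{L^2(X)}]$ from below.

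Second, I would rewrite the right-hand side through the conditional mean embedding. For a fixed point $x=(x_1,x_2)$, the reproducing property of the CME gives $E\hat f(x)=\EE[\hat f(X')\mid X_2'=x_2]=\langle\hat f,\mu_{X|X_2=x_2}\rangle_\cH$, so $\|E\hat f\|^2_{L^2(X)}=\EE_{X_2}\big[\langle\hat f,\mu_{X|X_2}\rangle_\cH^2\big]$, the expectation being over a fresh $X_2$ independent of $\cD$. Interchanging $\EE_\cD$ and $\EE_{X_2}$ (legitimate since $k$ is bounded), it suffices to lower-bound $\EE_\cD[\langle\hat f,\mu\rangle_\cH^2]$ for an arbitrary fixed $\mu\in\cH$ and afterwards set $\mu=\mu_{X|X_2}$ and average.

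Third is the core estimate. Substituting the kernel ridge closed form \eqref{eq:krr} and the reproducing property, $\langle\hat f,\mu\rangle_\cH=\by^\top(\bK+\lambda\bI_n)^{-1}\mu(\bx)$ with $\mu(\bx)=[\mu(x^{(1)})\ \cdots\ \mu(x^{(n)})]^\top$. Denoting this random variable by $Z$, the idea borrowed from \citet{elesedy2021provably} is to keep only the variance: $\EE_\cD[Z^2]\ge\operatorname{Var}_\cD(Z)\ge\EE_\bx[\operatorname{Var}_{\by|\bx}(Z)]$. Conditionally on the covariates $\bx$, $Z$ is linear in the independent responses $y^{(i)}$ with $\operatorname{Var}(y^{(i)}\mid\bx)=\operatorname{Var}(Y\mid X=x^{(i)})\ge\eta$, so
\[
\operatorname{Var}_{\by|\bx}(Z)=\sum_{i=1}^n\big((\bK+\lambda\bI_n)^{-1}\mu(\bx)\big)_i^2\operatorname{Var}(Y\mid X=x^{(i)})\ \ge\ \eta\,\big\|(\bK+\lambda\bI_n)^{-1}\mu(\bx)\big\|_2^2.
\]
Using $\|(\bK+\lambda\bI_n)^{-1}\mu(\bx)\|_2\ge\|\mu(\bx)\|_2/\|\bK+\lambda\bI_n\|_{\mathrm{op}}$, the bound $\|\bK+\lambda\bI_n\|_{\mathrm{op}}\le\operatorname{Tr}(\bK)+\lambda\le nM+\lambda$, and $\EE_\bx[\|\mu(\bx)\|_2^2]=n\,\EE_X[\mu(X)^2]=n\|\mu\|^2_{L^2(X)}$ (the $x^{(i)}$ being i.i.d.\ copies of $X$), one obtains $\EE_\cD[\langle\hat f,\mu\rangle_\cH^2]\ge\eta n\|\mu\|^2_{L^2(X)}/(nM+\lambda)^2$. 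Taking $\mu=\mu_{X|X_2}$, averaging over $X_2$, and using $(nM+\lambda)^2/n=(\sqrt nM+\lambda/\sqrt n)^2$ yields
\[
\EE[\Delta(\hat f,P\hat f)]=\EE_\cD[\|E\hat f\|^2_{L^2(X)}]\ \ge\ \frac{\eta\,\EE\big[\|\mu_{X|X_2}(X)\|^2_{L^2(X)}\big]}{(\sqrt nM+\lambda/\sqrt n)^2}.
\]

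The main obstacle is the third step. It is tempting to lower-bound $\EE_\cD[Z^2]$ by its squared mean $(\EE_\cD Z)^2$, but that bias term can be made negligible (for instance it vanishes as $\lambda\to0$ with an expressive enough kernel), so the bound must instead be extracted from the irreducible label noise --- which is exactly what $\operatorname{Var}(Y\mid X)\ge\eta$ provides and what makes the benefit \emph{strictly} positive. Getting the correct power of $n$ in the denominator hinges on pairing the crude but essentially tight bound $\|\bK\|_{\mathrm{op}}\le\operatorname{Tr}(\bK)\le nM$ with the $n$-fold sum $\EE_\bx\|\mu(\bx)\|_2^2=n\|\mu\|^2_{L^2(X)}$. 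One should also check measurability/Bochner-integrability of $x_2\mapsto\mu_{X|X_2=x_2}$ and the interchange of expectations, but boundedness of $k$ makes these routine.
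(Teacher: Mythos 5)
Your proposal is correct and follows essentially the same route as the paper's proof: reduce to $\EE[\|E\hat f\|^2_{L^2(X)}]$ via Proposition~\ref{proposition:L2-generalisation-benefit}, drop the squared conditional mean and keep only the label-noise variance (which contributes a factor $\eta$ per coordinate), control $\|\bK+\lambda\bI_n\|$ by $nM+\lambda$, and identify $Ek_X$ with the CME. The only cosmetic differences are that you apply Fubini to work pointwise in the fresh $x_2$ with the scalar $\langle\hat f,\mu\rangle_\cH$ instead of assembling the Gram matrix $\bJ$ and invoking the trace inequality $\Tr(\bA\bB)\geq\lambda_{\min}(\bA)\Tr(\bB)$, and that you bound $\lambda_{\max}(\bK)$ by $\Tr(\bK)\leq nM$ rather than by $n\max_{ij}|\bK_{ij}|$.
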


\setcounter{footnote}{5}
\footnotetext{$E$ then corresponds to what is referred to as a conditional mean operator in the kernel literature~\cite{fukumizu2004dimensionality}.}
\setcounter{footnote}{4}

This demonstrates that in a RKHS, projecting the empirical risk minimiser is strictly beneficial in terms of generalisation error. Specifically, if there exists a set with non-zero measure on which $Y \neq 0$ and $\mu_{X|X_2} \neq 0$ almost-everywhere, then the lower bound is strictly positive. 

The magnitude of the lower bound depends on the variance of $\|\mu_{X|X_2}(X)\|_{L^2(X)}$ and the lower bound on $\operatorname{Var}(Y|X)$. This indicates that problems with more complex conditional distributions $\PP(X|X_2)$ and $\PP(Y|X)$ should enjoy a larger generalisation gap.


The theorem also suggests that the lower bound on the generalisation benefit decreases at the rate $\cO(1/n)$ as the number of samples $n$ grows. Since for the well-specified kernel ridge regression problem, the excess risk upper bound also decreases at rate $\cO(1/n)$~\cite{bach2021learning, caponnetto2007optimal}, we have that $\EE[\Delta(\hat f, P\hat f)] = \Theta (1/n)$.

In a RKHS, $P\hat f$ can be rewritten using CMEs as
\begin{equation}
    Pf(x_1, x_2) = f(x_1, x_2) - \langle f, \mu_{X|X_2=x_2}\rangle_\cH.
\end{equation}
Therefore, introducing a kernel $\ell:\cX_2\times\cX_2\to\RR$, the CME estimate from (\ref{eq:cmo-estimate}) allows to devise an estimator of $P\hat f$ as:
\begin{equation}\label{eq:RKHS-pf-estimate}
\hat P \hat f \!=\!  \by^\top \!\left(\bK \!+\! \lambda \bI_n\right)^{-1}\!\left(\bk_\bx\! - \!\bK(\bL \!+\! \gamma \bI_n)^{-1}\!\bell_{\bx_2}\right) 
\end{equation}
where $\bL = \ell(\bx_2, \bx_2)$, $\bell_{\bx_2} = \ell(\bx_2, \cdot)$ and $\gamma > 0$.

\subsection{Respecting the collider structure in a RKHS}\label{subsection:new-rkhs-case}

Similarly to the $L^2(X)$ case, the solution to the empirical risk minimisation problem in $\cH$ will also decompose as $\hat f = P\hat f + E\hat f$. Thus, we can proceed similarly by simply discarding $E\hat f$ to improve performance. However, it turns out that using elegant functional properties of RKHSs, it is possible to take a step further and directly take $\cF = \Range(P)$. In doing so, we can ensure that our hypothesis space only contains functions that satisfy constraint (\ref{eq:constraint-on-f}).

Under assumptions detailed in Appendix~\ref{appendix:rkhs-assumptions}, we can view the projection $P$ as a well-defined RKHS projection\footnotemark\ $P~:~\cH~\to~\cH$. In particular, an important assumption is that the kernel takes the form
\begin{equation}\label{eq:kernel_assumption}        k\left(x,x'\right)=\left(r\left(x_{1},x_{1}'\right)+1\right)\ell\left(x_{2},x_{2}'\right),
\end{equation}
where $r : \cX_1\times\cX_1\to\RR$ and $\ell:\cX_2\times\cX_2\to\RR$ are also positive semi-definite kernels. This ensures that $\cH$ contains functions that are constant with respect to $x_1$. Thus, the conditional expectation mapping $(x_1, x_2)\mapsto\EE[f(X_1, X_2)|X_2~=~x_2]$ belongs to the same RKHS.

If these assumptions are met, we denote $\cH_P = \Range(P)$. The following result characterises $\cH_P$ as a RKHS.
\begin{proposition}\label{proposition:projected-RKHS}
    Let $P^*$ be the adjoint operator of $P$ in $\cH$. Then $\cH_P$ is also a RKHS with reproducing kernel
    \begin{equation}
        k_P(x,x') = \langle P^* k_x, P^* k_{x'}\rangle_\cH
    \end{equation}
     with $P^* k_x = k_x - \mu_{X|X_2=\pi_2(x)}$.
\end{proposition}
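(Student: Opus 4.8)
The plan is to exhibit $\cH_P$ as the image of a bounded operator on a Hilbert space and invoke the standard construction of the RKHS associated with a feature map. Concretely, $P : \cH \to \cH$ is a bounded (though not necessarily orthogonal) projection, so $P^* : \cH \to \cH$ is also bounded. First I would show that $\cH_P = \Range(P)$, equipped with a suitable inner product, is isometrically isomorphic to a quotient/complement of $\cH$, and that the map $x \mapsto P^* k_x$ serves as a feature map whose associated kernel is exactly $k_P(x,x') = \langle P^* k_x, P^* k_{x'}\rangle_\cH$. The key identity driving everything is that for $f \in \cH$ and $x \in \cX$,
\begin{equation}
    (Pf)(x) = \langle Pf, k_x\rangle_\cH = \langle f, P^* k_x\rangle_\cH,
\end{equation}
so $P^* k_x$ is precisely the Riesz representer of the evaluation functional $f \mapsto (Pf)(x)$ on $\cH$. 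This immediately suggests that $\Range(P)$ inherits a reproducing kernel structure with representers $P^*k_x$, provided we choose the inner product on $\cH_P$ correctly.

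Next I would pin down that inner product. Since $P$ is a projection, $\cH = \Ker(P) \oplus \Range(P)$ as a (not necessarily orthogonal) direct sum, and restricting the quotient map or using $P$ itself gives a bijection between $\cH/\Ker(P)$ and $\cH_P$. The natural choice is to endow $\cH_P$ with the inner product making $g \mapsto$ (its unique preimage of minimal norm, i.e. the component in $\Ker(P)^\perp$) an isometry; equivalently $\langle g, g'\rangle_{\cH_P} := \langle u, u'\rangle_\cH$ where $u, u'$ are the minimal-norm preimages under $P$. With this choice one checks the two defining properties of an RKHS: (i) $k_P(\cdot, x) = P(\text{something}) \in \cH_P$ for every $x$ — indeed $k_P(\cdot,x)$ should equal $P(P^*k_x)$ up to the identification, so it lies in $\Range(P)$; and (ii) the reproducing property $\langle g, k_P(\cdot,x)\rangle_{\cH_P} = g(x)$ for all $g \in \cH_P$. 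For (ii), writing $g = Pu$ with $u \in \Ker(P)^\perp$, we compute $g(x) = (Pu)(x) = \langle u, P^*k_x\rangle_\cH$, and we must match this against $\langle g, k_P(\cdot,x)\rangle_{\cH_P} = \langle u, (P^*k_x)\text{-preimage}\rangle_\cH$; this forces the identification $k_P(\cdot,x) \leftrightarrow$ the projection of $P^*k_x$ onto $\Ker(P)^\perp$, and then symmetry of $\langle P^*k_x, P^*k_{x'}\rangle_\cH$ together with $P$ being idempotent closes the computation. The explicit formula $P^*k_x = k_x - \mu_{X|X_2 = \pi_2(x)}$ then follows by applying the earlier identity $Pf(x_1,x_2) = f(x_1,x_2) - \langle f, \mu_{X|X_2=x_2}\rangle_\cH$: the second term is the evaluation functional of $E^* f$, whose representer is $\mu_{X|X_2=\pi_2(x)}$, so $P^* k_x = k_x - E^* k_x$ and one identifies $E^*k_x$ with the conditional mean embedding.

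The main obstacle I anticipate is the bookkeeping around non-orthogonality of $P$: unlike the $L^2(X)$ case where $P = \Id - E$ is an orthogonal projection and $\Range(P) = \Ker(P)^\perp$, here in $\cH$ the decomposition $\cH = \Ker(P)\oplus\Range(P)$ need not be orthogonal, so one cannot simply restrict the ambient inner product to $\Range(P)$ and expect the reproducing property to hold — the "correct" inner product on $\cH_P$ is the pulled-back one via the minimal-norm preimage, and verifying well-definedness, completeness, and the reproducing property all require care that $P$ restricted to $\Ker(P)^\perp$ is a bounded bijection onto $\Range(P)$ with bounded inverse (which uses the Appendix assumptions, in particular the product-kernel form (\ref{eq:kernel_assumption}) guaranteeing $E$, hence $P$, is well-defined and bounded on $\cH$). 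Once that bicontinuity is established, completeness of $\cH_P$ is inherited and the feature-map characterization of the kernel is routine; I would therefore state the needed boundedness/closed-range facts up front (citing the Appendix) and then present the kernel identification as the substantive content.
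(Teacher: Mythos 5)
Your proposal is correct and follows essentially the same route as the paper: both rest on the identity $(Pf)(x) = \langle f, P^*k_x\rangle_\cH$, which exhibits $\Range(P)$ as exactly the RKHS induced by the feature map $x\mapsto P^*k_x$. The inner-product and completeness bookkeeping you carry out by hand (minimal-norm preimages in $\Ker(P)^\perp$) is precisely what the paper outsources to the pull-back theorem cited as Proposition~\ref{lemma:rkhs-feature-map}, and no closed-range argument beyond boundedness of $P$ is actually needed there.
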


Using the projected RKHS kernel $k_P$, it becomes possible to solve the least-square regression problem directly inside $\cF = \cH_P$. By taking $\Omega(f) = \|f\|_{\cH_P}^2$, the empirical risk minimisation problem becomes a standard kernel ridge regression problem in $\cH_P$ which admits closed-form solution
\begin{equation}\label{eq:erm-in-HP}
    \hat f_P = \by^\top \left(\bK_P + \lambda \bI_n\right)^{-1}\bk_{P,\bx},
\end{equation}
where $\bK_P = k_P(\bx, \bx)$ and $\bk_{P,\bx} = k_P(\bx, \cdot)$.

From a learning theory perspective, performing empirical risk minimisation inside $\cH_P$ should provide tighter bounds on the generalisation error than on the entire space $\cH$. This is because since $\cH_P\subset\cH$, the Rademacher complexity of $\cH_P$ is smaller than that of $\cH$.

It should be noted that $k_P$ depends on the CME $\mu_{X|X_2=\pi_2(x)}$, which needs to be estimated. Therefore, in practice, our hypothesis will not lie in the true $\cH_P$ but in an approximation of $\cH_P$ and the approximation error will depend directly on the CME estimation error.

\begin{algorithm}[h]
    \caption{RKHS procedure to estimate $\hat f_P$}
    \label{alg:RKHS-collider-regression}
\begin{algorithmic}[1]
    \STATE Let $\hat P^*k_x = k_x - \hat \mu_{X|X_2=\pi_2(x)}$
    \STATE Let $\hat k_P(x, x') = \langle \hat P^*k_x, \hat P^*k_{x'}\rangle_\cH$
    \STATE Evaluate $\hat \bK_P = \hat k_P(\bx, \bx)$ and $\hat\bk_{P,\bx} = \hat k_P(\bx, \cdot)$
    \STATE Take $\hat f_{\hat P} = \by^\top (\hat \bK_P + \lambda \bI_n)^{-1}\hat\bk_{P,\bx}$
\end{algorithmic}
\end{algorithm}

The estimation of (\ref{eq:erm-in-HP}) is again a two-stage procedure outlined in Algorithm~\ref{alg:RKHS-collider-regression}. The distinction with the general $L^2(X)$ case is that we do not estimate the conditional expectation of any specific function. Instead, we estimate the conditional expectation operator through $\hat\mu_{X|X_2=x_2}$, and then use it through $\hat P^*$ to constrain the hypothesis space. This is possible because in a RKHS, the estimation of the conditional expectation operator can be achieved independently from the function it is applied to. Due to the assumption on the kernel introduced in equation \ref{eq:kernel_assumption} there are now alternative estimators for $\hat\mu_{X|X_2=x_2}$ which we provide details of 
in Appendix~\ref{appendix:estimators-details}.

The estimation of $P^*$ in line 1 only requires observations from $X_1, X_2$. Thus, like in the $L^2(X)$ case, additional observations $\cD' = \{\bx_1', \bx_2'\}$ can help better estimate CMEs, and thus better approximate the projected RKHS $\cH_P$.

\begin{figure*}[t]
    \centering
    \includegraphics[width=\textwidth]{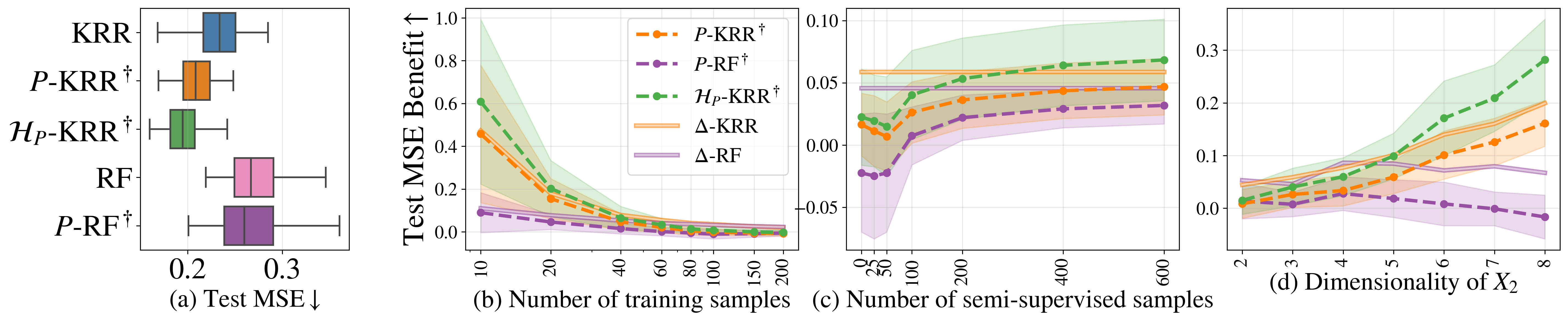}
    \caption{(a) :  Test MSEs for the simulation experiment ; dataset is generated using $d_1 = 3$, $d_2 = 3$, $n = 50$ and $100$ semi-supervised samples ; experiments is run for 100 datasets generated with different seeds ; statistical significance is confirmed in Appendix~\ref{appendix:experiments} ; (b, c, d) : Ablation study on the number of training samples, number of semi-supervised samples and dimensionality of $X_2$ ; experiments are run for 40 datasets generated with different seeds ; $\uparrow\!/\!\downarrow$ indicates higher/lower is better ; we report 1 s.d. ; $\dagger$ indicates our proposed methods.}
    \label{fig:simulation-results}
    \vspace*{-0.4em}
\end{figure*}

\section{Collider Regression on a more general DAG}\label{section:more-general-dag}

We now return to a general Markov boundary. Any Markov boundary may be partitioned following Figure~\ref{fig:general-collider}, where $X_1$ contains all direct children of $Y$, $X_3$ contains all parents of $Y$ and all other variables are grouped in $X_2$. Furthermore, we assume that there exists no edge from a variable in $X_1$ to a variable in $X_2$.

This provides us with the probabilistic information that $Y \indep X_2 \mid X_3$ but $Y \not \indep X_2 \mid X_3, X_1$, which implies in expectation that $\EE[Y|X_3] = \EE[Y|X_2, X_3]$.

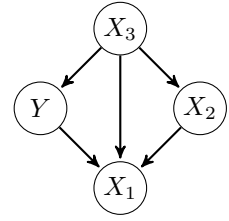
\begin{figure}[H]
     \centering
\begin{tikzpicture}[>=stealth', shorten >=1pt,
                             node distance=1.5cm, scale=1, 
                             transform shape, align=center, 
                             state/.style={circle, draw, minimum size=7mm, inner sep=0.5mm}]
            \node[state] (v2) at (0,0) {$X_1$};
            \node[state, above right of=v2] (v0) {$X_2$};
            \node[state, above left of=v2] (v1) {$Y$};
            \node[state, above right of=v1] (v3) {$X_3$};
            \draw [->, thick] (v0) edge (v2);
            \draw [->, thick] (v1) edge (v2);
            \draw [->, thick] (v3) edge (v1);
            \draw [->, thick] (v3) edge (v2);
            \draw [->, thick] (v3) edge (v0);
        \end{tikzpicture} 
    \caption{General Markov boundary collider structure.}
    \label{fig:general-collider}
\end{figure}

If we now denote $X = (X_1, X_2, X_3)$ and $f_0(x) = \EE[Y|X_3=x_3]$, then the optimal least-square regressor $f^*(x)=\EE[Y|X=x]$ satisfies
\begin{align}
    \begin{split}
        & \EE\big[f^*(X) - f_0(X)\mid X_2, X_3\big] \\
        = &\, \EE\left[\EE[Y|X]\mid X_2, X_3\right] - \EE\left[\EE[Y|X_3]\mid X_2, X_3\right] \\
        = &\, \EE[Y|X_2, X_3] - \EE\left[\EE[Y|X_2, X_3]\mid X_2, X_3\right]\\
        = &\, 0.
    \end{split}
\end{align}
Therefore, if we center our hypothesis space on $f_0$, then like in Section~\ref{subsection:setup}, we want our centered estimate $\hat{f} - f_0$ to lie within the following subspace:
\begin{equation}
    \hat{f} - f_0 \in \big\{f \in \cF \mid \EE\big[f(X)\mid X_2, X_3\big] = 0\big\}.
\end{equation}
When $\cF = L^2(X)$, this space can again be seen as the range of an orthogonal projection, this time defined by
\begin{equation}
    P' = \Id - E'
\end{equation}
where $E' : L^2(X)\to L^2(X)$ denotes the conditional expectation functional with respect to $(X_2, X_3)$
\begin{equation}
    E'f(x_2, x_3) = \EE[f(X)|X_2 = x_2, X_3=x_3].
\end{equation}


While we focus in Section~\ref{section:collider-regression} on the simple collider structure for the sake of exposition, our result are stated for a general projection operator and still hold for $P'$ --- modulo a shift by $f_0$. Hence, we can still apply the techniques we have presented to encode probabilistic information from the general DAG in Figure~\ref{fig:general-collider} into a regression problem, with similar guarantees on the generalisation benefits.
\begin{proposition}\label{proposition:general-L2-generalisation-benefit}
    Let $h\in L^2(X)$ be any regressor from our hypothesis space. We have
    \begin{equation}
        \Delta(h, f_0 + P'h) = \|E'h - f_0\|_{L^2(X)}^2.
    \end{equation}
\end{proposition}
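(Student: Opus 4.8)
The plan is to reduce the claim to the same one-line expansion of the squared loss that underlies Proposition~\ref{proposition:L2-generalisation-benefit}. Set $g = E'h - f_0$ and note that $f_0 + P'h = f_0 + (h - E'h) = h - g$. Expanding the risk difference,
\begin{equation}
    \Delta(h, f_0 + P'h) = \EE[(Y - h(X))^2] - \EE[(Y - h(X) + g(X))^2] = -2\,\EE[(Y-h(X))g(X)] - \EE[g(X)^2],
\end{equation}
so it suffices to establish the cross-term identity $\EE[(h(X)-Y)g(X)] = \EE[g(X)^2] = \|E'h - f_0\|_{L^2(X)}^2$.

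First I would use that $g$ is measurable with respect to $(X_2, X_3)$: indeed $E'h$ is a function of $(x_2,x_3)$ by definition and $f_0$ a function of $x_3$ alone, so $E'f_0 = f_0$ and $E'g = g$. Since $E'$ is the orthogonal projection of $L^2(X)$ onto the subspace of $(X_2,X_3)$-measurable functions, this gives $\EE[h(X)g(X)] = \EE[(E'h)(X)\,g(X)]$. Next I would invoke the collider-induced conditional independence $Y \indep X_2 \mid X_3$ --- precisely the computation displayed just before the proposition, which yields $\EE[Y \mid X_2, X_3] = \EE[Y \mid X_3] = f_0(X)$ --- and apply the tower property to obtain $\EE[Y g(X)] = \EE[\EE[Y\mid X_2,X_3]\,g(X)] = \EE[f_0(X)g(X)]$. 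Subtracting the two identities gives $\EE[(h(X)-Y)g(X)] = \EE[(E'h - f_0)(X)\,g(X)] = \EE[g(X)^2]$, which closes the argument.

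Equivalently, one can phrase this as a direct corollary of Proposition~\ref{proposition:L2-generalisation-benefit} via centering: with $\tilde Y = Y - f_0(X)$ and $\tilde h = h - f_0$, one checks $P'\tilde h = P'h$ and $E'\tilde h = E'h - f_0$, so that $\Delta(h, f_0 + P'h)$ coincides with the generalisation gap of the recentered regression problem with target $\tilde Y$, for which $\EE[\tilde Y\mid X_2,X_3] = 0$ holds and Proposition~\ref{proposition:L2-generalisation-benefit} applies with $P'$ in place of $P$. I do not anticipate a genuine obstacle: everything lives in $L^2(X)$ by assumption, and the only point requiring care is the $\sigma$-algebra bookkeeping --- in particular verifying $E'f_0 = f_0$ so that the shift $f_0$ is neither discarded nor double-counted by $E'$, and confirming that $g$ is indeed $(X_2,X_3)$-measurable so that both the orthogonality and the tower-property steps above are legitimate.
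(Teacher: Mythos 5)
Your proposal is correct and is essentially the paper's argument in unpacked form: the paper proves the identity by noting that $Y - f_0(X)$ and $P'h(X)$ lie in the kernel of the $L^2(\Omega)$ projection $\EE[\cdot\mid X_2,X_3]$ while $E'h(X) - f_0(X)$ lies in its range, and then applies the Pythagorean identity, which is exactly the vanishing of your cross term via the same two facts ($(X_2,X_3)$-measurability of $E'h - f_0$ and $\EE[Y\mid X_2,X_3]=f_0(X)$ from $Y\indep X_2\mid X_3$). Your recentering remark also matches the paper's displayed step $\|(Y-f_0(X))-(h(X)-f_0(X))\|^2$, so no substantive difference.
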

This means that, for any given regressor $\hat f$, we can always improve its test performance by first projecting it onto $\Range(P)$, and then shifting it by $f_0$. 

In practice, the estimation strategies introduced in Section~\ref{section:collider-regression} can still be applied to obtain an estimate of $P'\hat f$. An additional procedure to estimate $f_0$ will however be needed. This can be achieved by regressing $Y$ onto $X_3$. We provide corresponding algorithms and estimators in Appendix~\ref{appendix:general-case-details}.




\section{Experiments}\label{section:experiments}

This section provides empirical evidence that incorporating probabilistic causal knowledge into a regression problem benefits performance. First, we demonstrate our method on an illustrative simulation example. We conduct an ablation study on the number of training samples, the dimensionality of $X_2$ and the use of additional semi-supervised samples. Then, we address a challenging climate science problem that respects the collider structure. Our results underline the benefit of enforcing constraint (\ref{eq:constraint-on-f}) onto the hypothesis. Code and data are made available\footnote{\url{https://github.com/shahineb/collider-regression}.}.

\paragraph{Models}
We compare five models: 
\begin{enumerate}
    \setlength\itemsep{-0.1em}
    \item \emph{RF}: A baseline random forest model.
    \item \emph{$P$-RF}: The baseline RF model projected following Algorithm~\ref{alg:L2-collider-regression} and using a linear regression to estimate $\hat\EE[\hat f(X_1, X_2)|X_2=x_2]$.
    \item \emph{KRR}: A baseline kernel ridge regression. 
    \item \emph{$P$-KRR}: The KRR model projected following (\ref{eq:RKHS-pf-estimate}).
    \item \emph{$\cH_P$-KRR}:  A kernel ridge regression model fitted directly in the projected RKHS following Algorithm~\ref{alg:RKHS-collider-regression}.
\end{enumerate}

For both KRR and RF, we use Proposition~\ref{proposition:L2-generalisation-benefit} to compute Monte Carlo estimates of the expected generalisation gap $\EE[\Delta(\hat{f},P\hat{f})]$, which we denote as $\Delta$-KRR and $\Delta$-RF respectively. This provides an indicator of the greatest achievable generalisation gain if we had access to the exact projection $P$. Hyperparameters are tuned using a cross-validated grid search and model details are specified in Appendix~\ref{appendix:experiments}.

\subsection{Simulation example}\label{subsection:simulation-example}
\paragraph{Data generating process}
We propose the following construction that follows the simple collider structure from Figure~\ref{fig:simple-collider}. Let $d_1, d_2 \geq 1$ denote respectively the dimensionalities of $X_1$ and $X_2$. We first generate a fixed positive definite matrix $\Sigma$ of size $(d_1 + d_2 + 1)$ which has zero off-diagonals on the $(d_1 + d_2)$\textsuperscript{th} row and column . We then follow the generating process described in Algorithm~\ref{alg:simulation-example} and generate a dataset of $n$ observations $\cD = \{\bx_1, \bx_2, \by\}$. The zero off-diagonal terms in $\Sigma$ ensure that we satisfy $Y\indep X_2$ and  $g_1, g_2$ are nontrivial mappings that introduce a non-linear dependence (details in Appendix~\ref{appendix:experiments}).
\begin{algorithm}[h]
    \caption{Data generating process simulation example}
    \label{alg:simulation-example}
\begin{algorithmic}[1]
    \STATE {\bfseries Input:} $\Sigma\succcurlyeq 0$, $\sigma > 0$, $g_1\!:\! \RR^{d_1}\!\to\!\RR^{d_1}$, $g_2:\RR^{d_2}\!\to\!\RR^{d_2}$
    \STATE $\begin{bmatrix}X_1 & X_2 & Y\end{bmatrix}^\top \sim \cN(0, \Sigma)$,  $\enspace\varepsilon\sim\cN(0, \sigma^2)$
    \STATE $X_1 \gets g_1(X_1) + \varepsilon$
    \STATE $X_2 \gets g_2(X_2)$
    \STATE \textbf{return} $X_1, X_2, Y$
\end{algorithmic}
\end{algorithm}

\paragraph{Results}
Figure~\ref{fig:simulation-results}(a) provides empirical evidence that, for both KRR and RF, incorporating probabilistic inductive biases from the collider structure in the hypothesis benefits the generalisation error.

In addition, Figure~\ref{fig:simulation-results}(b)(c)(d) shows that the empirical generalisation benefit is greatest when : fewer training samples are available, semi-supervised samples can be easily obtained and the dimensionality of $X_2$ is larger. This is in keeping with Theorem~\ref{thm:main} which predicts the benefit will be larger when we have fewer labeled samples and a more complicated relationship between $X_2$ and $X_1$.

Because the decision nodes learnt by RF largely rely on $X_1$ and the early dimensions of $X_2$, increasing the dimensionality of $X_2$ has little to negative effect as shown in Figure~\ref{fig:simulation-results}(d).




\subsection{Aerosols radiative forcing}\label{subsection:aerosol-example}

\paragraph{Background}
The radiative forcing is defined as the difference between incoming and outgoing flux of energy in the Earth system. At equilibirum, the radiative forcing should be of 0$\,$W$\,$m\textsuperscript{-2}. Carbon dioxide emissions from human activity contribute a positive radiative forcing of +1.89$\,$W$\,$m\textsuperscript{-2} which causes warming of the Earth~\cite{bellouin2020radiative}.

Aerosols are microscopic particles suspended in the atmosphere (e.g.\ dust, sea salt, black carbon) that contribute a negative radiative forcing by helping reflect solar radiation, which cools the Earth. However, the magnitude of their forcing represents the largest uncertainty in assessments of global warming, with uncertainty bounds that could offset global warming or double its effects. It is thus critical to obtain better estimate of the aerosol radiative forcing.

The carbon dioxide and aerosol forcings are independent factors\footnote{this is because whilst human activity can confound CO\textsubscript{2} and aerosol emissions, the timescale on which CO\textsubscript{2} and aerosol forcing operate (century vs week) are so different that the forcings at a given time can be considered independent.} that contribute to the observed global temperatures. Hence, by setting $Y=$ \enquote{aerosol forcing}, $X_2=$\enquote{CO\textsubscript{2} forcing} and $X_1=$ \enquote{global temperature}, this problem has a collider structure and observations from global temperature and CO\textsubscript{2} forcing can be used to regress the aerosol forcing.



\paragraph{Data generating process}

FaIR (for Finite amplitude Impulse Response) is a deterministic model that proposes a simplified low-order representation of the climate system~\cite{millar2017modified, smith2018fair}. Surrogate climate models like FaIR --- referred to as \emph{emulators} --- have been widely used, notably in reports of the Intergovernmental Panel on Climate Change~\cite{masson2021climate}, because they are fast and inexpensive to compute. 

We use a modified version of FaIRv2.0.0~\cite{leach2021fairv2} where we introduce variability by adding white noise on the forcing to account for climate internal variability~\cite{hasselman1976stochastic, cummins2020optimal}. To generate a sample, we run the emulator over historical greenhouse gas and aerosol emission data and retain scalar values for $y =$ \enquote{aerosol forcing in 2020}, $x_2 = $ \enquote{CO\textsubscript{2} forcing in 2020} and $x_1 = $ \enquote{global temperature anomaly in 2020}. We perform this $n$ times to generate dataset $\cD = \{\bx_1, \bx_2, \by\}$.

\begin{table}[t]
    \centering
    \caption{MSE, signal-to-noise ratio (SNR) and correlation on test data for the aerosol radiative forcing experiment ; $n = 50$ and $200$ semi-supervised samples ; statistical significance is confirmed in Appendix~\ref{appendix:experiments} ; experiments is run for 100 datasets generated with different seeds ; $\uparrow\!/\!\downarrow$ indicates higher/lower is better ; we report 1 standard deviation ; $\dagger$ indicates our proposed methods.}
    \begin{tabular}{rccc}\toprule
           & MSE\small{$\;\downarrow$} & SNR\small{$\;\uparrow$} & Correlation\small{$\;\uparrow$} \\ \midrule
    RF     &  0.90\scriptsize{$\pm$0.04}  &  0.44\scriptsize{$\pm$0.19}   &   0.32\scriptsize{$\pm$0.08}    \\
    $P$-RF$^{\dagger}$   &  \textbf{0.89\scriptsize{$\pm$0.03}}  &   \textbf{0.49\scriptsize{$\pm$0.15}}  &   \textbf{0.34\scriptsize{$\pm$0.07}}    \\ \midrule
    KRR    &  0.88\scriptsize{$\pm$0.04}   &  0.58\scriptsize{$\pm$0.17}   &  0.37\tiny{$\pm$0.05}    \\
    $P$-KRR$^{\dagger}$  &  \textbf{0.86\scriptsize{$\pm$0.03}}   &  \textbf{0.65\scriptsize{$\pm$0.13}}   &  \textbf{0.40\tiny{$\pm$0.01}}    \\
    $\cH_P$-KRR$^{\dagger}$ &  \textbf{0.86\scriptsize{$\pm$0.03}}   &   \textbf{0.65\scriptsize{$\pm$0.14} } &  \textbf{0.40\scriptsize{$\pm$0.01}}    \\ \bottomrule
    \end{tabular}
    \label{table:FaIR-results}
\end{table}

\paragraph{Results}

Results are reported in Table~\ref{table:FaIR-results}. We observe that the incorporation of inductive bias from the collider resulted in consistently improved performance for both RF and KRRs. This shows that while the proposed methodology is only formulated in terms of squared error, it can also improve performance for other metrics.

\section{Discussion and Related Work}\label{section:discussion}
\paragraph{Regression and Causal Inference}

Currently causal inference is most commonly used in regression problems when reasoning about invariance~\cite{peters2016causal,arjovsky2019invariant}. These methods aim to use the causal structure to guarantee the predictors will transfer to new environments~\cite{gulrajani2020search} and recent work discusses how causal structure plays a role in the effectiveness of these methods~\cite{wang2022a}. Our work takes a complimentary route in asking how causal structure can benefit in regression, and, in contrast to prior work, focuses on a fixed environment. 

\paragraph{Causal and Anti-causal learning} 

Our work is closely related to work on anti-causal learning~\cite{scholkopf2012causal} which argues that $\PP(X)$ will only provide additional information about $\PP (Y | X) $ if we are working in an anti-causal prediction problem $Y \rightarrow X$. This leads the authors to hypothesise that additional unlabelled semi-supervised samples will be most helpful in the anti-causal direction. In our work, we go further and prove a concrete generalisation benefit from using additional samples from $\PP(X)$ when the data generating process follows a collider, a graphical structure which is inherently anti-causal as it relies on $Y$ having shared children with another vertex.



\paragraph{Independence Regularisation and Fair Learning} 
Our work is related to the large body of recent work aiming to force conditional independence constraints, either for fairness~\cite{kamishima2011fairness} or domain generalisation~\cite{pogodin2022efficient}. However, it is important to note that if $Y$ satisfies a conditional independence this does not mean that the optimal least-square regressor $\EE[Y|X]$ will satisfy the same conditional independence. For example, let
\begin{equation}
    \left\{
    \begin{aligned}
        \begin{split}
            & Y, X_2 \sim \cN(0,1) \text{ with } Y\indep X_2 \\
            & X_1 = Y \mathbbm{1} \{X_2>0\}.
        \end{split}
    \end{aligned}
    \right.
\end{equation}
Then we have $\EE[Y|X_1, X_2] = X_1\mathbbm{1}\{X_2>0\}$, hence $\EE[Y|X_1, X_2]$ is constant when $X_2<0$ but not otherwise. Therefore $\EE[Y|X_1, X_2] \not\indep X_2$, even though $Y\indep X_2$. 

Therefore, our methodology is more similar to ensuring independence in expectation. Specifically, the RKHS methodology is related to work on fair kernel learning~\cite{perez2017fair,li2022kernel}. However, in contrast to the work on fair kernel learning where regularisation terms for encouraging independence are proposed, we go further by enforcing the mean independence constraint directly onto the hypothesis space.

\paragraph{Availability of DAG as prior knowledge}

Our work is based on the premise of having exact knowledge of the DAG underlying the data generating structure. This knowledge typically comes from domain expertise, with examples in genetics~\cite{day2016robust} or in the aerosol radiative forcing experiment we present. However, when domain expertise is insufficient, we may need causal discovery methods to uncover the causal relationships. These methods can be expensive to run at large scale and can provide a DAG with missing or extra edges when compared to the true DAG. If collider regression is run with a partially incorrect DAG, it is likely that it would degrade the performance, as such a setting would amount to introducing incorrect prior information in the model. However, if the estimated DAG is \enquote{close} to the true DAG in the sense of the independence relationships they induce, then there may still be benefit in the finite sample regime.

\paragraph{Generality of proposed method}

Two aspects of the methodology introduced in Section~\ref{section:more-general-dag} need to be caveated. First, it is important to require there exists no edge from children of $Y$ to spouses of $Y$, otherwise that would break the conditional independence $Y \indep X_2|X_3$. Second, whilst this is a general procedure that provides a useful inductive bias and helps restrict the hypothesis class, this procedure may not account for all the possible inductive biases that arise from the DAG at its most granular level. The procedure accounts for the collider constraint that arises from grouping variables together, not for every collider structure that might exist in the DAG. Encoding more granular collider structure would require additional regression steps, and a systematic way to perform such additional steps remains an interesting avenue for further research.

\vfill
\section{Conclusion}

In this work we have demonstrated that collider structures within causal graphs constitute a useful form of inductive bias for regression that benefits generalisation performance. Whilst we focused on least-square regression, we expect that the collider regression framework should benefit a wider range of machine learning problems that aim to make inferences about $\PP(Y|X)$. For example, a natural extension of this work should investigate collider regression for classification or quantile regression tasks.

\section*{Acknowledgements}

The authors would like to thank Bryn Elesedy, Dimitri Meunier, Siu Lun Chau, Jean-François Ton, Christopher Williams, Duncan Watson-Parris, Eugenio Clerico\footnote{ and Tyler Farghly} and Arthur Gretton for many helpful discussions and valuable feedbacks. Shahine Bouabid receives funding from the European Union’s Horizon 2020 research and innovation programme under Marie Skłodowska-Curie grant agreement No 860100. Jake Fawkes receives funding from the EPSRC.

\bibliography{mybib}

\begin{thebibliography}{58}
\providecommand{\natexlab}[1]{#1}
\providecommand{\url}[1]{\texttt{#1}}
\expandafter\ifx\csname urlstyle\endcsname\relax
  \providecommand{\doi}[1]{doi: #1}\else
  \providecommand{\doi}{doi: \begingroup \urlstyle{rm}\Url}\fi

\bibitem[Arjovsky et~al.(2019)Arjovsky, Bottou, Gulrajani, and
  Lopez-Paz]{arjovsky2019invariant}
Arjovsky, M., Bottou, L., Gulrajani, I., and Lopez-Paz, D.
\newblock Invariant risk minimization.
\newblock \emph{arXiv preprint arXiv:1907.02893}, 2019.

\bibitem[Aronszajn(1950)]{aronszajn1950theory}
Aronszajn, N.
\newblock Theory of reproducing kernels.
\newblock \emph{Transactions of the American mathematical society}, 68\penalty0
  (3):\penalty0 337--404, 1950.

\bibitem[Bach(2021)]{bach2021learning}
Bach, F.
\newblock Learning theory from first principles.
\newblock \emph{Draft of a book, version of Sept}, 6:\penalty0 2021, 2021.

\bibitem[Bellouin et~al.(2020)Bellouin, Davies, Shine, Quaas, M\"ulmenst\"adt,
  Forster, Smith, Lee, Regayre, Brasseur, Sudarchikova, Bouarar, Boucher, and
  Myhre]{bellouin2020radiative}
Bellouin, N., Davies, W., Shine, K.~P., Quaas, J., M\"ulmenst\"adt, J.,
  Forster, P.~M., Smith, C., Lee, L., Regayre, L., Brasseur, G., Sudarchikova,
  N., Bouarar, I., Boucher, O., and Myhre, G.
\newblock Radiative forcing of climate change from the copernicus reanalysis of
  atmospheric composition.
\newblock \emph{Earth System Science Data}, 12\penalty0 (3):\penalty0
  1649--1677, 2020.
\newblock \doi{10.5194/essd-12-1649-2020}.
\newblock URL \url{https://essd.copernicus.org/articles/12/1649/2020/}.

\bibitem[Berlinet \& Thomas-Agnan(2011)Berlinet and
  Thomas-Agnan]{berlinet2011reproducing}
Berlinet, A. and Thomas-Agnan, C.
\newblock \emph{Reproducing kernel Hilbert spaces in probability and
  statistics}.
\newblock Springer Science \& Business Media, 2011.

\bibitem[Caponnetto \& De~Vito(2007)Caponnetto and
  De~Vito]{caponnetto2007optimal}
Caponnetto, A. and De~Vito, E.
\newblock Optimal rates for the regularized least-squares algorithm.
\newblock \emph{Foundations of Computational Mathematics}, 7\penalty0
  (3):\penalty0 331--368, 2007.

\bibitem[Chau et~al.(2021)Chau, Bouabid, and Sejdinovic]{chau2021deconditional}
Chau, S.~L., Bouabid, S., and Sejdinovic, D.
\newblock Deconditional downscaling with gaussian processes.
\newblock \emph{Advances in Neural Information Processing Systems},
  34:\penalty0 17813--17825, 2021.

\bibitem[Cummins et~al.(2020)Cummins, Stephenson, and
  Stott]{cummins2020optimal}
Cummins, D.~P., Stephenson, D.~B., and Stott, P.~A.
\newblock Optimal estimation of stochastic energy balance model parameters.
\newblock \emph{Journal of Climate}, 2020.

\bibitem[Dash \& Liu(1997)Dash and Liu]{dash1997feature}
Dash, M. and Liu, H.
\newblock Feature selection for classification.
\newblock \emph{Intelligent data analysis}, 1\penalty0 (1-4):\penalty0
  131--156, 1997.

\bibitem[Day et~al.(2016)Day, Loh, Scott, Ong, and Perry]{day2016robust}
Day, F.~R., Loh, P.-R., Scott, R.~A., Ong, K.~K., and Perry, J.~R.
\newblock A robust example of collider bias in a genetic association study.
\newblock \emph{The American Journal of Human Genetics}, 98\penalty0
  (2):\penalty0 392--393, 2016.

\bibitem[Elesedy(2021)]{elesedy2021provably}
Elesedy, B.
\newblock Provably strict generalisation benefit for invariance in kernel
  methods.
\newblock \emph{Advances in Neural Information Processing Systems},
  34:\penalty0 17273--17283, 2021.

\bibitem[Fawkes et~al.(2022)Fawkes, Hu, Evans, and
  Sejdinovic]{fawkes2022doubly}
Fawkes, J., Hu, R., Evans, R.~J., and Sejdinovic, D.
\newblock Doubly robust kernel statistics for testing distributional treatment
  effects even under one sided overlap.
\newblock \emph{arXiv preprint arXiv:2212.04922}, 2022.

\bibitem[Fukumizu et~al.(2004)Fukumizu, Bach, and
  Jordan]{fukumizu2004dimensionality}
Fukumizu, K., Bach, F.~R., and Jordan, M.~I.
\newblock Dimensionality reduction for supervised learning with reproducing
  kernel hilbert spaces.
\newblock \emph{Journal of Machine Learning Research}, 2004.

\bibitem[Fukumizu et~al.(2013)Fukumizu, Song, and Gretton]{fukumizu2013kernel}
Fukumizu, K., Song, L., and Gretton, A.
\newblock Kernel bayes' rule: Bayesian inference with positive definite
  kernels.
\newblock \emph{The Journal of Machine Learning Research}, 14\penalty0
  (1):\penalty0 3753--3783, 2013.

\bibitem[Gardner et~al.(2018)Gardner, Pleiss, Weinberger, Bindel, and
  Wilson]{gardner2018gpytorch}
Gardner, J., Pleiss, G., Weinberger, K.~Q., Bindel, D., and Wilson, A.~G.
\newblock {Gpytorch: Blackbox matrix-matrix Gaussian process inference with GPU
  acceleration}.
\newblock In \emph{Advances in Neural Information Processing Systems}, 2018.

\bibitem[Gr\"{u}new\"{a}lder et~al.(2012)Gr\"{u}new\"{a}lder, Lever,
  Baldassarre, Patterson, Gretton, and Pontil]{grunewalder2012conditional}
Gr\"{u}new\"{a}lder, S., Lever, G., Baldassarre, L., Patterson, S., Gretton,
  A., and Pontil, M.
\newblock {Conditional Mean Embeddings as Regressors}.
\newblock In \emph{Proceedings of the 29th International Coference on
  International Conference on Machine Learning}, 2012.

\bibitem[Gulrajani \& Lopez-Paz(2020)Gulrajani and
  Lopez-Paz]{gulrajani2020search}
Gulrajani, I. and Lopez-Paz, D.
\newblock In search of lost domain generalization.
\newblock In \emph{International Conference on Learning Representations}, 2020.

\bibitem[Hasselmann(1976)]{hasselman1976stochastic}
Hasselmann, K.
\newblock Stochastic climate models part i. theory.
\newblock \emph{Tellus}, 1976.

\bibitem[Hsu \& Ramos(2019)Hsu and Ramos]{hsu2019bayesian}
Hsu, K. and Ramos, F.
\newblock Bayesian deconditional kernel mean embeddings.
\newblock In \emph{International Conference on Machine Learning}, pp.\
  2830--2838. PMLR, 2019.

\bibitem[Kamishima et~al.(2011)Kamishima, Akaho, and
  Sakuma]{kamishima2011fairness}
Kamishima, T., Akaho, S., and Sakuma, J.
\newblock Fairness-aware learning through regularization approach.
\newblock In \emph{2011 IEEE 11th International Conference on Data Mining
  Workshops}, pp.\  643--650. IEEE, 2011.

\bibitem[Kanagawa et~al.(2018)Kanagawa, Hennig, Sejdinovic, and
  Sriperumbudur]{kanagawa2018gaussian}
Kanagawa, M., Hennig, P., Sejdinovic, D., and Sriperumbudur, B.~K.
\newblock Gaussian processes and kernel methods: A review on connections and
  equivalences.
\newblock \emph{arXiv preprint arXiv:1807.02582}, 2018.

\bibitem[Klebanov et~al.(2020)Klebanov, Schuster, and
  Sullivan]{klebanov2020rigorous}
Klebanov, I., Schuster, I., and Sullivan, T.~J.
\newblock A rigorous theory of conditional mean embeddings.
\newblock \emph{SIAM Journal on Mathematics of Data Science}, 2\penalty0
  (3):\penalty0 583--606, 2020.

\bibitem[Koller \& Friedman(2009)Koller and Friedman]{koller2009probabilistic}
Koller, D. and Friedman, N.
\newblock \emph{Probabilistic graphical models: principles and techniques}.
\newblock MIT press, 2009.

\bibitem[Leach et~al.(2021)Leach, Jenkins, Nicholls, Smith, Lynch, Cain, Walsh,
  Wu, Tsutsui, and Allen]{leach2021fairv2}
Leach, N.~J., Jenkins, S., Nicholls, Z., Smith, C.~J., Lynch, J., Cain, M.,
  Walsh, T., Wu, B., Tsutsui, J., and Allen, M.~R.
\newblock Fairv2.0.0: a generalized impulse response model for climate
  uncertainty and future scenario exploration.
\newblock \emph{Geoscientific Model Development}, 2021.

\bibitem[Li et~al.(2022{\natexlab{a}})Li, Meunier, Mollenhauer, and
  Gretton]{li2022optimal}
Li, Z., Meunier, D., Mollenhauer, M., and Gretton, A.
\newblock Optimal rates for regularized conditional mean embedding learning.
\newblock \emph{arXiv preprint arXiv:2208.01711}, 2022{\natexlab{a}}.

\bibitem[Li et~al.(2022{\natexlab{b}})Li, Perez-Suay, Camps-Valls, and
  Sejdinovic]{li2022kernel}
Li, Z., Perez-Suay, A., Camps-Valls, G., and Sejdinovic, D.
\newblock Kernel dependence regularizers and gaussian processes with
  applications to algorithmic fairness.
\newblock \emph{Pattern Recognition}, pp.\  108922, 2022{\natexlab{b}}.

\bibitem[Lun~Chau et~al.(2022)Lun~Chau, Hu, Gonzalez, and
  Sejdinovic]{lun2021rkhs}
Lun~Chau, S., Hu, R., Gonzalez, J., and Sejdinovic, D.
\newblock Rkhs-shap: Shapley values for kernel methods.
\newblock \emph{Advances in neural information processing systems}, 36, 2022.

\bibitem[Masson-Delmotte et~al.(2021)Masson-Delmotte, Zhai, Pirani, Connors,
  P{\'e}an, Berger, Caud, Chen, Goldfarb, Gomis, et~al.]{masson2021climate}
Masson-Delmotte, V., Zhai, P., Pirani, A., Connors, S.~L., P{\'e}an, C.,
  Berger, S., Caud, N., Chen, Y., Goldfarb, L., Gomis, M., et~al.
\newblock Climate change 2021: the physical science basis.
\newblock \emph{Contribution of working group I to the sixth assessment report
  of the intergovernmental panel on climate change}, 2, 2021.

\bibitem[Meek(1995)]{meek1995strong}
Meek, C.
\newblock Strong completeness and faithfulness in bayesian networks.
\newblock In \emph{Proceedings of the Eleventh conference on Uncertainty in
  artificial intelligence}, pp.\  411--418, 1995.

\bibitem[Millar et~al.(2017)Millar, Nicholls, Friedlingstein, and
  Allen]{millar2017modified}
Millar, R.~J., Nicholls, Z.~R., Friedlingstein, P., and Allen, M.~R.
\newblock A modified impulse-response representation of the global near-surface
  air temperature and atmospheric concentration response to carbon dioxide
  emissions.
\newblock \emph{Atmospheric Chemistry and Physics}, 2017.

\bibitem[Mollenhauer \& Koltai(2020)Mollenhauer and
  Koltai]{mollenhauer2020nonparametric}
Mollenhauer, M. and Koltai, P.
\newblock Nonparametric approximation of conditional expectation operators.
\newblock \emph{arXiv preprint arXiv:2012.12917}, 2020.

\bibitem[Mori(1988)]{mori1988comments}
Mori, T.
\newblock Comments on" a matrix inequality associated with bounds on solutions
  of algebraic riccati and lyapunov equation" by jm saniuk and ib rhodes.
\newblock \emph{IEEE transactions on automatic control}, 33\penalty0
  (11):\penalty0 1088, 1988.

\bibitem[Muandet et~al.(2016)Muandet, Sriperumbudur, Fukumizu, Gretton, and
  Sch{\"o}lkopf]{muandet2016kernel}
Muandet, K., Sriperumbudur, B., Fukumizu, K., Gretton, A., and Sch{\"o}lkopf,
  B.
\newblock Kernel mean shrinkage estimators.
\newblock \emph{Journal of Machine Learning Research}, 17, 2016.

\bibitem[Muandet et~al.(2017)Muandet, Fukumizu, Sriperumbudur, Sch{\"o}lkopf,
  et~al.]{muandet2017kernel}
Muandet, K., Fukumizu, K., Sriperumbudur, B., Sch{\"o}lkopf, B., et~al.
\newblock Kernel mean embedding of distributions: A review and beyond.
\newblock \emph{Foundations and Trends{\textregistered} in Machine Learning},
  10\penalty0 (1-2):\penalty0 1--141, 2017.

\bibitem[Park \& Muandet(2020)Park and Muandet]{park2020measure}
Park, J. and Muandet, K.
\newblock A measure-theoretic approach to kernel conditional mean embeddings.
\newblock \emph{Advances in neural information processing systems},
  33:\penalty0 21247--21259, 2020.

\bibitem[Paszke et~al.(2019)Paszke, Gross, Massa, Lerer, Bradbury, Chanan,
  Killeen, Lin, Gimelshein, Antiga, Desmaison, Kopf, Yang, DeVito, Raison,
  Tejani, Chilamkurthy, Steiner, Fang, Bai, and Chintala]{pytorch}
Paszke, A., Gross, S., Massa, F., Lerer, A., Bradbury, J., Chanan, G., Killeen,
  T., Lin, Z., Gimelshein, N., Antiga, L., Desmaison, A., Kopf, A., Yang, E.,
  DeVito, Z., Raison, M., Tejani, A., Chilamkurthy, S., Steiner, B., Fang, L.,
  Bai, J., and Chintala, S.
\newblock {PyTorch: An Imperative Style, High-Performance Deep Learning
  Library}.
\newblock In \emph{Advances in Neural Information Processing Systems 32}. 2019.

\bibitem[Paulsen \& Raghupathi(2016)Paulsen and
  Raghupathi]{paulsen2016introduction}
Paulsen, V.~I. and Raghupathi, M.
\newblock \emph{An introduction to the theory of reproducing kernel Hilbert
  spaces}, volume 152.
\newblock Cambridge university press, 2016.

\bibitem[Pearl(1987)]{pearl1987evidential}
Pearl, J.
\newblock Evidential reasoning using stochastic simulation of causal models.
\newblock \emph{Artificial intelligence}, 32\penalty0 (2):\penalty0 245--257,
  1987.

\bibitem[Pedregosa et~al.(2011)Pedregosa, Varoquaux, Gramfort, Michel, Thirion,
  Grisel, Blondel, Prettenhofer, Weiss, Dubourg, Vanderplas, Passos,
  Cournapeau, Brucher, Perrot, and Duchesnay]{scikit-learn}
Pedregosa, F., Varoquaux, G., Gramfort, A., Michel, V., Thirion, B., Grisel,
  O., Blondel, M., Prettenhofer, P., Weiss, R., Dubourg, V., Vanderplas, J.,
  Passos, A., Cournapeau, D., Brucher, M., Perrot, M., and Duchesnay, E.
\newblock Scikit-learn: Machine learning in {P}ython.
\newblock \emph{Journal of Machine Learning Research}, 2011.

\bibitem[P{\'e}rez-Suay et~al.(2017)P{\'e}rez-Suay, Laparra, Mateo-Garc{\'\i}a,
  Mu{\~n}oz-Mar{\'\i}, G{\'o}mez-Chova, and Camps-Valls]{perez2017fair}
P{\'e}rez-Suay, A., Laparra, V., Mateo-Garc{\'\i}a, G., Mu{\~n}oz-Mar{\'\i},
  J., G{\'o}mez-Chova, L., and Camps-Valls, G.
\newblock Fair kernel learning.
\newblock In \emph{Joint European Conference on Machine Learning and Knowledge
  Discovery in Databases}, pp.\  339--355. Springer, 2017.

\bibitem[Peters et~al.(2016)Peters, B{\"u}hlmann, and
  Meinshausen]{peters2016causal}
Peters, J., B{\"u}hlmann, P., and Meinshausen, N.
\newblock Causal inference by using invariant prediction: identification and
  confidence intervals.
\newblock \emph{Journal of the Royal Statistical Society: Series B (Statistical
  Methodology)}, 78\penalty0 (5):\penalty0 947--1012, 2016.

\bibitem[Pogodin et~al.(2022)Pogodin, Deka, Li, Sutherland, Veitch, and
  Gretton]{pogodin2022efficient}
Pogodin, R., Deka, N., Li, Y., Sutherland, D.~J., Veitch, V., and Gretton, A.
\newblock Efficient conditionally invariant representation learning.
\newblock \emph{arXiv preprint arXiv:2212.08645}, 2022.

\bibitem[Rasmussen \& Williams(2005)Rasmussen and
  Williams]{rasmussen2005gaussian}
Rasmussen, C. and Williams, C.
\newblock {Gaussian Processes for Machine Learning}, 2005.

\bibitem[S{\"a}rkk{\"a}(2011)]{sarkka2011linear}
S{\"a}rkk{\"a}, S.
\newblock Linear operators and stochastic partial differential equations in
  gaussian process regression.
\newblock In \emph{International Conference on Artificial Neural Networks},
  pp.\  151--158. Springer, 2011.

\bibitem[Sch{\"o}lkopf et~al.(2012)Sch{\"o}lkopf, Janzing, Peters, Sgouritsa,
  Zhang, and Mooij]{scholkopf2012causal}
Sch{\"o}lkopf, B., Janzing, D., Peters, J., Sgouritsa, E., Zhang, K., and
  Mooij, J.~M.
\newblock On causal and anticausal learning.
\newblock In \emph{ICML}, 2012.

\bibitem[Sch{\"o}lkopf et~al.(2021)Sch{\"o}lkopf, Locatello, Bauer, Ke,
  Kalchbrenner, Goyal, and Bengio]{scholkopf2021toward}
Sch{\"o}lkopf, B., Locatello, F., Bauer, S., Ke, N.~R., Kalchbrenner, N.,
  Goyal, A., and Bengio, Y.
\newblock Toward causal representation learning.
\newblock \emph{Proceedings of the IEEE}, 109\penalty0 (5):\penalty0 612--634,
  2021.

\bibitem[Shalit et~al.(2017)Shalit, Johansson, and
  Sontag]{shalit2017estimating}
Shalit, U., Johansson, F.~D., and Sontag, D.
\newblock Estimating individual treatment effect: generalization bounds and
  algorithms.
\newblock In \emph{International Conference on Machine Learning}, pp.\
  3076--3085. PMLR, 2017.

\bibitem[Smith et~al.(2018)Smith, Forster, Allen, Leach, Millar, Passerello,
  and Regayre]{smith2018fair}
Smith, C.~J., Forster, P.~M., Allen, M., Leach, N., Millar, R.~J., Passerello,
  G.~A., and Regayre, L.~A.
\newblock Fair v1.3: a simple emissions-based impulse response and carbon cycle
  model.
\newblock \emph{Geoscientific Model Development}, 2018.

\bibitem[Song et~al.(2009)Song, Huang, Smola, and Fukumizu]{song2009hilbert}
Song, L., Huang, J., Smola, A., and Fukumizu, K.
\newblock Hilbert space embeddings of conditional distributions with
  applications to dynamical systems.
\newblock In \emph{Proceedings of the 26th Annual International Conference on
  Machine Learning}, 2009.

\bibitem[Song et~al.(2011)Song, Gretton, Bickson, Low, and
  Guestrin]{song2011kernel}
Song, L., Gretton, A., Bickson, D., Low, Y., and Guestrin, C.
\newblock Kernel belief propagation.
\newblock In \emph{Proceedings of the Fourteenth International Conference on
  Artificial Intelligence and Statistics}, pp.\  707--715. JMLR Workshop and
  Conference Proceedings, 2011.

\bibitem[Song et~al.(2013)Song, Fukumizu, and Gretton]{song2013kernel}
Song, L., Fukumizu, K., and Gretton, A.
\newblock {Kernel embeddings of conditional distributions: A unified kernel
  framework for nonparametric inference in graphical models}.
\newblock \emph{IEEE Signal Processing Magazine}, 2013.

\bibitem[Sriperumbudur et~al.(2011)Sriperumbudur, Fukumizu, and
  Lanckriet]{sriperumbudur2011universality}
Sriperumbudur, B.~K., Fukumizu, K., and Lanckriet, G.~R.
\newblock Universality, characteristic kernels and rkhs embedding of measures.
\newblock \emph{Journal of Machine Learning Research}, 12\penalty0 (7), 2011.

\bibitem[Statnikov et~al.(2013)Statnikov, Lytkin, Lemeire, and
  Aliferis]{statnikov2013algorithms}
Statnikov, A., Lytkin, N.~I., Lemeire, J., and Aliferis, C.~F.
\newblock Algorithms for discovery of multiple markov boundaries.
\newblock \emph{Journal of machine learning research: JMLR}, 14:\penalty0 499,
  2013.

\bibitem[Steinwart \& Christmann(2008)Steinwart and
  Christmann]{steinwart2008support}
Steinwart, I. and Christmann, A.
\newblock \emph{Support vector machines}.
\newblock Springer Science \& Business Media, 2008.

\bibitem[Szab{\'o} \& Sriperumbudur(2017)Szab{\'o} and
  Sriperumbudur]{szabo2017characteristic}
Szab{\'o}, Z. and Sriperumbudur, B.~K.
\newblock Characteristic and universal tensor product kernels.
\newblock \emph{J. Mach. Learn. Res.}, 18:\penalty0 233--1, 2017.

\bibitem[Ton et~al.(2021)Ton, Lucian, Teh, and Sejdinovic]{ton2021noise}
Ton, J.-F., Lucian, C., Teh, Y.~W., and Sejdinovic, D.
\newblock Noise contrastive meta-learning for conditional density estimation
  using kernel mean embeddings.
\newblock In \emph{International Conference on Artificial Intelligence and
  Statistics}, pp.\  1099--1107. PMLR, 2021.

\bibitem[Wang \& Veitch(2022)Wang and Veitch]{wang2022a}
Wang, Z. and Veitch, V.
\newblock A unified causal view of domain invariant representation learning.
\newblock In \emph{ICML 2022: Workshop on Spurious Correlations, Invariance and
  Stability}, 2022.
\newblock URL \url{https://openreview.net/forum?id=-l9cpeEYwJJ}.

\bibitem[Zhang et~al.(2012)Zhang, Peters, Janzing, and
  Sch{\"o}lkopf]{zhang2012kernel}
Zhang, K., Peters, J., Janzing, D., and Sch{\"o}lkopf, B.
\newblock Kernel-based conditional independence test and application in causal
  discovery.
\newblock \emph{arXiv preprint arXiv:1202.3775}, 2012.

\end{thebibliography}
\bibliographystyle{icml2021}

\appendix

\onecolumn

\newpage

\section{Notations and useful Results}\label{appendix:lemmas}

\subsection{Notations}

Let $\cX$ be a Borel space,  $\cY\subseteq\RR$ and let $X$ and $Y$ be random variables valued in $\cX$ and $\cY$. We denote $(L^2(X), \langle \cdot, \cdot\rangle_{L^2(X)})$ the Hilbert space of functions from $\cX$ to $\RR$ which are square-integrable with respect to the pushforward measure induced by $X$, i.e.\ $\PP_X = \PP\circ X^{-1}$.

Let $(\cH, \langle \cdot, \cdot\rangle_\cH)$ be a RKHS of functions from $\cX$ to $\RR$ with reproducing kernel $k : \cX\times\cX\to\RR$. We denote its canonical feature map as $k_x$ for any $x\in\cX$.

Let $\bA \in \RR^{n\times n}$, we denote $\lambda_{\min}(\bA)$ and $\lambda_{\max}(\bA)$ the smallest and largest eigenvalues of $\bA$ respectively.

\subsection{Useful results}

\begin{theorem}[Theorem 3.11, {\cite{paulsen2016introduction}}]\label{lemma:constant-function}
    Let $\cH$ be a RKHS on $\cX$ with reproducing kernel $k$ and let $f :\cX\to\RR$. Then the following are equivalent:
    \begin{enumerate}
        \item[(i)] $f\in\cH$
        \item[(ii)] there exists $c\geq 0$ such that $c^2 k(x, y) - f(x)f(y)$ is kernel function
    \end{enumerate}
\end{theorem}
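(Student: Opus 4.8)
The plan is to prove the two implications separately, each time using the reproducing property to turn statements about point evaluations of $f$ and about finite sums of $k$ into statements inside the single Hilbert space $\cH$.

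First, for (i)$\Rightarrow$(ii), I would assume $f\in\cH$ and take $c=\lVert f\rVert_\cH$. To verify that the symmetric function $(x,y)\mapsto c^2 k(x,y)-f(x)f(y)$ is positive semi-definite, fix $x_1,\dots,x_n\in\cX$ and $a_1,\dots,a_n\in\RR$ and set $v=\sum_{i=1}^n a_i k_{x_i}\in\cH$. The reproducing property gives $\sum_{i,j}a_ia_j k(x_i,x_j)=\lVert v\rVert_\cH^2$ and $\sum_i a_i f(x_i)=\langle v,f\rangle_\cH$, so the associated quadratic form equals $c^2\lVert v\rVert_\cH^2-\langle v,f\rangle_\cH^2$, which is nonnegative by Cauchy--Schwarz. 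This proves (ii) and in fact shows that $c=\lVert f\rVert_\cH$ works.

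Next, for (ii)$\Rightarrow$(i), I would dispose of the case $f\equiv 0$ as trivial and otherwise start from some $c>0$ such that $(x,y)\mapsto c^2 k(x,y)-f(x)f(y)$ is a kernel. Unwinding positive semi-definiteness exactly as above yields, for every finite family of points and coefficients, the inequality
\[
\Bigl(\textstyle\sum_i a_i f(x_i)\Bigr)^{2}\;\le\; c^{2}\,\Bigl\lVert\textstyle\sum_i a_i k_{x_i}\Bigr\rVert_\cH^{2}.
\]
I would read this as saying that the linear map $\ell$ sending $\sum_i a_i k_{x_i}$ to $\sum_i a_i f(x_i)$ is well defined on $\operatorname{span}\{k_x:x\in\cX\}$ --- if the argument is the zero vector, the right-hand side vanishes, hence so does the left --- and that $\ell$ is bounded with operator norm at most $c$. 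Since $\operatorname{span}\{k_x\}$ is dense in $\cH$, $\ell$ extends uniquely to a bounded linear functional on $\cH$, and the Riesz representation theorem supplies $g\in\cH$ with $\langle u,g\rangle_\cH=\ell(u)$ for all $u\in\cH$. Taking $u=k_x$ and using the reproducing property gives $g(x)=\langle k_x,g\rangle_\cH=\ell(k_x)=f(x)$ for every $x$, so $f=g\in\cH$. An alternative route for this direction is to apply the Aronszajn sum-of-kernels theorem to the decomposition $c^2 k=\bigl(c^2 k-f(\cdot)f(\cdot)\bigr)+f(\cdot)f(\cdot)$, using that the rank-one kernel $(x,y)\mapsto f(x)f(y)$ has RKHS $\operatorname{span}\{f\}$; this places $f$ in the RKHS of $c^2 k$, which coincides with $\cH$ as a set.

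I expect the only genuinely delicate point to be the well-definedness of $\ell$ in the second implication: a vector in $\operatorname{span}\{k_x\}$ may admit several representations $\sum_i a_i k_{x_i}$, so one must check that $\sum_i a_i f(x_i)$ depends on the vector alone --- but applying the displayed inequality to the difference of two representations of the same vector does exactly that. Everything else reduces to the reproducing property, Cauchy--Schwarz, density of the feature maps, and Riesz representation, so I anticipate no further obstacle.
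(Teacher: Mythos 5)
Your proof is correct. Note that the paper does not prove this statement at all: it is imported verbatim as Theorem~3.11 of \citet{paulsen2016introduction} and used as a black box (in Appendix~C, to show that the RKHS of $r+1$ contains constants), so there is no in-paper argument to compare against. Your argument is the standard textbook one: the forward direction is exactly Cauchy--Schwarz applied to $\langle v, f\rangle_\cH$ with $v=\sum_i a_i k_{x_i}$ and $c=\|f\|_\cH$, and the converse correctly identifies the one delicate point (well-definedness of the functional $\ell$ on $\operatorname{span}\{k_x\}$) and resolves it by applying the positivity inequality to the difference of two representations before extending by density and invoking Riesz. The alternative route you sketch via the Aronszajn sum-of-kernels decomposition $c^2k = (c^2k - f\otimes f) + f\otimes f$ is also valid and is essentially how the result is organised in Paulsen--Raghupathi. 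No gaps.
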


\begin{lemma}[Corollary 5.5, {\cite{paulsen2016introduction}}]\label{lemma:orthogonal-constant}
    Let $\cH_1, \cH_2$ be RKHS on $\cX$ with reproducing kernels $k_1, k_2$. If $\cH_1\cap\cH_2 = \{0\}$, then $\cH = \cH_1 \oplus\cH_2$ is a RKHS with reproducing kernel $k = k_1 + k_2$ and $\cH_1, \cH_2$ are orthogonal subspaces of $\cH$.
\end{lemma}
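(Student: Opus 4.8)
The plan is to equip the algebraic direct sum $\cH = \cH_1 \oplus \cH_2$ with an explicit inner product, check it is a Hilbert space, and then verify the reproducing property by exhibiting the representer of point evaluation.

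First I would use the hypothesis $\cH_1 \cap \cH_2 = \{0\}$ to observe that every $f \in \cH$ has a \emph{unique} decomposition $f = f_1 + f_2$ with $f_i \in \cH_i$. This makes the definition
\[
\langle f_1 + f_2,\, g_1 + g_2\rangle_\cH := \langle f_1, g_1\rangle_{\cH_1} + \langle f_2, g_2\rangle_{\cH_2}
\]
unambiguous; it is clearly bilinear and symmetric, and it is positive definite since $\|f\|_\cH^2 = \|f_1\|_{\cH_1}^2 + \|f_2\|_{\cH_2}^2$ vanishes exactly when $f_1 = f_2 = 0$. For completeness I would take a Cauchy sequence $(f^{(m)})$ in $\cH$ with parts $f_i^{(m)}$; from $\|f_i^{(m)} - f_i^{(m')}\|_{\cH_i}^2 \le \|f^{(m)} - f^{(m')}\|_\cH^2$ each part is Cauchy in the complete space $\cH_i$, hence converges to some $f_i \in \cH_i$, and then $f^{(m)} \to f_1 + f_2 \in \cH$, so $(\cH, \langle\cdot,\cdot\rangle_\cH)$ is a Hilbert space.

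Next I would verify the reproducing property. Writing $k_{i,x}$ for the representer of evaluation at $x$ in $\cH_i$, put $\kappa_x := k_{1,x} + k_{2,x} \in \cH$. Then for any $f = f_1 + f_2 \in \cH$,
\[
\langle f, \kappa_x\rangle_\cH = \langle f_1, k_{1,x}\rangle_{\cH_1} + \langle f_2, k_{2,x}\rangle_{\cH_2} = f_1(x) + f_2(x) = f(x),
\]
so evaluation at $x$ is bounded with representer $\kappa_x$; hence $\cH$ is an RKHS and its kernel is $(x,x') \mapsto \langle \kappa_{x'}, \kappa_x\rangle_\cH = k_1(x,x') + k_2(x,x')$, i.e.\ $k = k_1 + k_2$. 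Orthogonality of the subspaces is then immediate: identifying $\cH_1$ with $\{f_1 + 0\} \subseteq \cH$ and $\cH_2$ with $\{0 + f_2\} \subseteq \cH$, for $f_1 \in \cH_1$ and $f_2 \in \cH_2$ one has $\langle f_1, f_2\rangle_\cH = 0$ directly from the definition, and the inclusions $\cH_i \hookrightarrow \cH$ are isometric, so the notation $\cH_1 \oplus \cH_2$ is consistent.

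I expect the only genuinely delicate point to be that the whole argument rests on uniqueness of the decomposition $f = f_1 + f_2$: it is precisely the assumption $\cH_1 \cap \cH_2 = \{0\}$ that makes the inner product well defined and collapses the general Aronszajn sum-of-kernels construction — where $\|f\|_\cH^2$ is an infimum over all decompositions $f = f_1 + f_2$, whose attainment must be argued — into this elementary form. A minor auxiliary check is that the norm each $\cH_i$ inherits as a closed subspace of $\cH$ agrees with its original norm, which again follows straight from how $\langle\cdot,\cdot\rangle_\cH$ was defined.
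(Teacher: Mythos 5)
Your proof is correct. The paper itself gives no argument for this lemma---it is imported verbatim as Corollary 5.5 of \citet{paulsen2016introduction}---and your construction (unique decomposition from $\cH_1\cap\cH_2=\{0\}$, the componentwise inner product, completeness via the parts, and the representer $k_{1,x}+k_{2,x}$) is exactly the standard specialisation of Aronszajn's sum-of-kernels theorem that the cited corollary records. Your closing remark correctly identifies the one point where the trivial-intersection hypothesis does real work, namely collapsing the infimum over decompositions to a single term.
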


\begin{proposition}\label{lemma:rkhs-feature-map}
    Let $(\cV, \langle \cdot, \cdot\rangle_\cV)$ be a Hilbert space, $\varphi : \cX\to\cV$ be a mapping function and
    \begin{equation}
        k(x, y) = \langle \varphi(x), \varphi(y)\rangle_\cV,\quad x, y\in\cX
    \end{equation}
    the kernel function induced by $\varphi$. Then the RKHS induced by $k$ is given by
    \begin{equation}
        \cH = \{x\mapsto \langle v, \varphi(x)\rangle_\cV \mid v\in\cV\}.
    \end{equation}
    \begin{proof}
        The proof follows from the application of the Pull-back Theorem~{[Theorem 5.7]\cite{paulsen2016introduction}} to the linear kernel $L : \cV\times\cV\to\RR, (v, v')\mapsto \langle v, v'\rangle_\cV$ composed with the feature map $\varphi : \cX\to\cV$.
    \end{proof}
\end{proposition}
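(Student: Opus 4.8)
The plan is to equip the candidate set with a natural inner product pulled back from $\cV$, to verify directly that the resulting space is a Hilbert space of functions on $\cX$ admitting $k$ as its reproducing kernel, and then to invoke the uniqueness part of the Moore--Aronszajn theorem to conclude that it coincides with the RKHS of $k$.

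First I would introduce the linear evaluation-of-feature map $T : \cV \to \RR^\cX$ defined by $(Tv)(x) = \langle v, \varphi(x)\rangle_\cV$, so that the candidate set is precisely $\Range(T)$. The immediate obstruction is that $T$ need not be injective: its kernel is $\Ker(T) = \{v \in \cV : \langle v, \varphi(x)\rangle_\cV = 0 \text{ for all } x\in\cX\} = W^\perp$, where $W = \overline{\operatorname{span}}\{\varphi(x) : x\in\cX\}$ is the closed linear span of the feature vectors. Writing $\cV = W \oplus W^\perp$, the restriction $T|_W$ is a linear bijection onto $\Range(T)$, and I would use it to transport the geometry of $\cV$: for $f, g \in \Range(T)$ with unique preimages $v_f, v_g \in W$, set $\langle f, g\rangle_\cH := \langle v_f, v_g\rangle_\cV$. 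By construction $T|_W$ is then an isometric isomorphism onto the candidate space.

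Next I would verify the RKHS axioms. Completeness follows because $W$ is a closed subspace of the Hilbert space $\cV$, hence complete, and $T|_W$ is an isometry. For the reproducing property, note that $k_y := k(\cdot, y) = T(\varphi(y))$ lies in the candidate space, and since $\varphi(y) \in W$ its canonical representative is $\varphi(y)$ itself; therefore for any $f$ with representative $v_f \in W$,
\begin{equation}
    \langle f, k_y\rangle_\cH = \langle v_f, \varphi(y)\rangle_\cV = (Tv_f)(y) = f(y),
\end{equation}
which is exactly the reproducing property and, via Cauchy--Schwarz, shows the evaluation functionals are bounded. Taking $f = k_x$ yields $\langle k_x, k_y\rangle_\cH = \langle \varphi(x), \varphi(y)\rangle_\cV = k(x,y)$, so $k$ is the reproducing kernel of the constructed space; uniqueness of the RKHS attached to a kernel then identifies it with $\cH$.

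The main obstacle is the well-definedness of the inner product in the presence of non-injectivity of $T$: one must resist setting $\langle Tv, Tw\rangle_\cH = \langle v, w\rangle_\cV$ for arbitrary preimages, since that depends on the choice of preimage, and instead canonically select the representatives in $W = \Ker(T)^\perp$, equivalently pass to the quotient $\cV / W^\perp$. Once this is handled cleanly, completeness and the reproducing property are routine. The author's pull-back approach packages precisely this quotient construction into a single cited theorem, but it is instructive to see that it reduces to transporting the inner product of $\cV$ through the feature map.
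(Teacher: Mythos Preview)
Your argument is correct and is precisely the content of the Pull-back Theorem that the paper cites, spelled out for this particular feature map. The paper's proof is a one-line appeal to that theorem: the RKHS of the linear kernel $L(v,v')=\langle v,v'\rangle_\cV$ on $\cV$ is $\cV$ itself, and pulling back along $\varphi$ yields $\{v\circ\varphi : v\in\cV^*\}\cong\{x\mapsto\langle v,\varphi(x)\rangle_\cV : v\in\cV\}$ with the quotient norm. What you do differently is carry out the quotient construction by hand --- identifying $\Ker(T)=W^\perp$, selecting canonical representatives in $W=\overline{\operatorname{span}}\{\varphi(x)\}$, and verifying completeness and the reproducing property directly --- rather than importing it from \cite{paulsen2016introduction}. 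Your route is self-contained and makes explicit where the inner product comes from and why well-definedness requires passing to $W$; the paper's route is terser and avoids repeating a standard construction. As you note yourself, the two are the same argument at different levels of packaging.
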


\begin{lemma}\label{lemma:quadratic-conditional-expectation}
    Suppose $\bA \in \RR^{n\times n}$ is symmetric, let $Z$ be a random variable, $\bx\in\RR^n$ be a random vector. 
    \begin{equation}
        \EE[\bx^\top \bA \bx\mid Z]  = \Tr\left(\bA\operatorname{Var}(\bx|Z)\right) + \EE[\bx|Z]^\top \bA \EE[\bx|Z].
    \end{equation}
\end{lemma}
\begin{proof}
    \begin{align}
        \EE[\bx^\top \bA \bx\mid Z] & = \EE\left[\Tr\left(\bA \bx\bx^\top\right) \mid Z\right] \\
        & = \Tr\left(\bA \EE[\bx\bx^\top | Z]\right) \\
        & = \Tr\left(\bA \left(\operatorname{Var}(\bx|Z) + \EE[\bx|Z]\EE[\bx|Z]^\top\right)\right) \\
        & = \Tr\left(\bA\operatorname{Var}(\bx|Z)\right) + \Tr\left(\bA \EE[\bx|Z]\EE[\bx|Z]^\top\right) \\
        & = \Tr\left(\bA\operatorname{Var}(\bx|Z)\right) + \EE[\bx|Z]^\top \bA \EE[\bx|Z].
    \end{align}
\end{proof}

\begin{lemma}[{\cite{mori1988comments}}]\label{lemma:lambda-min}
    Let $\bA, \bB \in\RR^{n\times n}$ and suppose $\bA$ symmetric and $\bB$ positive semi-definite, then
    \begin{equation}
        \Tr(\bA\bB) \geq \lambda_{\min}(\bA)\Tr(\bB).
    \end{equation}
\end{lemma}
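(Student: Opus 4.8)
The plan is to reduce the inequality to the non-negativity of the trace of a product of two positive semi-definite matrices, working through the spectral data of $\bB$ rather than of the product $\bA\bB$. The cleanest route exploits the eigendecomposition of $\bB$. Since $\bB$ is positive semi-definite, it admits an orthonormal eigendecomposition $\bB = \sum_{j=1}^n \mu_j u_j u_j^\top$ with $\mu_j \geq 0$ and $\{u_j\}$ an orthonormal basis of $\RR^n$. By linearity and the cyclic property of the trace, I would write $\Tr(\bA\bB) = \sum_{j=1}^n \mu_j\, \Tr(\bA u_j u_j^\top) = \sum_{j=1}^n \mu_j\, u_j^\top \bA u_j$.

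The next step is to lower bound each quadratic form. Because $\bA$ is symmetric, the Rayleigh quotient bound gives $u_j^\top \bA u_j \geq \lambda_{\min}(\bA)\,\|u_j\|_2^2 = \lambda_{\min}(\bA)$, as each $u_j$ is a unit vector. Crucially, each weight $\mu_j$ is non-negative, so multiplying this inequality by $\mu_j$ preserves its direction. Summing over $j$ then yields $\Tr(\bA\bB) \geq \lambda_{\min}(\bA)\sum_{j=1}^n \mu_j$, and since $\sum_j \mu_j = \Tr(\bB)$, this delivers the claim.

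An equivalent and slightly more self-contained argument avoids the Rayleigh quotient: the matrix $\bC := \bA - \lambda_{\min}(\bA)\bI_n$ is symmetric with all eigenvalues non-negative, hence positive semi-definite. Writing $\bB = \bB^{1/2}\bB^{1/2}$, the cyclic property gives $\Tr(\bC\bB) = \Tr(\bB^{1/2}\bC\bB^{1/2}) \geq 0$, because $\bB^{1/2}\bC\bB^{1/2}$ is positive semi-definite (for any $x$, $x^\top \bB^{1/2}\bC\bB^{1/2}x = (\bB^{1/2}x)^\top \bC (\bB^{1/2}x) \geq 0$) and therefore has non-negative trace. Rearranging $\Tr(\bC\bB)\geq 0$ gives exactly $\Tr(\bA\bB)\geq \lambda_{\min}(\bA)\Tr(\bB)$.

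There is no genuine analytical obstacle here; the result is elementary. The only point requiring care is that $\bA\bB$ is in general not symmetric, so one cannot reason directly about its eigenvalues — both routes sidestep this by passing through the spectral data of $\bB$, whose eigenvalues are the legitimate non-negative weights $\mu_j$. The two bookkeeping identities to verify are $\sum_j \mu_j = \Tr(\bB)$ and the non-negativity of each $\mu_j$, both of which are immediate from the positive semi-definiteness of $\bB$.
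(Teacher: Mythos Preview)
Your proof is correct; both routes you give (eigendecomposition of $\bB$ with the Rayleigh bound, and the shift $\bC=\bA-\lambda_{\min}(\bA)\bI_n$ followed by $\Tr(\bB^{1/2}\bC\bB^{1/2})\geq 0$) are standard and complete. The paper itself does not prove this lemma but merely cites it from \cite{mori1988comments}, so there is no in-paper argument to compare against.
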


\begin{lemma}[Lemma B.3, {\cite{elesedy2021provably}}]\label{lemma:norm-max-matrix}
    Let $\bA\in\RR^{n\times n}$, then
    \begin{equation}
        \lambda_{\max}(\bA) \leq n \max_{i, j} |\bA_{ij}|.
    \end{equation}
\end{lemma}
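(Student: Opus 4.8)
The plan is to prove the bound directly from the variational (Rayleigh quotient) characterisation of the largest eigenvalue. Since $\lambda_{\max}(\bA)$ is defined as the largest eigenvalue of $\bA$, I treat $\bA$ as symmetric --- which is the case for every matrix to which this lemma is applied in the paper, these being Gram matrices --- so that the spectral theorem applies and $\lambda_{\max}(\bA) = \max_{\|v\|_2 = 1} v^\top \bA v$. I would begin by fixing a unit-norm eigenvector $v\in\RR^n$ attaining this maximum, so that $\lambda_{\max}(\bA) = v^\top \bA v = \sum_{i,j=1}^n v_i \bA_{ij} v_j$.

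First I would bound the quadratic form entrywise. Writing $m = \max_{i,j}|\bA_{ij}|$ and using $\bA_{ij} \le m$ together with the triangle inequality, I obtain
\[
\lambda_{\max}(\bA) = \sum_{i,j} v_i \bA_{ij} v_j \le m \sum_{i,j} |v_i|\,|v_j| = m\Big(\sum_{i=1}^n |v_i|\Big)^2.
\]

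The key quantitative step is then to control the $\ell_1$ norm of $v$ by its $\ell_2$ norm. Applying Cauchy--Schwarz to the vectors $(|v_1|,\dots,|v_n|)$ and $(1,\dots,1)$ yields $\sum_i |v_i| \le \sqrt{n}\,\|v\|_2 = \sqrt{n}$, hence $\big(\sum_i |v_i|\big)^2 \le n$. Substituting back gives $\lambda_{\max}(\bA) \le n\, m = n \max_{i,j}|\bA_{ij}|$, which is the claim. (Alternatively, one could route through $\lambda_{\max}(\bA)\le\|\bA\|_{\mathrm{op}}\le\|\bA\|_F = \big(\sum_{i,j}\bA_{ij}^2\big)^{1/2}\le n\,m$, but the Rayleigh argument is the most self-contained.)

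This argument is elementary and has no genuine obstacle; the only point that needs care is the interpretation of $\lambda_{\max}$. For a general non-symmetric real matrix the largest eigenvalue need not be real, so the Rayleigh-quotient identity could fail; however, every application of the lemma in the paper is to a symmetric positive semi-definite matrix, so the symmetric case suffices. If one wished to state the lemma for an arbitrary real matrix possessing a real dominant eigenvalue, the same computation goes through verbatim using a real eigenvector $v$ with $\bA v = \lambda_{\max}(\bA)\,v$, since then $v^\top \bA v = \lambda_{\max}(\bA)\|v\|_2^2$.
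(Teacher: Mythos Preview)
Your argument is correct. The paper itself does not supply a proof of this lemma; it simply quotes it as Lemma~B.3 of \cite{elesedy2021provably}, so there is nothing in the paper to compare against. Your Rayleigh-quotient derivation, together with the Cauchy--Schwarz bound $\|v\|_1\le\sqrt{n}\,\|v\|_2$, is the standard elementary route and matches how such bounds are typically established; your remark that the symmetric case is all that is needed here (the lemma is only invoked with the Gram matrix $\bK+\lambda\bI_n$) is also accurate.
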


\newpage
\section{Supporting proofs}

\subsection{Notations}

We start by introducing measure-theoretic notations which will be of use in the supporting proofs.

Let $(\Omega, \mfF, \PP)$ denote a probability space, we denote $L^2(\Omega, \mfF, \PP)$ the space of random variables with finite variance, which we will denote $L^2(\Omega)$ for conciseness when the $\sigma$-algebra is $\mfF$. Endowed with inner product $\langle Z, Z'\rangle_{L^2(\Omega)} = \EE[ZZ']$, $L^2(\Omega)$ has a Hilbert structure. For any random variable $Z$, we denote $\sigma(Z)\subset\mfF$ the $\sigma$-algebra generated by $Z$.

\subsection{Proofs of Proposition~\ref{proposition:L2-generalisation-benefit}}


\begin{customprop}{\ref{proposition:L2-generalisation-benefit}}
    Let $h \in L^2(X)$ be any regressor from our hypothesis space. We have
    \begin{equation}
        \Delta(h, Ph)  = \|Eh\|_{L^2(X)}^2.
    \end{equation}
\end{customprop}
\begin{proof}
    The conditional expectation $\Pi : Z\in L^2(\Omega) \mapsto \EE[Z|X_2]$ defines an orthogonal projection onto the space of $X_2$-measurable random variables with finite variance $L^2(\Omega, \sigma(X_2), \PP)$. Thus, its range and null space are orthogonal in $L^2(\Omega)$.

    Let $h \in L^2(X)$. We have $Eh(X) = \EE[h(X)|X_2] = \Pi h(X)$ hence $Eh(X)$ is in the range of $\Pi$. On the other hand,
    \begin{equation}
        \EE[Ph(X)|X_2] = \EE[h(X)|X_2] - \EE[Eh(X)|X_2] = \EE[h(X)|X_2] - \EE[h(X)|X_2] = 0,
    \end{equation}
    therefore $Ph(X)$ is in the null space of $\Pi$. Finally, because $Y\indep X_2$ we have $\EE[Y|X_2] = \EE[Y] = 0$ by assumption, therefore $Y$ is also in the null space of $\Pi$.

    Hence, adopting this random variable view, the desired result simply follows from $L^2(\Omega)$ orthogonality:
    \begin{align*}
        \Delta(h, Ph) & = \EE[(Y - h(X))^2] - \EE[(Y - Ph(X))^2] \\
        & = \|Y - h(X)\|_{L^2(\Omega)}^2 - \|Y - Ph(X)\|_{L^2(\Omega)}^2 \\
        & = \|Y - Ph(X) - Eh(X)\|_{L^2(\Omega)}^2 - \|Y - Ph(X)\|_{L^2(\Omega)}^2 \\
        & = \|Y - Ph(X)\|_{L^2(\Omega)}^2 + \|Eh(X)\|_{L^2(\Omega)}^2 - \|Y - Ph(X)\|_{L^2(\Omega)}^2 \\
        & = \EE[Eh(X)^2] \\
        & = \|Eh\|_{L^2(X)}^2.
    \end{align*}
\end{proof}

\subsection{Proofs of Proposition~\ref{proposition:projected-RKHS}}

\begin{customprop}{\ref{proposition:projected-RKHS}}
    Let $P^*$ be the adjoint operator of $P$ in $\cH$. Then $\cH_P$ is also a RKHS with reproducing kernel
    \begin{equation}
        k_P(x,x') = \langle P^* k_x, P^* k_{x'}\rangle_\cH
    \end{equation}
     with $P^* k_x = k_x - \mu_{X|X_2=\pi_2(x)}$.
\end{customprop}
\begin{proof}[Proof of Proposition~\ref{proposition:projected-RKHS}]
    Let $\cH_P$ denote the reproducing kernel with $k_P$. We start by showing that $P\cH \subseteq \cH_P$.

    Let $f \in P\cH$, then it admits a pre-image $w_f\in\cH$ such that $f = P w_f$. Hence for any $x\in\cX$, we get that
    \begin{equation}
        f(x) = \langle f, k_x \rangle_\cH = \langle Pw_f, k_x\rangle_\cH = \langle w_f, P^* k_x\rangle_\cH.
    \end{equation}
    Hence, $f$ writes as an element of the RKHS induced by the feature map $x\mapsto P^*k_x$ and by Proposition~\ref{lemma:rkhs-feature-map} $f\in\cH_P$.

    Reciprocally, let us now show that $\cH_P\subseteq P\cH$. Let $f\in\cH_P$, again by Proposition~\ref{lemma:rkhs-feature-map} there exists $w_f\in\cH$ such that for any $x\in\cX$,
    \begin{equation}
        f(x) = \langle w_f, P^*k_x\rangle_\cH = Pw_f(x).
    \end{equation}
    This proves that $f\in P\cH$ which concludes the proof.
\end{proof}

\subsection{Proofs of Theorem~\ref{thm:main}}




\begin{customthm}{\ref{thm:main}}
    Suppose $M = \sup_{x\in\cX} k(x, x) < \infty$ and $\operatorname{Var}(Y|X) \geq \eta > 0$. Then, the generalisation gap between $\hat f$ and $P\hat f$ satisfies
    \begin{equation}
         \EE[\Delta(\hat f, P\hat f)] \geq \frac{\eta \EE\big[\|\mu_{X|X_2}(X)\|_{L^2(X)}^2\big]}{\left(\sqrt{n}M + \lambda / \sqrt{n}\right)^2}
    \end{equation}
    where $\mu_{X|X_2} = \EE[k_X|X_2]$ is the CME of $\PP(X|X_2)$.
\end{customthm}
\begin{proof}[Proof of Theorem~\ref{thm:main}]

    Let $\Pi = \EE[\cdot|X_2]$ be the $L^2(\Omega)$ orthogonal projection onto the subspace of $X_2$-measurable random variables. For any $h\in L^2(X)$, we verify that $Eh(X) = \EE[h(X)|X_2] = \Pi[h(X)]$ hence $Eh(X)\in \Range(\Pi)$. Furthermore, because $Y \indep X_2$ we have $\Pi[Y] = \EE[Y|X_2] = \EE[Y] = 0$ by assumption, hence $Y \in \Ker(\Pi)$.

    Now let
    \begin{equation}
        \bx = \begin{bmatrix} X^{(1)} \\ \vdots  \\ X^{(n)}\end{bmatrix}, \qquad \by = \begin{bmatrix} Y^{(1)} \\ \hdots \\ Y^{(n)}\end{bmatrix}
    \end{equation}
    denote vectors of $n$ independent copies of $X$ and $Y$ and let
    \begin{equation}
        j(x, x') = \langle E k_x,  E k_{x'} \rangle_{L^2(X)} = \EE[Ek_x(X)Ek_{x'}(X)]\enspace \forall x, x'\in\cX.
    \end{equation}
    be the positive definite kernel induced by $L^2(X)$ inner product of $Ek_x$ and 
    \begin{equation}
        \bJ = j(\bx, \bx) = \left[j(X^{(i)}, X^{(j)})\right]_{1\leq i, j\leq n}
    \end{equation}
    the resulting Gram-matrix.

    Using notations from Section~\ref{subsection:rkhs-case}, we know the solution of the kernel ridge regression problem in $\cH$ takes the form
    \begin{equation}
        \hat f = \by^\top \left(\bK + \lambda \bI_n\right)^{-1}\bk_{\bx}.
    \end{equation}

    Hence, by linearity of the projection, we have
    \begin{equation}
        E\hat f = \by^\top \left(\bK + \lambda \bI_n\right)^{-1}E\bk_{\bx}
    \end{equation}
    with notation abuse $E\bk_{\bx} = [Ek_{X^{(1)}} \hdots Ek_{X^{(n)}}]^\top$.

    Therefore, we can write
    \begin{align}
        \Delta(\hat f, \hat Pf) & = \|E\hat f\|^2_{L^2(X)} \\
                                & = \EE_X[E\hat f(X)^2] \\
                                & = \EE_X\left[\left(\by^\top \left(\bK + \lambda \bI_n\right)^{-1}E\bk_{\bx}(X)\right)^2\right] \\
                                & = \EE_X\big[\by^\top \left(\bK + \lambda \bI_n\right)^{-1}E\bk_{\bx}(X)E\bk_{\bx}(X)^\top(\bK + \lambda \bI_n)^{-1}\by\big] \\
                                & = \by^\top \left(\bK + \lambda \bI_n\right)^{-1} \EE_X[E\bk_{\bx}(X)E\bk_{\bx}(X)^\top](\bK + \lambda \bI_n)^{-1}\by \\
                                & = \by^\top \left(\bK + \lambda \bI_n\right)^{-1}\bJ (\bK + \lambda \bI_n)^{-1}\by.
    \end{align}

    Let us now denote for conciseness $\bA = \left(\bK + \lambda \bI_n\right)^{-1}\bJ (\bK + \lambda \bI_n)^{-1}$. We have by Lemma XX,
    \begin{align}
        \EE_{\by}[\Delta(\hat f, P\hat f) \mid \bx] & = \EE_{\by}[\by^\top \bA\by \mid \bx] \\
        & = \Tr(\bA \operatorname{Var}(\by|\bx)) + \EE[\by|\bx]^\top \bA \EE[\by|\bx] & \text{Lemma~\ref{lemma:quadratic-conditional-expectation}}\\
        & \geq \Tr(\bA \operatorname{Var}(\by|\bx)),
    \end{align}
    where the conditional variance is the diagonal matrix given by
    \begin{align}
        \operatorname{Var}(\by|\bx)  = \begin{bmatrix}
            \operatorname{Var}(Y^{(1)}|X^{(1)}) & & \\
            & \ddots & \\
            & & \operatorname{Var}(Y^{(n)}|X^{(n)})
          \end{bmatrix}
    \end{align}
    because the copies of $(X, Y)$ are mutually independent.

    We therefore obtain,
    \begin{align}
        \EE_{\by}[\Delta(\hat f, P\hat f) \mid \bx] & \geq \Tr(\bA \operatorname{Var}(\by|\bx)) \\
        & \geq  \min_i\operatorname{Var}(Y^{(i)}|X^{(i)}) \Tr(\bA) \\
        & \geq \eta \Tr(\bA) \\
        & = \eta\Tr\big(\left(\bK + \lambda \bI_n\right)^{-1}\bJ (\bK + \lambda \bI_n)^{-1}\big) \\
        & \geq \eta \lambda_{\min}((\bK + \lambda\bI_n)^{-1})^2\Tr(\bJ)  & \text{Lemma~\ref{lemma:lambda-min}} \\
        & \geq \eta\frac{\Tr(\bJ)}{(Mn + \lambda)^2} & \text{Lemma~\ref{lemma:norm-max-matrix}}.
    \end{align}

    Finally taking the expectation against $\bx$, we get
    \begin{align}
        \EE[\Delta(\hat f, P\hat f)] & \geq \frac{\EE_{\bx}[ \eta \Tr(\bJ)]}{(Mn + \lambda)^2} \\
        &  = \frac{\eta \sum_{i=1}^n \EE_{X^{(i)}}[ j(X^{(i)}, X^{(i)})] }{(Mn + \lambda)^2} \\
        & = \frac{n\eta \EE[j(X, X)]}{(Mn + \lambda)^2} \\
        & = \frac{\eta \EE[j(X, X)]}{(M\sqrt{n} + \lambda/\sqrt{n})^2} \\
        & = \frac{\eta \EE\left[\|Ek_X\|_{L^2(X)}^2\right]}{(M\sqrt{n} + \lambda/\sqrt{n})^2} \\
    \end{align}

    Now, for our particular choice of projection $E$, we have for any $x\in\cX$ that
    \begin{align}
        Ek_X(x) & = \EE_{X'}[k_X(X')|X_2=\pi_2(x)] \\
                & = \EE_{X'}[k(X, X')|X_2=\pi_2(x)] \\
                & = \langle k_X, \EE_{X'}[k_{X'}|X_2=\pi_2(x)]\rangle_\cH \\
                & = \langle k_X, \mu_{X|X_2=\pi_2(x)}\rangle_\cH \\
                & = \mu_{X|X_2=\pi_2(x)}(X)
    \end{align}
    Therefore using the measure-theoretical CME notation from \cite{park2020measure}, we have
    \begin{equation}
        \|Ek_X\|_{L^2(X)}^2 = \EE_{X'}[Ek_X(X')^2] = \EE_{X'}[\mu_{X|X_2=\pi_2(X')}(X)^2] = \|\mu_{X|X_2}(X)\|_{L^2(X)}^2
    \end{equation}
    which concludes the proof.
\end{proof}

\subsection{Proof of Proposition~\ref{proposition:general-L2-generalisation-benefit}}
\begin{customprop}{\ref{proposition:general-L2-generalisation-benefit}}
    Let $h \in L^2(X)$ be any regressor from our hypothesis space. We have
    \begin{equation}
        \Delta(h, f_0 + P'h)  = \|E'h - f_0\|_{L^2(X)}^2.
    \end{equation}
\end{customprop}
\begin{proof}
    This proof follows the same structure than the proof of Proposition~\ref{proposition:L2-generalisation-benefit}.

    Let $\Pi  = \EE[\cdot|X_2, X_3]$ be the $L^2(\Omega)$ orthogonal projection onto the subspace of $(X_2, X_3)$-measurable random variables with finite variance $L^2(\Omega, \sigma(X_2, X_3), \PP)$. We have that
    \begin{align}
        \Pi[Y - f_0(X)] & = \EE[Y|X_2, X_3] - \EE[f_0(X)|X_2, X_3] \\
        & = \EE[Y|X_2, X_3] - \EE[\EE[Y|X_3]|X_2, X_3] \\
        & = \EE[Y|X_2, X_3] - \EE[\EE[Y|X_2, X_3]|X_2, X_3] & (Y\indep X_2|X_3)\\
        & = 0,
    \end{align}
    therefore $Y - f_0(X) \in \Ker(\Pi)$. On the other hand, we can easily verify that for any $h\in L^2(X)$, we have $E'h(X)\in \Range(\Pi)$ and $P'h(X)\in \Ker(\Pi)$.

    Therefore, it follows by $L^2(\Omega)$ orthogonality that for any $h \in L^2(X)$
    \begin{align*}
        \|Y - h(X)\|_{L^2(\Omega)}^2 & = \|(Y - f_0(X)) - (h(X) - f_0(X))\|^2_{L^2(\Omega)} \\
        & = \|(Y - f_0(X)) - P'(h - f_0)(X)\|^2_{L^2(\Omega)} + \|E'(h - f_0)(X)\|^2_{L^2(\Omega)} \\
        & = \|Y - (f_0(X) + P'h(X))\|^2_{L^2(\Omega)} + \|E'h - f_0\|^2_{L^2(X)} & (f_0\in\Range(E') = \Ker(P')).
    \end{align*}

    Which allows to conclude that
    \begin{align*}
        \Delta(h, f_0 + P'h) & = \EE[(Y - h(X))^2] - \EE[\big(Y - (f_0(X) + P'h(X))\big)^2] \\
        & = \|Y - h(X)\|_{L^2(\Omega)}^2 - \|Y - (f_0(X) + P'h(X))\|^2_{L^2(\Omega)} \\
        & = \|E'h - f_0\|^2_{L^2(X)}.
    \end{align*}
\end{proof}

\newpage
\section{Conditions for $P:\cH\to\cH$ to be well-defined}\label{appendix:rkhs-assumptions}

Let $\cH$ be a RKHS of real-valued functions over $\cX=\cX_1\times \cX_2$ with reproducing kernel $k : \cX\times\cX\to\RR$. In this section, we discuss conditions under which the orthogonal projection $P : L^2(X)\to L^2(X)$ can be seen as a well-defined projection over $\cH\subset L^2(X)$.

Formally, let $\iota : \cH \to L^2(X)$ denote the inclusion operator that maps elements of the RKHS $\cH\ni f\mapsto [f]_{\sim}$ to their equivalence class in $L^2(X)$. Saying that $P$ is well-defined as a projection over $\cH$ means that
\begin{equation}
    P\iota f \in \iota \cH\enspace\forall f\in\cH.
\end{equation}

Such construction however raises two issues
\begin{enumerate}
    \item Since $P = \Id - E$ and $E\iota f : x\mapsto \EE[\iota f(X)|X_2=x_2]$ is a function of $x_2$ only, for $E\iota f$ to lie in RKHS it is necessary for $\cH$ to contain functions that are constant with respect to $x_1$.
    \item If $f\in\cH$, there is no guarantee that $E\iota f = \EE[\iota f(X)|X_2=\pi(\cdot)]$ will also lie in $\cH$. In fact, this will often not be true --- e.g. when $\cX$ is a continuous domain~\cite{song2009hilbert} --- and we only have $P\iota\cH\subset L^2(X)$.
\end{enumerate}

In what follows, we permit ourselves to drop the $\iota$ notation.

\subsection{Issue 1 : $\cH$ must contain functions constant wrt $x_1$}

In general, it is not guaranteed that a RKHS will contain constant functions. In fact, this is not the case for generic RKHSs such as the RKHSs induced by Gaussian or Matérn kernels~\cite{steinwart2008support}. To overcome this issue, we propose a particular form for the reproducing kernel that will ensure the RKHS contains constant functions with respect to $x_1$.

\begin{proposition}
   Let $r :\cX_1\times\cX_1\to\RR$ and $\ell:\cX_2\times\cX_2\to\RR$ be kernel functions. Then the RKHS with reproducing kernel
   \begin{equation}
       k = (r + 1)\otimes\ell
   \end{equation}
   contains functions that are constant with respect to the first variable $x_1$.
\end{proposition}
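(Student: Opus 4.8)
The plan is to exhibit an explicit family of elements of $\cH$ that do not depend on the first variable, namely the functions $(x_1,x_2)\mapsto g(x_2)$ with $g$ ranging over the RKHS $\cH_\ell$ of $\ell$. First I would record the two elementary closure facts that a sum of positive semi-definite kernels is positive semi-definite and that a Hadamard (Schur) product of positive semi-definite kernels is positive semi-definite; in particular $r\otimes\ell$ and $1\otimes\ell$ are kernels, and $k=(r+1)\otimes\ell = r\otimes\ell + 1\otimes\ell$ is a sum of kernels.

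Next, fix any $g\in\cH_\ell$. Applying Theorem~\ref{lemma:constant-function} to the RKHS $\cH_\ell$, there is a constant $c\ge 0$ such that $(x_2,x_2')\mapsto c^2\ell(x_2,x_2') - g(x_2)g(x_2')$ is a kernel. Define $f:\cX\to\RR$ by $f(x_1,x_2)=g(x_2)$, which is constant with respect to $x_1$. I would then compute
\begin{multline*}
c^2 k\big((x_1,x_2),(x_1',x_2')\big) - f(x_1,x_2)f(x_1',x_2') \\
= c^2\, r(x_1,x_1')\,\ell(x_2,x_2') + \big(c^2\ell(x_2,x_2') - g(x_2)g(x_2')\big),
\end{multline*}
and observe that the right-hand side is a sum of two kernels ($c^2\, r\otimes\ell$ and the kernel produced in the previous step), hence itself a kernel. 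By the converse implication of Theorem~\ref{lemma:constant-function}, this gives $f\in\cH$. Since $g\in\cH_\ell$ was arbitrary and $\cH_\ell\neq\{0\}$ (as $\ell$ is a genuine kernel), $\cH$ contains nonzero functions that are constant with respect to $x_1$, as claimed.

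As an alternative I could bypass Theorem~\ref{lemma:constant-function} and argue via Proposition~\ref{lemma:rkhs-feature-map}: writing $\varphi_r,\varphi_\ell$ for the canonical feature maps of $r,\ell$, the map $\Phi(x_1,x_2) = (\varphi_r(x_1)\oplus 1)\otimes\varphi_\ell(x_2)$ valued in $\cV=(\cH_r\oplus\RR)\otimes\cH_\ell$ is a feature map for $k$, and substituting $v=(0\oplus 1)\otimes w$ with $w\in\cH_\ell$ into the description $\cH=\{x\mapsto\langle v,\Phi(x)\rangle_\cV\}$ from Proposition~\ref{lemma:rkhs-feature-map} yields exactly $(x_1,x_2)\mapsto w(x_2)$.

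There is no real obstacle here; the only point to be careful about is bookkeeping with the summand ``$+1$'': it is the constant kernel on $\cX_1$, whose one-dimensional RKHS consists of constants, and it is precisely the tensor/product structure of $k$ that forces the $x_1$-constant copies of $\cH_\ell$ to sit inside $\cH$. One must also remember that $\cH$ is the RKHS of $k$ with its own norm, not merely a subspace of $L^2(X)$, so the argument has to certify membership in $\cH$ itself — which is exactly what the kernel-difference criterion of Theorem~\ref{lemma:constant-function} (or equivalently the explicit feature-map description) provides.
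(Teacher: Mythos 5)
Your proof is correct, and it takes a genuinely different (and arguably cleaner) route than the paper's. The paper first applies the kernel-difference criterion of Theorem~\ref{lemma:constant-function} to the factor kernel $r^+ = r+1$ on $\cX_1$ alone, concluding that $\cH_{r^+}$ contains the constant functions, and then invokes the tensor-product RKHS structure $\cH = \cH_{r^+}\otimes\cH_\ell$ with kernel $r^+\otimes\ell$ to argue that products of a constant with an element of $\cH_\ell$ lie in $\cH$; the final step is left somewhat informal (``functions from $\cH$ \ldots are therefore allowed to be constant with respect to $x_1$''). You instead apply Theorem~\ref{lemma:constant-function} directly on the product space: given $g\in\cH_\ell$ you extract $c\ge 0$ with $c^2\ell - g\otimes g$ a kernel, and then verify that $c^2 k - f\otimes f = c^2\,(r\otimes\ell) + \bigl(c^2\ell - g\otimes g\bigr)$ is a sum of two kernels (using the Schur product theorem for the first summand and pullback along the projection onto $\cX_2$ for the second), so the converse direction of the criterion certifies $f=(x_1,x_2)\mapsto g(x_2)\in\cH$. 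This buys you two things: you avoid relying on the tensor-product RKHS characterisation altogether (only kernel-level closure properties are needed), and you prove the slightly stronger, more explicit statement that an entire copy of $\cH_\ell$, embedded as $x_1$-constant functions, sits inside $\cH$ --- which is in fact what the subsequent appendix material implicitly uses when it writes $\ell_{x_2} = e\otimes\ell_{x_2}\in\cH$. Your alternative feature-map argument via Proposition~\ref{lemma:rkhs-feature-map} is also valid and closer in spirit to how the paper later manipulates $e\otimes\ell_{x_2}$.
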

\begin{proof}
    Let $r :\cX_1\times\cX_1\to\RR$ be a kernel function on $\cX_1$ and consider the kernel defined by $r^+ = r + 1$ with RKHS $\cH_{r^+}$. Let $c\in\RR$ and consider the constant function $g(x_1) = c\enspace\forall x_1\in\cX_1$.

    Then for any $x_1, x_1'\in\cX_1$ we have
    \begin{align}
        c^2 r^+(x_1, x_1') - g(x_1)g(x_1') & = c^2 r(x_1, x_1') + c^2 - c^2 \\
        & = c^2 r(x_1, x_1')
    \end{align}
    which is a kernel function. By Theorem~\ref{lemma:constant-function} we conclude that $\cH_{r^+}$ contains constant functions.

    We now consider a second kernel $\ell:\cX_2\times\cX_2\to\RR$ with RKHS $\cH_\ell$ and we propose to take $\cH$ as the tensor product RKHS
    \begin{equation}
        \cH = \cH_{r^+}\otimes\cH_\ell,
    \end{equation}
    which will have reproducing kernel
    \begin{equation}
        k = r^+\otimes \ell.
    \end{equation}
    Functions from $\cH$ now contain functions which are the product of functions from $\cH_{r^+}$ and $\cH_\ell$ and are therefore allowed to be constant with respect to $x_1$ since $\cH_{r^+}$ contains constant functions. 
\end{proof}

Note that while this structural assumption may appear to limit the generality of the proposed methodology, tensor product RKHSs are a widely used form of RKHS~\cite{szabo2017characteristic, pogodin2022efficient, lun2021rkhs} that preserve universality of kernels from individual dimension and provide a rich function space.

Recall now the expression of the finite sample $P^*$ estimate used in (\ref{eq:RKHS-pf-estimate}),
\begin{equation}
    \hat P^* = \Id - \bk_\bx^\top (\bL + \gamma\bI_n)^{-1}\bell_{\bx_2}.
\end{equation}

This allows to estimate the projected kernel $k_P$ following
\begin{align*}
    \hat k_P(x, x') & = \langle \hat P^*k_x, \hat P^*k_x\rangle_\cH \\
                    & = \left\langle k_x - \bk_\bx^\top (\bL + \gamma\bI_n)^{-1}\bell_{\bx_2}(x_2)  , k_{x'} - \bk_\bx^\top (\bL + \gamma\bI_n)^{-1}\bell_{\bx_2}(x_2') \right\rangle_\cH \\
                    & = k(x, x') \\
                    & - \bell_{\bx_2}(x_2)^\top(\bL + \gamma\bI_n)^{-1}\bk_x(x') \\
                    & - \bell_{\bx_2}(x_2')^\top(\bL + \gamma\bI_n)^{-1}\bk_x(x) \\
                    & - \bell_{\bx_2}(x_2)^\top(\bL + \gamma\bI_n)^{-1}\bK(\bL + \gamma\bI_n)^{-1}\bell_{\bx_2}(x_2').
\end{align*}

However, for the above derivation to be correct, we need that evaluations of the second kernel $\ell$ can be obtained by taking an inner product in $\cH$. Namely, we need that
\begin{equation}
    \ell(x_2, x_2') = \langle \ell_{x_2}, \ell_{x_2'}\rangle_{\cH_\ell} = \langle \ell_{x_2}, \ell_{x_2'}\rangle_{\cH}.
\end{equation}

The following result shows that a sufficient condition for this to hold is that $\cH_r$ itself does not contain constant functions. As mentioned above, this is a condition satisfied by generic RKHSs such as the RKHSs of the Gaussian kernel or the Matérn kernels~\cite{steinwart2008support} --- which is the RKHS we work with in our experiments.

\begin{proposition}
    Let $\cH = \cH_{r^+}\otimes\cH_\ell$ where $r^+ = r + 1$. If $\cH_r$ does not contain constant functions, then we have that $\ell(x_2, x_2') = \langle \ell_{x_2}, \ell_{x_2'}\rangle_{\cH}$.
\end{proposition}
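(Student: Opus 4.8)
The plan is to realise the would-be element ``$\ell_{x_2}$ of $\cH$'' as the elementary tensor $\mathbf 1\otimes \ell_{x_2}$, where $\mathbf 1$ denotes the constant function equal to one on $\cX_1$, and then to check that, under the hypothesis, $\mathbf 1$ lies in $\cH_{r^+}$ with \emph{unit} norm, so that the tensor-product inner product collapses to $\ell(x_2,x_2')$.

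First I would record that ``$\cH_r$ contains no nonzero constant function'' is the same as $\cH_r\cap\cH_{\mathbf 1}=\{0\}$, where $\cH_{\mathbf 1}$ is the RKHS of the constant-one kernel; this $\cH_{\mathbf 1}$ is the one-dimensional space of constants on $\cX_1$ and, by the reproducing property, $\|\mathbf 1\|_{\cH_{\mathbf 1}}^2=\mathbf 1(x_1,x_1)=1$. Then Lemma~\ref{lemma:orthogonal-constant}, applied with the two kernels $r$ and $\mathbf 1$ (whose sum is $r+1=r^+$), gives that $\cH_{r^+}=\cH_r\oplus\cH_{\mathbf 1}$ is an internal orthogonal direct sum of RKHSs with reproducing kernel $r^+$. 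The key consequence is that the inclusion $\cH_{\mathbf 1}\hookrightarrow\cH_{r^+}$ is isometric, so $\mathbf 1\in\cH_{r^+}$ with $\|\mathbf 1\|_{\cH_{r^+}}=\|\mathbf 1\|_{\cH_{\mathbf 1}}=1$. This is also the step where the companion fact from Theorem~\ref{lemma:constant-function} --- that $\cH_{r^+}$ contains constants --- is made quantitative.

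Next I would invoke the standard structure of the tensor-product RKHS $\cH=\cH_{r^+}\otimes\cH_\ell$: its reproducing kernel is $r^+\otimes\ell$, and on elementary tensors $\langle f\otimes g, f'\otimes g'\rangle_\cH=\langle f,f'\rangle_{\cH_{r^+}}\langle g,g'\rangle_{\cH_\ell}$. The function $\mathbf 1\otimes \ell_{x_2}\in\cH$ is precisely $(x_1',x_2')\mapsto \mathbf 1(x_1')\,\ell_{x_2}(x_2')=\ell(x_2,x_2')$, i.e.\ $\ell_{x_2}$ viewed as a function on $\cX$ that does not depend on $x_1$; this is the element of $\cH$ the statement refers to as $\ell_{x_2}$. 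Hence
\[
\langle \ell_{x_2},\ell_{x_2'}\rangle_\cH=\langle \mathbf 1\otimes \ell_{x_2},\,\mathbf 1\otimes \ell_{x_2'}\rangle_\cH=\langle \mathbf 1,\mathbf 1\rangle_{\cH_{r^+}}\,\langle \ell_{x_2},\ell_{x_2'}\rangle_{\cH_\ell}=1\cdot\ell(x_2,x_2'),
\]
which is the claim.

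I expect the only genuinely delicate point to be the norm bookkeeping: the identity requires $\|\mathbf 1\|_{\cH_{r^+}}$ to equal exactly $1$, not merely to be finite. This is exactly where the hypothesis bites --- if $\cH_r$ already contained constants, the sum $\cH_r+\cH_{\mathbf 1}$ would fail to be direct, the representation of $\mathbf 1$ in $\cH_{r^+}$ would no longer be unique, and its norm (an infimum over representations) could drop strictly below $1$, rescaling the right-hand side and breaking the identity. The remaining ingredients --- the equivalence of ``no constants'' with trivial intersection, isometry of an orthogonal summand, and the elementary-tensor inner-product formula --- are routine.
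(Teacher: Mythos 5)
Your proof is correct and follows essentially the same route as the paper's: both identify $\ell_{x_2}$ with the elementary tensor $e\otimes\ell_{x_2}$, reduce the claim to showing $\langle e,e\rangle_{\cH_{r^+}}=1$, and obtain this from Lemma~\ref{lemma:orthogonal-constant} applied to the trivial intersection $\cH_r\cap\operatorname{Span}(\{e\})=\{0\}$. The only (cosmetic) difference is the last step: you invoke the isometry of the orthogonal summand $\cH_{\mathbf 1}\hookrightarrow\cH_{r^+}$ directly, whereas the paper computes $\langle e,e\rangle_{\cH_{r^+}}=\langle e,r^{+}_{x_1}\rangle_{\cH_{r^+}}-\langle e,r_{x_1}\rangle_{\cH_{r^+}}=e(x_1)-0=1$ via the reproducing property and the orthogonality of $e$ to $r_{x_1}$.
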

\begin{proof}
    The kernel $r^{+}(x_{1},x_{1}')=r(x_{1},x_{1}')+1$ here induces a RKHS $\mathcal{H}_{r^{+}}$ (of functions from $\mathcal{X}_{1}$
to $\mathbb{R}$) which does contain constant functions, e.g., $e\in\mathcal{H}_{r^{+}}$,
where $e(x_{1})=1,$ $\forall x_{1}\in\mathcal{X}_{1}.$

This choice of kernel ensures that $\ell_{x_2}\in\cH$ when viewed as a function on $\cX_1\times\cX_2$, i.e.\ we can write it as $e\otimes\ell_{x_2}$, so it is clear that it belongs to $\mathcal{H} = \mathcal{H}_{r^{+}}\otimes\mathcal{H}_{\ell}$, since $e\in\mathcal{H}_{r^{+}}$ and $\ell_{x_2}\in\mathcal{H_{\ell}}$.

Furthermore, we have
\begin{align*}
    \langle \ell_{x_2}, \ell_{x_2'}\rangle_\cH & = \langle e\otimes \ell_{x_2}, e\otimes\ell_{x_2'}\rangle_\cH \\
    & = \langle e, e\rangle_{\cH_{r^+}} \langle \ell_{x_2}, \ell_{x_2'}\rangle_{\cH_\ell} \\
    & = \langle e, e\rangle_{\cH_{r^+}} \ell(x_2, x_2').
\end{align*}

However,
\begin{align*}
    \langle e, e\rangle_{\cH_{r^+}} & = \langle e, e + r_{x_1}\rangle_{\cH_{r^+}} - \langle e, r_{x_1}\rangle_{\cH_{r^+}} \\ 
    & = \langle e, r^+_{x_1}\rangle_{\cH_{r^+}} - \langle e, r_{x_1}\rangle_{\cH_{r^+}} \\
    & = e(x_1) - \langle e, r_{x_1}\rangle_{\cH_{r^+}} \\
    & = 1 - \langle e, r_{x_1}\rangle_{\cH_{r^+}}.
\end{align*}

Now if $\cH_r$ does not contain constant functions, we have $\operatorname{Span}(\{e\})\cap \cH_r = \{0\}$. Hence, by Lemma~\ref{lemma:orthogonal-constant} we obtain that $e$ and $r_{x_1}$ are orthogonal in $\cH_{r^+}$ which in turn gives that
\begin{equation}
    \langle e, r_{x_1}\rangle_{\cH_{r^+}} = 0 \Rightarrow \langle e, e\rangle_{\cH_{r^+}} = 1.
\end{equation}

Therefore, if $\cH_r$ does not contain constant functions we have that
\begin{equation}
     \langle \ell_{x_2}, \ell_{x_2'}\rangle_{\cH} = \langle e, e\rangle_{\cH_{r^+}}\ell(x_2, x_2') = \ell(x_2, x_2').
\end{equation}
\end{proof}

\subsection{Issue 2 : $P$ is not necessarily closed as an operator on $\cH$}

\paragraph{Too Long; Didn't Read} We make the assumption that $\EE[f(X)|X_2=\cdot] \in \cH$ for $f\in\cH$.

\paragraph{Too Short; Want More}

It is possible to choose the reproducing kernel $k$ such that $\cH$ is dense in $L^2(X)$. This property is called $L^2$-universality~\cite{sriperumbudur2011universality}. Whilst this might suggest that the assumption $Ef\in\cH$ for $f\in\cH$ could be reasonable when $L^2$-universality is met, in practice no explicit case is provided in the literature where it is easy to verify that $\EE[f(X)|X_2=\cdot]\in\cH$ for $f\in\cH$.

In fact, a classic counter example given by \citet{fukumizu2013kernel} is the case where $\cH$ is the RKHS of the Gaussian kernel on $\cX$ and $X\indep Z$. Then, $\EE[f(X)|Z=\cdot]$ is constant for any $f\in\cH$ but $\cH$ does not contain constant functions~\cite{steinwart2008support}. In the context of our work, we do not have $(X_1, X_2)\indep X_2$ but it nonetheless remains difficult to verify whether $\EE[f(X)|X_2=\cdot]\in\cH$.

Efforts to study this nontrivial research direction must be highlighted : \citet{mollenhauer2020nonparametric} show that under denseness assumptions, it is possible to approximate the conditional expectation operator $E : L^2(X)\to L^2(X)$ with a Hilbert-Schmidt operator on $\cH$ with arbitrary precision. \citet{klebanov2020rigorous} propose a rigorous RKHS-friendly construction of $E$ that only assumes that $Ef$ lies a constant away from $\cH$. Most recently, \citet{li2022optimal} consider the weaker assumption that for $f\in\cH$, $Ef$ lies in an interpolation space between $\cH$ and $L^2(X)$ and prove optimal learning rates for its estimator.

The theoretical intricacies of such considerations tend however to undermine more \enquote{practical}-driven work. For this reason, it is common to defer such consideration to theoretical research and make the assumption that $\EE[f(X)|X_2=\cdot]\in\cH$~\cite{fukumizu2004dimensionality, song2011kernel, muandet2016kernel, hsu2019bayesian, ton2021noise, chau2021deconditional, fawkes2022doubly}. Since the RKHS theory is not central to our motivations but only a tool we use to demonstrate the benefits of collider regression, we propose to make a similar assumption and delegate this theoretical consideration for future work.

\newpage
\section{Collider Regression on a simple DAG: estimators}\label{appendix:estimators-details}

Let $k : \cX\times\cX\to\RR$ and $\ell:\cX_2\times\cX_2\to\RR$ be positive definite kernel. In what follows, we adopt notations from the Section~\ref{subsection:rkhs-case}. $\hat f = \by^\top (\bK + \lambda \bI_n)^{-1}\bk_\bx$ denotes the solution to the kernel ridge regression problem in $\cH$. We abuse notation and denote the pairwise inner product of feature maps as
\begin{equation}
    \langle \bk_\bx, \bk_\bx\rangle_\cH = \begin{bmatrix} \langle k_{x_i}, k_{x_j}\rangle_\cH\end{bmatrix}_{1\leq i, j\leq n} = \begin{bmatrix} k(x_i, x_j)\end{bmatrix}_{1\leq i, j\leq n} = \bK.
\end{equation}

\subsection{For a general choice of kernel $k$}

\subsubsection{Estimating $\mu_{X|X_2=x_2}$}

We are interested in estimating the CME $\mu_{X|X_2=x_2}$. Using the CME estimate from (\ref{eq:cmo-estimate}), we obtain
\begin{equation}\label{eq:boxed-cme-estimate}
    \boxed{\hat \mu_{X|X_2=x_2} = \bk_\bx^\top (\bL + \gamma \bI_n)^{-1} \bell_{\bx_2}(x_2).}
\end{equation}

\subsubsection{Estimating $P\hat f$}

Writing out
\begin{align}
    P\hat f(x_1, x_2) & = \hat f(x_1, x_2) - \langle \hat f, \mu_{X|X_2=x_2}\rangle_\cH \\
    & = \by^\top (\bK + \lambda \bI_n)^{-1}\bk_\bx(x_1, x_2) - \by^\top (\bK + \lambda\bI_n)^{-1}\langle \bk_\bx,  \mu_{X|X_2=x_2}\rangle_\cH,
\end{align}
it appears we can obtain an estimate of $P\hat f$ by substituting $\mu_{X|X_2=x_2}$ with its estimate in the above. We obtain
\begin{align}
    \hat P\hat f(x_1, x_2) & = \by^\top (\bK + \lambda \bI_n)^{-1}\bk_\bx(x_1, x_2) - \by^\top (\bK + \lambda\bI_n)^{-1}\underbrace{\langle \bk_\bx, \bk_\bx\rangle_\cH}_{\bK} (\bL + \gamma \bI_n)^{-1} \bell_{\bx_2}(x_2) \\
    & = \by^\top (\bK + \lambda \bI_n)^{-1} \left(\bk_\bx(x_1, x_2) - \bK (\bL + \gamma \bI_n)^{-1}\bell_{\bx_2}(x_2)\right),
\end{align}
or in functional form
\begin{equation}
    \boxed{\hat P \hat f \!=\!  \by^\top \!\left(\bK \!+\! \lambda \bI_n\right)^{-1}\!\left(\bk_\bx\! - \!\bK(\bL \!+\! \gamma \bI_n)^{-1}\!\bell_{\bx_2}\right).}
\end{equation}

\subsection{When $k = (r + 1)\otimes \ell$}

In Section~\ref{subsection:new-rkhs-case}, a sufficient assumption for the projection to be well-defined is that the kernel takes the form
\begin{equation}
    k = (r + 1)\otimes \ell,
\end{equation}
where $r : \cX_1\times\cX_1\to\RR$ is a positive definite kernel. When we choose this particular form of kernel, alternative estimators can be devised.

In what follow, we denote $r^+ = r + 1, \brplus_{\!\!\!\!\bx_1} = r^+(\bx_1, \cdot)$ and $\bRplus = r^+(\bx_1, \bx_1)$.

\subsubsection{Estimating $\mu_{X|X_2=x_2}$}

Going back to the definition of CMEs, we can write
\begin{equation}
    \mu_{X|X_2=x_2} = \EE[k_X|X_2=x_2] = \EE[r^+_{X_1}\otimes \ell_{X_2}|X_2=x_2] = \EE[r^+_{X_1}|X_2=x_2]\otimes \ell_{x_2} = \mu_{X_1|X_2=x_2}\otimes \ell_{x_2}.
\end{equation}

Therefore, it is sufficient to obtain an estimate of $\mu_{X_1|X_2=x_2}$, which we can get as
\begin{equation}
    \hat \mu_{X_1|X_2=x_2}  = \brplusx1^\top (\bL + \gamma \bI_n)^{-1} \bell_{\bx_2}(x_2),
\end{equation}
and take as a CME estimator
\begin{equation}
    \boxed{\hat \mu_{X|X_2=x_2} = \left[\brplusx1^\top (\bL + \gamma \bI_n)^{-1} \bell_{\bx_2}(x_2)\right]\ell_{x_2}(\cdot)}
\end{equation}

\subsubsection{Estimating $P\hat f$}

Following the similar derivations than in the general case, we obtain
\begin{align}
    \hat P\hat f(x_1, x_2) & = \by^\top (\bK + \lambda \bI_n)^{-1}\bk_\bx(x_1, x_2) - \by^\top (\bK + \lambda\bI_n)^{-1}\langle \brplusx1, \brplusx1\rangle_{\cH_{r^+}}\langle \ell_{\bx_2}, \ell_{x_2}\rangle_{\cH_\ell} (\bL + \gamma \bI_n)^{-1} \bell_{\bx_2}(x_2) \\
    & = \by^\top (\bK + \lambda \bI_n)^{-1}\left[\bk_\bx(x_1, x_2) - \operatorname{Diag}(\ell_{\bx_2}(x_2)) \bRplus (\bL + \gamma \bI_n)^{-1}\bell_{\bx_2}(x_2)\right],
\end{align}
where $\operatorname{Diag}(\ell_{\bx_2}(x_2))$ is the diagonal matrix that has the vector $\ell_{\bx_2}(x_2) = \ell(\bx_2, x_2)$ as its diagonal. Written in functional form we obtain
\begin{equation}
    \boxed{\hat P\hat f = \by^\top (\bK + \lambda \bI_n)^{-1}\left[\bk_\bx - \operatorname{Diag}(\ell_{\bx_2}(\cdot)) \bRplus (\bL + \gamma \bI_n)^{-1}\bell_{\bx_2}\right]}
\end{equation}

\subsubsection{Estimating $k_P$}

Writing out,
\begin{align}
     k_P(x, x') & = \langle P^*k_x, P^*k_{x'}\rangle_\cH \\
     & = \langle k_x - \mu_{X|X_2=x_2}, k_{x'} - \mu_{X|X_2=x_2'}\rangle_\cH \\
     & = \langle k_x, k_{x'}\rangle_\cH \\
     & - \langle \mu_{X_1|X_2=x_2}\otimes\ell_{x_2}, k_{x'}\rangle_\cH \\
     & - \langle k_x, \mu_{X_1|X_2-x_2'}\otimes\ell_{x_2'}\rangle_\cH \\
     & + \langle \mu_{X_1|X_2=x_2}\otimes\ell_{x_2}, \mu_{X_1|X_2-x_2'}\otimes\ell_{x_2'}\rangle_\cH \\
     & = r^+(x_1, x_1')\ell(x_2, x_2') \\
     & - \langle \mu_{X_1|X_2=x_2}, r^+_{x_1'}\rangle_{\cH_{r^+}}\ell(x_2, x_2') \\
     & -  \langle r^+_{x_1}, \mu_{X_1|X_2=x_2'}\rangle_{\cH_{r^+}}\ell(x_2, x_2') \\
     & + \langle \mu_{X_1|X_2=x_2}, \mu_{X_1|X_2=x_2'}\rangle_{\cH_{r^+}}\ell(x_2, x_2') \\
     & = \ell(x_2, x_2')\left[r^+(x_1, x_1') -  \langle \mu_{X_1|X_2=x_2}, r^+_{x_1'}\rangle_{\cH_{r^+}} - \langle r^+_{x_1}, \mu_{X_1|X_2=x_2'}\rangle_{\cH_{r^+}} + \langle \mu_{X_1|X_2=x_2}, \mu_{X_1|X_2=x_2'}\rangle_{\cH_{r^+}}\right]
\end{align}

Therefore, substituting $\mu_{X_1|X_2=x_2}$ with its estimate, we obtain
\begin{subequations}
\begin{empheq}[box=\widefbox]{align}
    \hat k_P(x, x') & = \ell(x_2, x_2') \\
                    & \times\big[ r^+(x_1, x_1') \\
                    & - \bell_{\bx_2}(x_2)^\top(\bL + \gamma\bI_n)^{-1}\brplusx1(x_1') \\
                    & - \bell_{\bx_2}(x_2')^\top(\bL + \gamma\bI_n)^{-1}\brplusx1(x_1) \\
                    & - \bell_{\bx_2}(x_2)^\top(\bL + \gamma\bI_n)^{-1}\bRplus(\bL + \gamma\bI_n)^{-1}\bell_{\bx_2}(x_2') \big].
\end{empheq}
\end{subequations}

\newpage
\section{Collider Regression on a general DAG: algorithms and estimators}\label{appendix:general-case-details}

Let $k : \cX\times\cX\to\RR$, $r : \cX_1\times\cX_1 \to\RR$ and $\ell : (\cX_2\times\cX_3)\times(\cX_2\times\cX_3)\to\RR$ be psd kernels. We follow the same notation convention that in the case of a simple collider, except that now $\ell$ is a kernel over $\cX_2\times\cX_3$. Define $f_{0}\left(x\right)=\mathbb{E}\left[Y|X_{3}=x_{3}\right]$. Here $g^*=f^*-f_{0}$ must live in the appropriate subspace of functions
which have zero conditional expectation on $\left(X_{2},X_{3}\right)$.

\subsection{Algorithms}

\begin{algorithm}[h]
    \caption{General procedure to estimate $f_0 + P'\hat g$}
    \label{alg:general-L2-collider-regression}
\begin{algorithmic}[1]
    \STATE Regress $X_3 \rightarrow Y$ to get  $x_3 \mapsto \hat f_0(x_3)$
\STATE Take $\tilde Y = Y - \hat f_0(X_3)$
    \STATE Regress $(X_1, X_2, X_3) \rightarrow \tilde Y$ to get $(x_1, x_2, x_3)\mapsto \hat g(x_1, x_2, x_3)$
    \STATE Regress $(X_2, X_3)\to \hat g(X_1, X_2, X_3)$ to get $(x_2, x_3)\mapsto \hat \EE[\hat g(X_1, X_2, X_3)|X_2=x_2, X_3=x_3]$
    \STATE Take $\hat P'\hat g(x_1, x_2, x_3) = \hat g(X_1, X_2, X_3) - \hat \EE[\hat g(x_1, x_2, x_3)|X_2=x_2, X_3=x_3]$
    \STATE \textbf{return} $\hat f_0 + \hat P'\hat g$
\end{algorithmic}
\end{algorithm}

\begin{algorithm}[h]
    \caption{RKHS procedure to estimate $f_0 + P' \hat g$}
    \label{alg:general-rkhs-collider-regression}
\begin{algorithmic}[1]
    \STATE Estimate $\hat \mu_{X|X_2=x_2, X_3=x_3}$
    \STATE Regress $X_3 \rightarrow Y$ to get  $x_3 \mapsto \hat f_0(x_3)$
    \STATE Take $\tilde \by = \by - \hat f_0(\bx_3)$
    \STATE Take $\hat g = \tilde\by^\top (\bK + \lambda\bI_n)^{-1}\bk_\bx$
    \STATE Let $\hat P' \hat g = \hat g  - \langle \hat g, \hat\mu_{X|X_2=\cdot, X_3=\cdot}\rangle_\cH$
    \STATE \textbf{return} $\hat f_0 +\hat P' \hat g$
\end{algorithmic}
\end{algorithm}

\begin{algorithm}[h]
    \caption{RKHS procedure to estimate $f_0 + \hat g_{P'}$}
    \label{alg:general-rkhs-collider-regression}
\begin{algorithmic}[1]
    \STATE Estimate $\hat \mu_{X|X_2=x_2, X_3=x_3}$
    \STATE Regress $X_3 \rightarrow Y$ to get  $x_3 \mapsto \hat f_0(x_3)$
\STATE Take $\tilde \by = \by - \hat f_0(\bx_3)$
    \STATE Let $\hat P'^* k_x = k_x - \hat\mu_{X|X_2=x_2, X_3=x_3}$
    \STATE Let $\hat k_{P'}(x, x') = \langle \hat P'^* k_x, \hat P'^* k_{x'}\rangle_\cH$
    \STATE Evaluate $\hat\bK_{P'} = \hat k_{P'}(\bx, \bx)$ and $\hat \bk_{P', \bx} = \hat k_{P'}(\bx, \cdot)$
    \STATE Take $\hat g_{P'} = \tilde\by^\top (\hat \bK_{P'} + \lambda\bI_n)^{-1}\hat \bk_{P', \bx}$
    \STATE \textbf{return} $\hat f_0 +\hat g_{P'}$
\end{algorithmic}
\end{algorithm}

\subsection{Estimators for a general kernel $k$}

\subsubsection{Estimating $\mu_{X|X_2=x_2, X_3=x_3}$}

\begin{equation}\label{eq:boxed-cme-estimate}
    \boxed{\hat \mu_{X|X_2=x_2, X_3=x_3} = \bk_\bx^\top (\bL + \gamma \bI_n)^{-1} \bell_{\bx_2, \bx_3}(x_2, x_3).}
\end{equation}

\subsubsection{Estimating $P'\hat g$}

\begin{equation}
    \boxed{\hat P' \hat g =  \tilde \by^\top \!\left(\bK \!+\! \lambda \bI_n\right)^{-1}\!\left(\bk_\bx\! - \!\bK(\bL \!+\! \gamma \bI_n)^{-1}\!\bell_{\bx_2, \bx_3}\right).}
\end{equation}

\subsection{Estimators when $k = (r+1)\otimes \ell$}

\subsubsection{Estimating $\mu_{X|X_2=x_2, X_3=x_3}$}
\begin{equation}
    \boxed{\hat \mu_{X|X_2=x_2} = \left[\brplusx1^\top (\bL + \gamma \bI_n)^{-1} \bell_{\bx_2, \bx_3}((x_2, x_3))\right]\ell_{x_2, x_3}(\cdot)}
\end{equation}

\subsubsection{Estimating $P'\hat g$}
\begin{equation}
    \boxed{\hat P'\hat g = \tilde \by^\top (\bK + \lambda \bI_n)^{-1}\left[\bk_\bx - \operatorname{Diag}(\ell_{\bx_2, \bx_3}(\cdot)) \bRplus (\bL + \gamma \bI_n)^{-1}\bell_{\bx_2, \bx_3}\right]}
\end{equation}

\subsubsection{Estimating $k_{P'}$}
\begin{subequations}
\begin{empheq}[box=\widefbox]{align}
    \hat k_{P'}(x, x') & = \ell\left((x_2, x_3), (x_2', x_3')\right) \\
                    & \times\big[ r^+(x_1, x_1') \\
                    & - \bell_{\bx_2, \bx_3}((x_2, x_3))^\top(\bL + \gamma\bI_n)^{-1}\brplusx1(x_1') \\
                    & - \bell_{\bx_2, \bx_3}((x_2', x_3'))^\top(\bL + \gamma\bI_n)^{-1}\brplusx1(x_1) \\
                    & - \bell_{\bx_2, \bx_3}((x_2, x_3))^\top(\bL + \gamma\bI_n)^{-1}\bRplus(\bL + \gamma\bI_n)^{-1}\bell_{\bx_2, \bx_3}((x_2', x_3')) \big].
\end{empheq}
\end{subequations}

\newpage
\section{Details on experiments}\label{appendix:experiments}

\subsection{Models}

\paragraph{\ding{227} RF} We use the scikit-learn~\cite{scikit-learn} \texttt{sklearn.ensemble.RandomForestRegressor} implementation which we tune for
\begin{itemize}
    \item \texttt{n\char`_ estimators}
    \item \texttt{max\char`_ depth}
    \item \texttt{min\char`_ samples\char`_ split}
    \item \texttt{min\char`_ samples\char`_ leaf}
\end{itemize}
using a cross-validated grid search over an independently generated validation set.

\paragraph{\ding{227} $P$-RF} Once RF has been fitted  as $\hat f$, we estimate $\EE[\hat f(X_1, X_2)|X_2]$ by fitting a linear regression model of $X_2$ onto $\hat f(X_1, X_2)$.

\paragraph{\ding{227} KRR} We implement our own kernel ridge regression in PyTorch~\cite{pytorch}. The kernel is taken as
\begin{equation}
    k\big((x_1, x_2), (x_1', x_2')\big) = \big(\kappa_{\theta_1}(x_1, x_1') + 1\big) \kappa_{\theta_2}(x_2, x_2'),
\end{equation}
where $\kappa_\theta$ denotes the Gaussian kernel with lengthscale $\theta > 0$
\begin{equation}
    \kappa_\theta(u, u') = \exp\left(-\frac{\|u - u'\|_2^2}{\theta}\right).
\end{equation}

The kernel lengthscales $\theta_1, \theta_2$ and the regularisation weight $\lambda > 0$ are tuned using a cross-validated grid search on an independently generated validation set.

\paragraph{\ding{227} $P$-KRR} Once KRR has been fitted as $\hat f = \by^\top \left(\bK + \lambda \bI_n\right)^{-1}\bk_\bx$, we estimate the CME and use it to estimate $P\hat f(x_1, x_2) = \hat f(x_1, x_2) - \langle \hat f, \mu_{X|X_2=x_2}\rangle_\cH$ following
\begin{align}
    & \hat\mu_{X|X_2=x_2} = \bk_\bx^\top (\bL + \gamma\bI_n)^{-1}\bell_{\bx_2}(x_2) \\
    \Rightarrow & \hat P = \Id - \hat\mu_{X|X_2=\cdot} \\
     & \quad = \Id - \bk_\bx^\top (\bL + \gamma\bI_n)^{-1}\bell_{\bx_2} \\
     \Rightarrow & \hat P\hat f = \hat f - \hat f \bk_\bx^\top (\bL + \gamma\bI_n)^{-1}\bell_{\bx_2} \\
     & \quad\;\,  = \by^\top \left(\bK + \lambda \bI_n\right)^{-1}\bk_\bx - \by^\top \left(\bK + \lambda \bI_n\right)^{-1} \bK (\bL + \gamma\bI_n)^{-1}\bell_{\bx_2} \\
     & \quad\;\, = \by^\top \!\left(\bK + \lambda \bI_n\right)^{-1}\!\left(\bk_\bx - \bK(\bL + \gamma \bI_n)^{-1}\bell_{\bx_2}\right) 
\end{align}

The kernel on $\cX_2$ is taken as $\ell = \kappa_{\theta_2}$. The CME regularisation weight $\gamma >0$ is tuned using a cross-validated grid search on an independently generated validation set.

\paragraph{\ding{227} $\cH_P$-KRR} We use the same base kernel as for KRR with again $\ell = \kappa_{\theta_2}$. We implement our estimator of the projected kernel $k_P$ is GPyTorch~\cite{gardner2018gpytorch}\footnote{which can be readily incorporated into GP regression pipelines.}. The kernel lengthscales and regularisation weights are tuned using a cross-validated grid search on an independently generated validation set.

\newpage
\subsection{Simulation example}

\paragraph{Data generating process}

Algorithm~\ref{alg:generate-sigma} outlines the procedure we use to generate a positive definite matrix $\Sigma$ that encodes independence between $X_2$ and $Y$.

\begin{algorithm}[h]
    \caption{Procedure to generate $\Sigma$}
    \label{alg:generate-sigma}
\begin{algorithmic}[1]
    \STATE {\bfseries Input:} $d_1 \geq 1$, $d_2 \geq 1$
    \STATE\texttt{\# Generate a $4 \times (d_1 + d_2 + 1)$ random matrix}
    \FOR{$i \in \{1, \ldots, d_1 + d_2 + 1\}$}
        \STATE $M_i \sim \cN(0, \bI_4)$
        \STATE $M_i \leftarrow M_i \,/\, \|M_i\|_2$
    \ENDFOR
    \STATE\texttt{\# Make $Y$ column orthogonal to all $X_2$ columns}
    \STATE $M_Y \leftarrow M_{d_1 + d_2 + 1}$
    \FOR{$i \in \{d_1 + 1, \ldots, d_1 + d_2\}$}
        \STATE $M_i \leftarrow M_i - (M_i^\top M_Y) M_Y$
    \ENDFOR
    \STATE $M \leftarrow \begin{bmatrix} M_1 \mid & \ldots & \mid M_{d_1 + d_2} \mid & \!\! M_Y\end{bmatrix}\in\RR^{4\times (d_1 + d_2 +1)}$
    \STATE $\Sigma \leftarrow M^\top M + 0.01\ast \bI_{d_1 + d_2 + 1}$
    \STATE \texttt{\# Normalise variances to 1}
    \STATE $\Lambda \leftarrow \operatorname{Diag}(\Sigma)$
    \STATE $\Sigma \leftarrow \Lambda^{-1/2} \Sigma \Lambda^{-1/2}$
    \STATE{\textbf{Return} $\Sigma$}
\end{algorithmic}
\end{algorithm}

\paragraph{Non-linear mappings}

The mappings $g_1$ and $g_2$ are applied to each component of the input vectors and are given by
\begin{align}
    g_1(u) & = u + 0.1\,\cos(2\pi u^2) \\
    g_2(u) & = u + 0.1\,\sin(2\pi u^2).
\end{align}

\paragraph{Statistical significance table}
\strut 

\begin{figure}[H]
    \centering
    \includegraphics[width=0.45\textwidth]{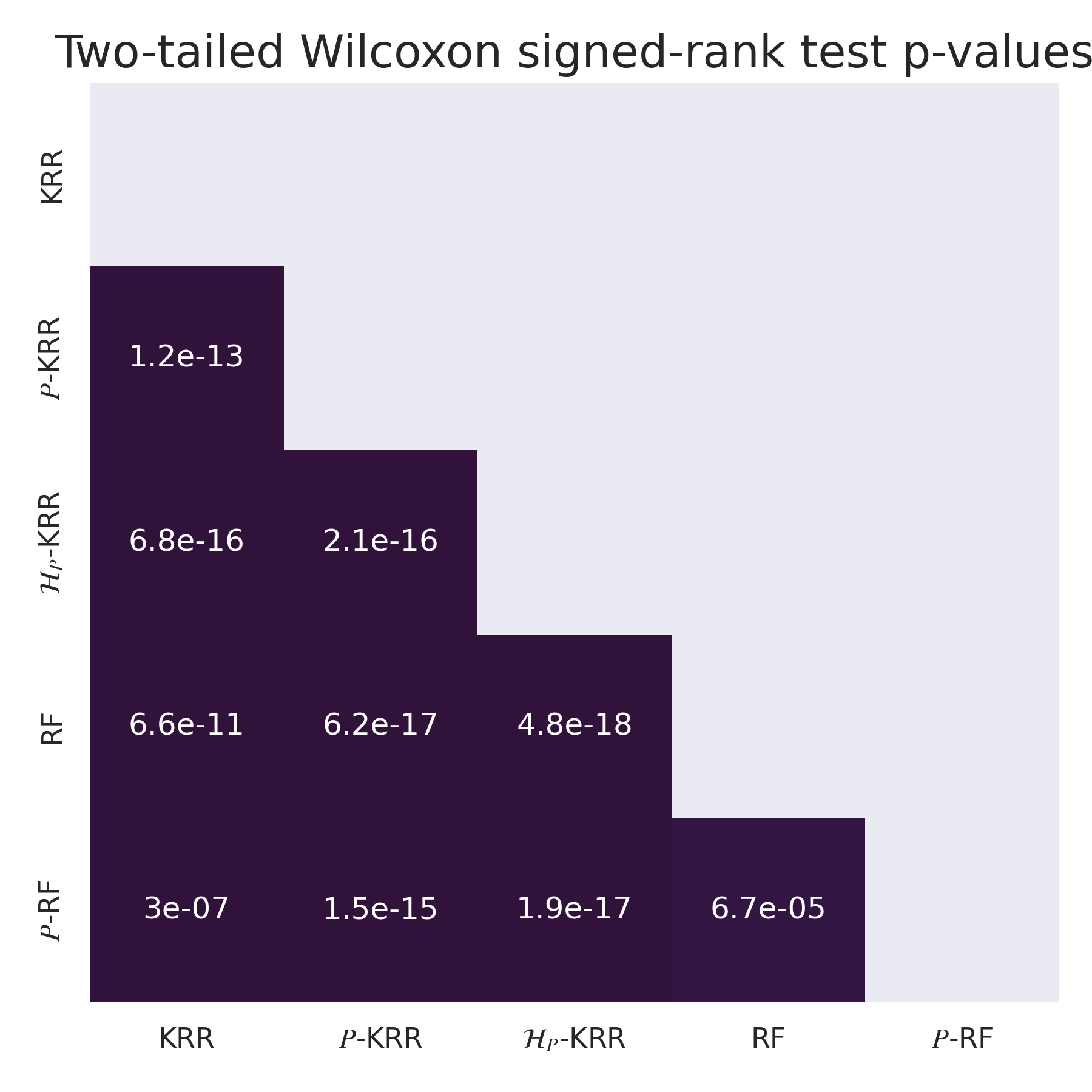}
    \caption{p-values from a two-tailed Wilcoxon signed-rank test between all pairs of methods for the
test MSE of the simulation example. The null hypothesis
is that scores samples come from the same distribution. We only present the lower triangular matrix
of the table for clarity of reading.}
\end{figure}

\newpage
\subsection{Aerosol radiative forcing}

\paragraph{Statistical significance table}
\strut

\begin{figure}[H]
    \centering
    \includegraphics[width=0.45\textwidth]{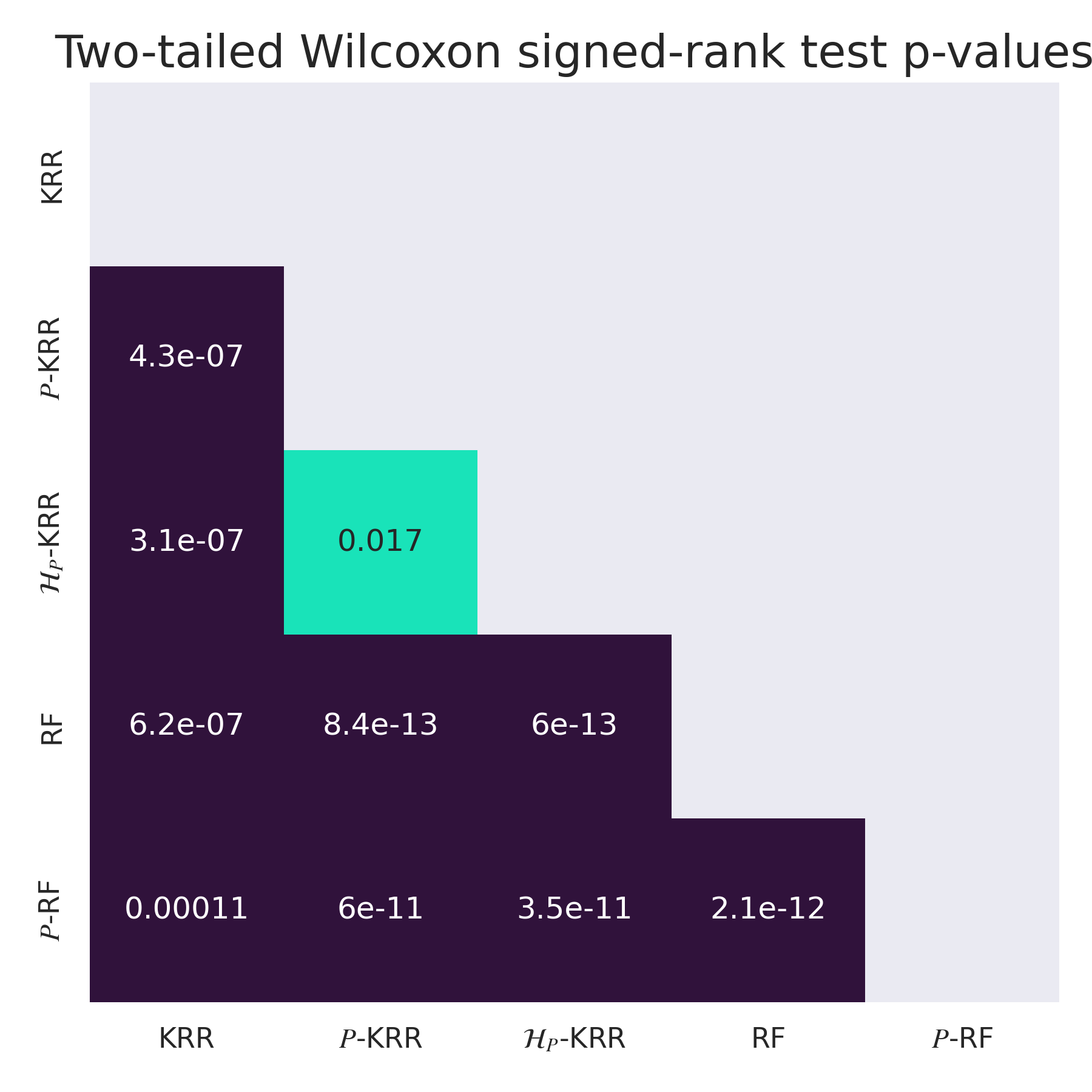}
    \caption{p-values from a two-tailed Wilcoxon signed-rank test between all pairs of methods for the
test \textbf{MSE} of the aerosol radiative forcing experiment. The null hypothesis
is that scores samples come from the same distribution. We only present the lower triangular matrix
of the table for clarity of reading.}
\end{figure}

\begin{figure}[H]
    \centering
    \includegraphics[width=0.45\textwidth]{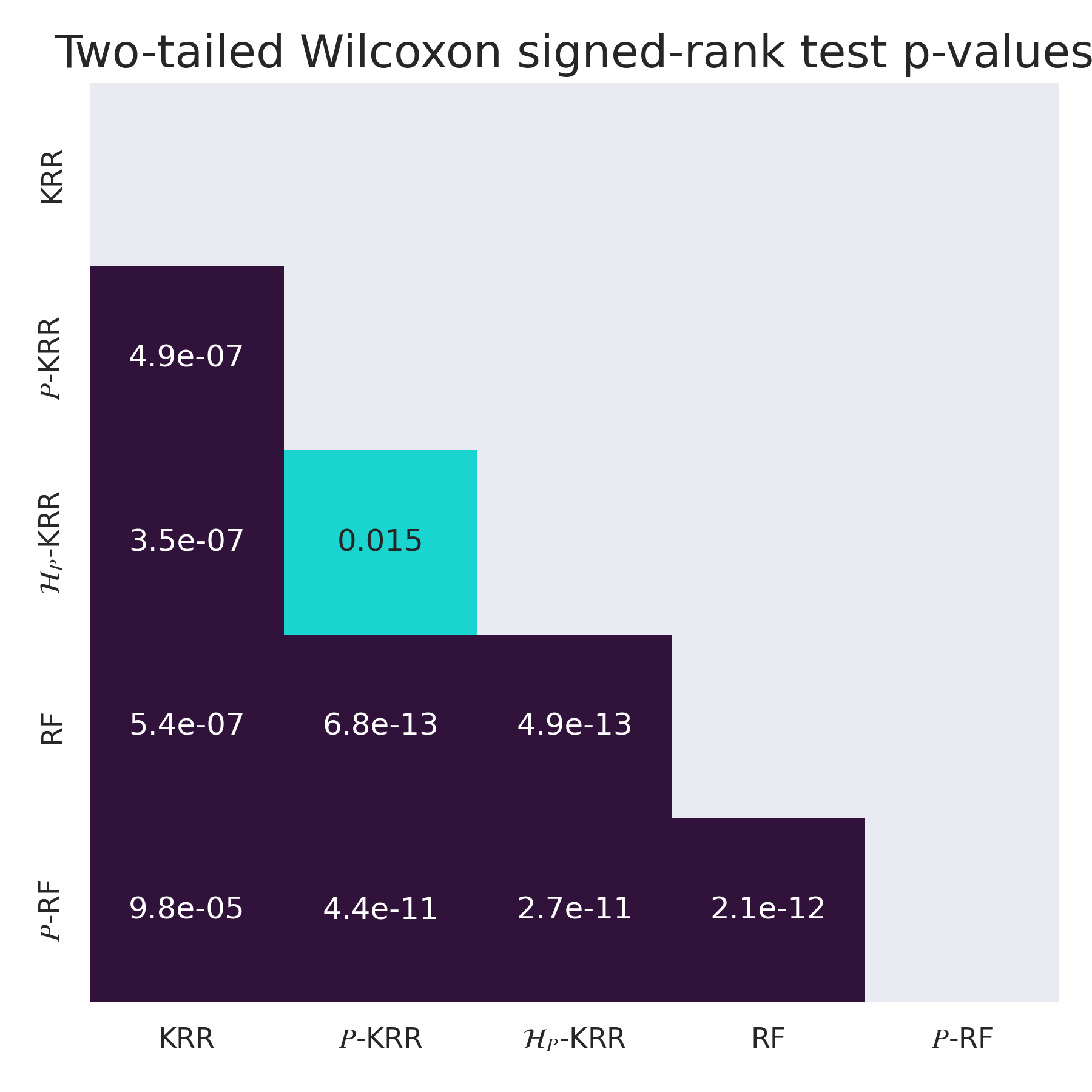}
    \caption{p-values from a two-tailed Wilcoxon signed-rank test between all pairs of methods for the
test \textbf{SNR} of the aerosol radiative forcing experiment. The null hypothesis
is that scores samples come from the same distribution. We only present the lower triangular matrix
of the table for clarity of reading.}
\end{figure}

\begin{figure}[H]
    \centering
    \includegraphics[width=0.45\textwidth]{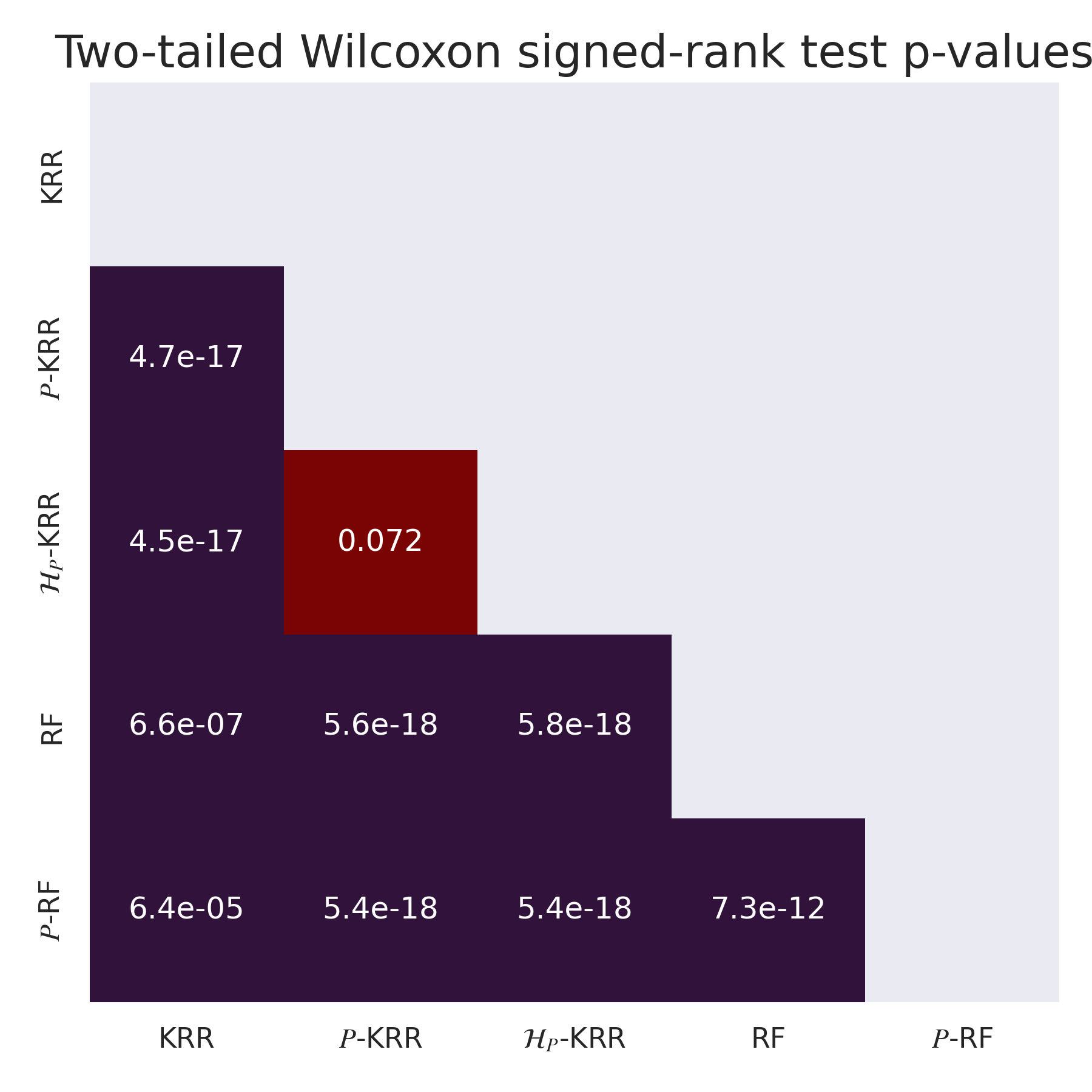}
    \caption{p-values from a two-tailed Wilcoxon signed-rank test between all pairs of methods for the
test \textbf{correlation} of the aerosol radiative forcing experiment. The null hypothesis
is that scores samples come from the same distribution. We only present the lower triangular matrix
of the table for clarity of reading.}
\end{figure}

\newpage
\section{Future direction}

\subsection{Extension to Gaussian processes}

\paragraph{Extension to Gaussian processes} The methodology presented can naturally be extended to the Bayesian counterpart of kernel ridge regression, Gaussian processes (GPs)~\cite{rasmussen2005gaussian}. One can either apply the projection operator $P : L^2(X)\to L^2(X)$ to the GP prior (or posterior), or use the projected kernel $k_P$ to specify the covariance function\footnote{Our implementation of $\hat k_P$ is available in GPyTorch~\cite{gardner2018gpytorch} and can be readily incorporated into GP regression pipelines.}.

However, such approach raises important questions from a theoretical perspective. If $f\sim\GP(0, k)$, the application of the $L^2(X)$ projection to $f$ will result in a linearly transformed GP $Pf\sim\GP(0, PkP^*)$~\cite{sarkka2011linear} and its draws will lie in the range of $P$. In contrast, since draws from a GP almost surely lie outside the RKHS associated with its covariance~\cite{kanagawa2018gaussian}, draws from $f \sim \operatorname{GP}(0, k_P)$ will almost surely lie outside $\cH_P$. It is therefore unclear whether these draws will lie in the range of the projection and satisfy the desired constraint for $f$. On the other hand, the posterior mean of the GP will always lie in $\cH_P$.

Furthermore, the projection is targeted at improving performance in mean square error. Because this metric is not necessarily adequate to evaluate GPs, it is unclear whether applying the projection would result in a performance improvement on more commonly used metrics for GPs such as maximum likelihood.

\end{document}